\setlist{nosep,leftmargin=0.2in}
\newcommand{\Z}{\mathbb{Z}}
\newcommand{\R}{\mathbb{R}} 
\newcommand{\lsym}{\ell_{\rm sym}}
\newcommand{\rlsym}{\ell_{\rm rsym }}
\newcommand{\logisticloss}{\ell_{\rm logi }}
\newcommand{\sqrtloss}{\ell_{\rm sqrt}}
\newcommand{\huberloss}{\ell_{\rm Hub}}
\newcommand{\bx}{\bm{x}}
\newcommand{\be}{\bm{e}}
\newcommand{\eps}{\varepsilon}
\newcommand{\bxi}{\bm{\xi}}
\newcommand{\bi}{\begin{enumerate}}
\newcommand{\ei}{\end{enumerate}}
\newcommand{\bli}{\begin{list}{{\tiny $\blacksquare$}}{\leftmargin=1.5em}
\setlength{\itemsep}{-1pt}
}
\newcommand{\blii}{\begin{list}{{  $\bullet$}}{\leftmargin=0.5em}
\setlength{\itemsep}{-1pt}
}
\newcommand{\eli}{\end{list}}
\newcommand{\pdf}{ \varphi}
\newcommand{\cdf}{ \varPhi}
\newcommand{\tz}{t_0}  
\newcommand{\tcr}{\textswab{t}}
\DeclareMathOperator\one{\mathbbm{1}}
\DeclareMathOperator\sign{sgn}
\definecolor{eblue}{RGB}{100, 174, 118}
\definecolor{egreen}{RGB}{100, 118, 174}
\definecolor{persimmon}{rgb}{0.93, 0.35, 0.0}
\definecolor{darkpastelpurple}{rgb}{0.59, 0.44, 0.84}
\DeclarePairedDelimiter\abs{\lvert}{\rvert}
\DeclareMathOperator\E{\mathbb{E}}
\DeclareMathOperator\NN{\mathcal N}
\DeclareMathOperator\ReLU{\mathrm{ReLU}}
\DeclareMathOperator\unif{\mathsf{unif}}
\newcommand\deq{\coloneqq}
\newcommand{\D}{\mathrm{d}}
\newcommand{\smalllr}{2.5\cdot 10^{-5}}
\newcommand{\medlr}{2.5\cdot 10^{-4}}
\newcommand{\biglr}{2.5\cdot 10^{-3}}
\newcommand{\ind}[1]{{\mathbbm{1}}{\left\{ #1 \right\}} }
\declaretheorem[name=Example, style=definition]{example}
\declaretheorem[name=Lemma]{lemma}
\declaretheorem[name=Proposition]{proposition}
\declaretheorem[name=Remark, style=remark]{remark}
\declaretheorem[name=Theorem]{theorem}
\newcommand{\emphh}[1]{\textbf{\emph{#1}}}
\title{Learning threshold neurons via the ``edge of stability''}
\author{Kwangjun Ahn\\
MIT EECS\\ Cambridge, MA\\\texttt{kjahn@mit.edu} \And
S\'ebastien Bubeck\\
Microsoft Research\\  Redmond, WA\\ \texttt{sebubeck@microsoft.com} \And 
Sinho Chewi\\
Institute for Advanced Study\\ Princeton, NJ\\ \texttt{schewi@ias.edu} \And 
Yin Tat Lee\\
Microsoft Research\\  Redmond, WA\\ \texttt{yintat@uw.edu} \And 
Felipe Suarez\\
Carnegie Mellon University\\ Pittsburgh, PA\\ \texttt{felipesc@mit.edu} \And
Yi Zhang\\
Microsoft Research\\ Redmond, WA\\ \texttt{zhayi@microsoft.com} 
}
\begin{document}

\maketitle 
\begin{abstract}%
Existing analyses of neural network training often operate under the unrealistic assumption of an extremely small learning rate. This lies in stark contrast to practical wisdom and empirical studies, such as the work of J.\ Cohen et al.\ (ICLR 2021), which exhibit startling new phenomena (the ``edge of stability'' or ``unstable convergence'') and potential benefits for generalization in the large learning rate regime. Despite a flurry of recent works on this topic, however, the latter effect is still poorly understood. In this paper, we take a step towards understanding genuinely non-convex training dynamics with large learning rates by performing a detailed analysis of gradient descent for simplified models of two-layer neural networks. For these models, we provably establish the edge of stability phenomenon and discover a sharp phase transition for the step size 
below which the neural network fails to learn ``threshold-like'' neurons (i.e., neurons with a non-zero first-layer bias).
This elucidates one possible mechanism by which the edge of stability can in fact lead to better generalization, as threshold neurons are basic building blocks with useful inductive bias for many tasks.
\end{abstract}

\begin{figure}[H]
\centering
\includegraphics[width=0.85
\textwidth]{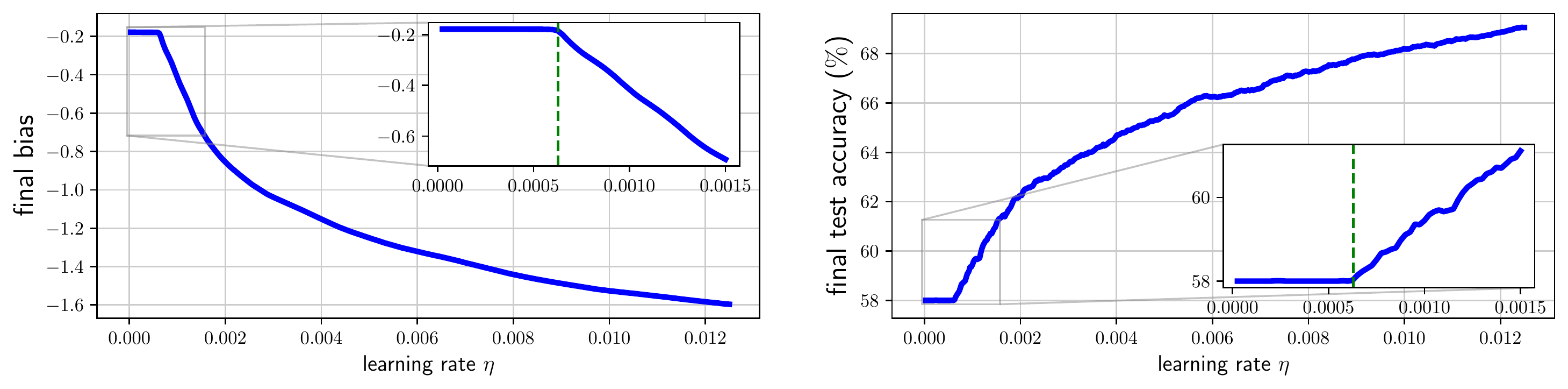}
\caption{\footnotesize{\emphh{Large step sizes are necessary to learn the ``threshold neuron'' of a ReLU network  \eqref{exp:relu_network} for a simple binary classification task \eqref{eq:sparse_coding}.} 
We choose $d=200$, $n=300$, $\lambda=3$, and run gradient descent with the logistic loss. The weights are initialized as $a^-,a^+ \sim \NN(0,1/(2d))$ and $b=0$.
For each learning rate $\eta$, we set the iteration number such that the total time elapsed ($\text{iteration}\times\eta$) is $10$. The  vertical dashed lines indicate our  theoretical prediction of the phase transition phenomenon (precise threshold at $\eta =8\pi/d^2$).} }
\label{fig:motivation_phase}
\end{figure}

\section{Introduction}
\label{sec:intro}
How much do we understand about the training dynamics of neural networks?
We begin with a simple and canonical learning task which indicates that the answer is still ``far too little''. 

\emphh{Motivating example:} Consider a binary classification task of labeled pairs $(\bx^{(i)},y^{(i)})\in \R^d \times \{\pm 1\}$ where each covariate $\bx^{(i)}$  consists of a $1$-sparse vector (in an unknown basis) corrupted by additive Gaussian noise, and the label $y^{(i)}$ is the sign of the non-zero coordinate of the $1$-sparse vector.
Due to rotational symmetry, we can take the unknown basis to be the standard one and write
\begin{align}\label{eq:sparse_coding}
\bx^{(i)} = \lambda y^{(i)}\be_{j(i)} +   \bm{\xi}^{(i)} \in \R^d\,,
\end{align}
where $y^{(i)} \in \{\pm 1\}$ is a random label, $j(i) \in [d]$ is a random index, $\bm\xi^{(i)}$ is Gaussian noise, and $\lambda > 1$ is the unknown signal strength.
In fact,~\eqref{eq:sparse_coding} is a special case of the well-studied sparse coding model~\citep{olshausen1997sparse,vinje2000sparse,olshausen2004sparse,yang2009linear,  koehler2018comparative, allen2022feature}.
We ask the following fundamental question:
\begin{center}
\emph{How do neural networks learn to solve the sparse coding problem~\eqref{eq:sparse_coding}?}
\end{center}

In spite of the simplicity of the setting, a full resolution to this question requires a thorough understanding of surprisingly rich dynamics which \emphh{lies out of reach of existing theory}. To illustrate this point, consider an extreme simplification in which the basis $\bm e_1,\dotsc,\bm e_d$ is known in advance, for which it is natural to parametrize a two-layer ReLU network as
\begin{align} \label{exp:relu_network}
f(\bx; a^-,a^+,b) = a^-\sum_{i=1}^d \ReLU\bigl( -\bx[i]  +  b\bigr) + a^+\sum_{i=1}^d \ReLU\bigl( +\bx[i]  +  b\bigr)\,.
\end{align} 
The parametrization~\eqref{exp:relu_network} respects the latent data structure~\eqref{eq:sparse_coding} well: a good network has a negative bias $b$ to threshold out the noise, and has  $a^- < 0$ and $a^+ > 0$ to output correct labels. 
We are particularly interested in understanding the mechanism by which the bias $b$ becomes negative, thereby allowing the non-linear ReLU activation to act as a threshold function; we refer to this as the problem of learning ``threshold neurons''.
More broadly, such threshold neurons are of interest as they constitute basic building blocks for producing neural networks with useful inductive bias.

\begin{figure}[h]
\centering
\includegraphics[width=0.8
\textwidth]{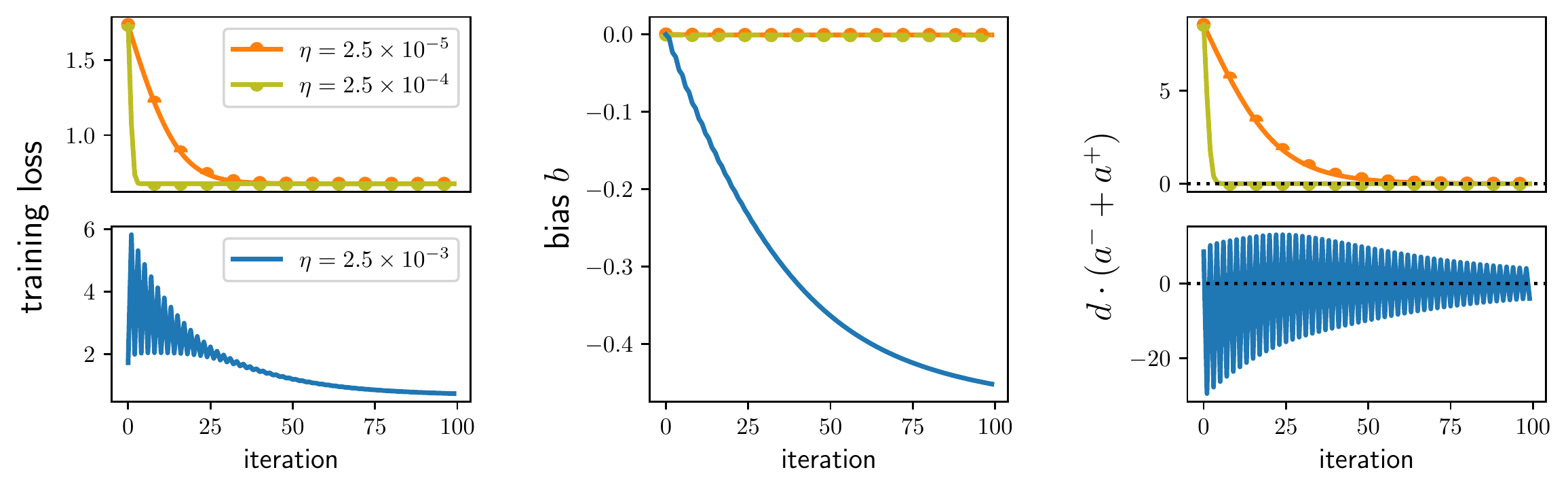} 
\caption{\footnotesize{\emphh{Large learning rates lead to unexpected phenomena: non-monotonic loss and wild oscillations of weights.}    We choose the same setting as \autoref{fig:motivation_phase}. With a small learning rate ($\eta = \smalllr$), the bias does not decrease noticeably, and the same is true even when we increase the learning rate by ten times ($\eta= \medlr$). When we increase the learning rate by another ten times ($\eta= \biglr$), we finally see a noticeable decrease in the bias, but with this we observe unexpected behavior: \emph{the loss decreases non-monotonically and the sum of second-layer weights $d\cdot(a^- + a^+)$  oscillates wildly. }}}
\label{fig:motivation_behave}
\end{figure}

We train the parameters $a^-$, $a^+$, $b$ using gradient descent with step size $\eta > 0$ on the logistic loss $\sum_{i=1}^n \ell_{\rm logi}(y^{(i)}\, f(\bx^{(i)}; a^-,a^+,b))$, where $\ell_{\rm logi}(z) \deq \log(1+\exp(-z))$, and we report the results in Figures~\ref{fig:motivation_phase} and~\ref{fig:motivation_behave}.
The experiments reveal a compelling picture of the optimization dynamics. 
\begin{list}{{\tiny $\blacksquare$}}{\leftmargin=1.5em}
\setlength{\itemsep}{-1pt}
\item \emphh{Large learning rates are necessary, both for generalization and for learning threshold neurons.}  \autoref{fig:motivation_phase} shows that the bias decreases and the test accuracy increases as we increase $\eta$; note that we plot the results after a fixed \emph{time} ($\text{iteration} \times \eta$), so the observed results are not simply because larger learning rates track the continuous-time gradient flow for a longer time.
\item \emphh{Large learning rates lead to unexpected phenomena: non-monotonic loss and wild oscillations of $a^- + a^+$.}  \autoref{fig:motivation_behave} shows that large learning rates  also induce stark phenomena, such as non-monotonic loss and large weight fluctuations, which lie firmly outside the explanatory power of existing analytic techniques based on principles from convex optimization.
\item \emphh{There is a phase transition between small and large learning rates.}
In \autoref{fig:motivation_phase}, we zoom in on learning rates around $\eta \approx 0.0006$ and observe \emph{sharp} phase transition phenomena.  
\end{list}

We have presented these observations in the context of the simple ReLU network~\eqref{exp:relu_network}, but we emphasize that \emphh{these findings are indicative of behaviors observed in practical neural network training settings.}
In~\autoref{fig:real_exp}, we display results for a two-layer ReLU network trained on the full sparse coding model~\eqref{eq:sparse_coding} with unknown basis, as well as a deep neural network trained on CIFAR-10.
In each case, we again observe non-monotonic loss coupled with steadily decreasing bias parameters.
For these richer models, the transition from small to large learning rates is  oddly reminiscent of well-known separations between the ``lazy training'' or ``NTK'' regime~\cite{jacot2018neural} and the more expressive ``feature learning'' regime.
For further experimental results, see \autoref{scn:more_exp}.

\begin{figure}[h]
\centering
\includegraphics[width=0.8\textwidth]{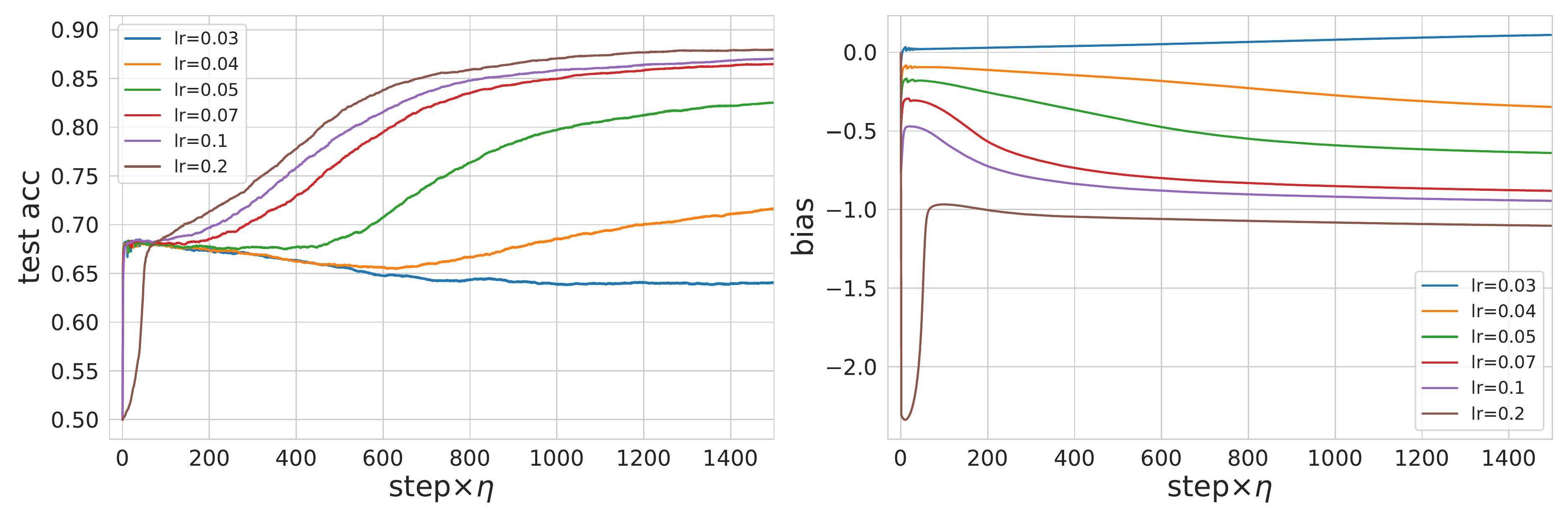} 
\includegraphics[width=0.8\textwidth]{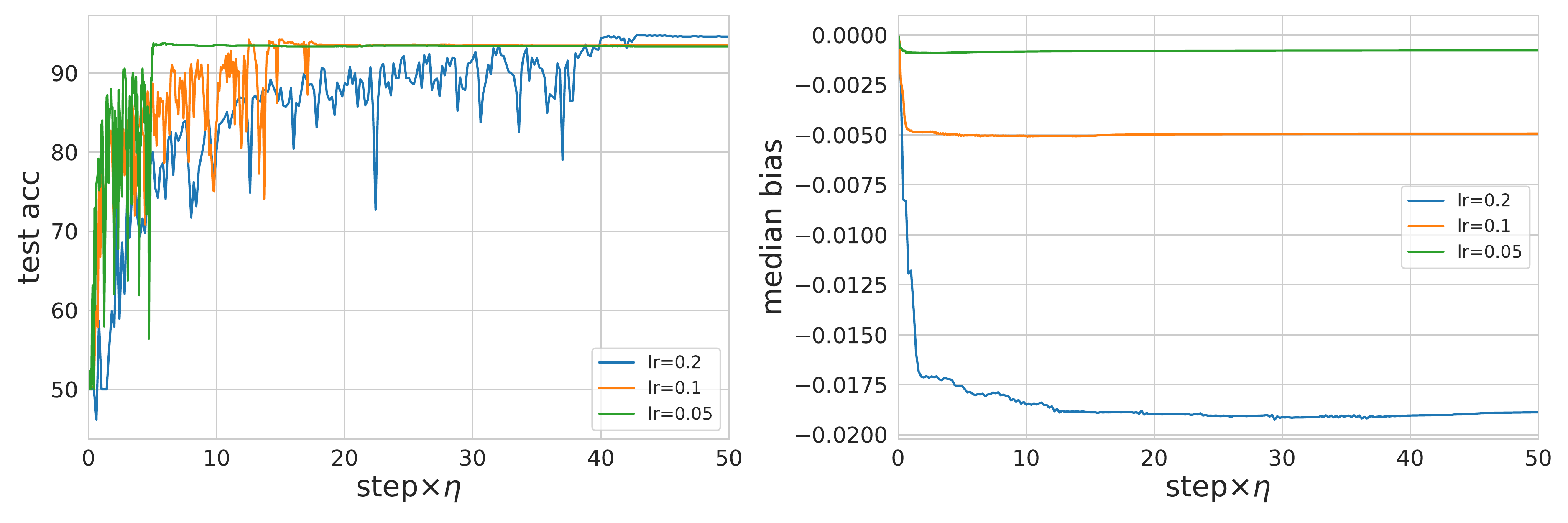}
\caption{\footnotesize{(Top) Results for training an over-parametrized two-layer neural network $f(\bx; \bm{a},\bm{W}, b) = \sum_{i=1}^m a_i \ReLU\bigl(\bm{w}_i^\top\bx  +  b\bigr)$ with $m\gg d$ for the \emphh{full sparse coding model~\eqref{eq:sparse_coding};} in this setting, the basis vectors are unknown, and the neural network learn them through additional parameters  $\bm{W} =(\bm{w}_i)_{i=1}^m$. Also, we use $m$ different  weights $\bm{a} =(a_i)_{i=1}^m$ for the second layer.
(Bottom) Full-batch gradient descent dynamics of \emphh{ResNet-18 on (binary) CIFAR-10 with various learning rates.} Details are deferred to \autoref{scn:more_exp}. 
}}
\label{fig:real_exp}
\end{figure}

We currently do not have right tools to understand these phenomena.
First of all, a drastic change in behavior between the small and the large learning rates cannot be captured through well-studied regimes, such as the ``neural tangent kernel'' (NTK) regime~\citep{jacot2018neural, alllison2019overparam, aroraetal2019finegrained, chioyabach2019lazy, duxiyuaar2019gdoverparam, oymsol2020moderate} or the mean-field regime~\cite{chibac2018globalconv, meimismon2019meanfield, chizat2022meanfield, nitwusuz2022meanfield, rotvan2022interactingparticle}. 
In addition, understanding why a large learning rate is required to learn the bias is beyond the scope of prior theoretical works on the sparse coding model~\citep{aroraetal2015sparsecoding, karpetal2021sparsecoding}. 
Our inability to explain these findings points to a serious gap in our grasp of neural network training dynamics and calls for a detailed theoretical study.

\subsection{Main scope of this work}

In this work, we do not aim to understand the sparse coding problem~\eqref{eq:sparse_coding} in its full generality. 
Instead, we pursue the more modest goal of shedding light on the following question.
\begin{center}
{\bf Q.} What is the role of a large step size in learning the bias for the ReLU network~\eqref{exp:relu_network}?
\end{center}
\noindent As discussed above, the dynamics of the simple ReLU network~\eqref{exp:relu_network} is a microcosm of emergent phenomena beyond the convex optimization regime.
In fact, there is a recent growing body of work~\citep{Cohen2021, arora2022understanding, ahn2022understanding, lyu2022understanding, ma2022multiscale, WanLiLi22sharpness, CheBru23EoS, DamNicLee23selfstab, Zhuetal23eosminimalist} on training with large learning rates, 
which largely aims at explaining a striking empirical observation
called the ``\emphh{edge of stability (EoS)}'' phenomenon.

The edge of stability (EoS) phenomenon is a set of distinctive behaviors observed recently by  \cite{Cohen2021} when training neural networks with gradient descent (GD). 
Here we briefly summarize the salient features of the EoS and
defer a discussion of prior work to \autoref{scn:related_work}.
Recall that if we use GD to optimize an $L$-smooth loss function with step size $\eta$, then the well-known descent lemma from convex optimization ensures monotonic decrease in the loss so long as  $L < 2/\eta$.
In contrast, when $L > 2/\eta$, it is easy to see on simple convex quadratic examples that GD can be unstable (or divergent).
The main observation of~\cite{Cohen2021} is that when training neural networks\footnote{The phenomenon in~\cite{Cohen2021} is most clearly observed for $\tanh$ activations, although the appendix of~\cite{Cohen2021} contains thorough experimental results for various neural network architectures.} with constant step size $\eta > 0$, the largest eigenvalue of the Hessian at the current iterate (dubbed the ``sharpness'') initially increases during training (``progressive sharpening'') and saturates near or above $2/\eta$ (``EoS'').

A surprising message of the present work is that \emphh{the answer to our main question is intimately related to the EoS}.
Indeed, \autoref{fig:motivation_hess} shows that the GD iterates of our motivating example exhibit the EoS during the initial phase of training when the bias decreases rapidly.

\begin{figure}[H] 
\begin{center}
\includegraphics[width=0.9
\textwidth]{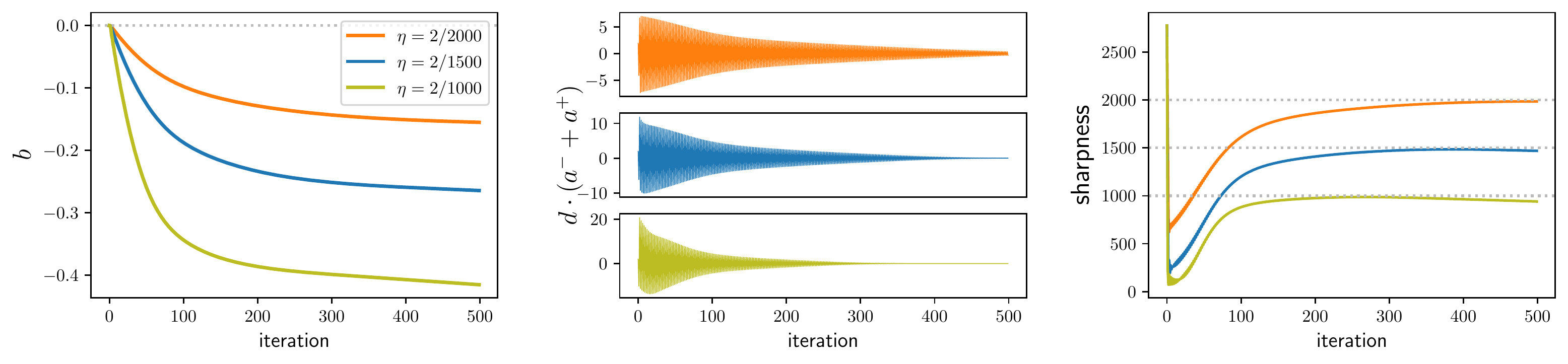}
\end{center} 
\caption{\footnotesize{{\bf Understanding our main question is surprisingly related to the EoS.} Under the same setting as \autoref{fig:motivation_phase}, we report the largest eigenvalue of the Hessian (``sharpness''), and observe that GD iterates lie in the EoS during the initial phase of training when there is a fast drop in the bias.}}
\label{fig:motivation_hess}
\end{figure}

Consequently, we first set out to thoroughly understand the workings of the EoS phenomena through a simple example.
Specifically, we consider a single-neuron linear neural network in dimension $1$, corresponding to the loss
\begin{align}\label{def:2d_loss}
\R^2 \ni (x,y) \mapsto \ell(xy)\,,\qquad\text{where $\ell$ is convex, even, and Lipschitz}\,.
\end{align} 
Although toy models have appeared in works on the EoS (see \autoref{scn:related_work}), our example is simpler than all prior models, and we provably establish the EoS for~\eqref{def:2d_loss} with transparent proofs.

We then use the newfound insights gleaned from the analysis of~\eqref{def:2d_loss} to answer our main question. 
To the best of our knowledge, we provide the first explanation of the mechanism by which a large learning rate can be \emph{necessary} for learning threshold neurons. 

\subsection{Our contributions}\label{sec:contribution}

\begin{wrapfigure}[12]{r}{0.4\linewidth} \vspace{-1.8cm}
\centering
\includegraphics[height=0.2\textheight]{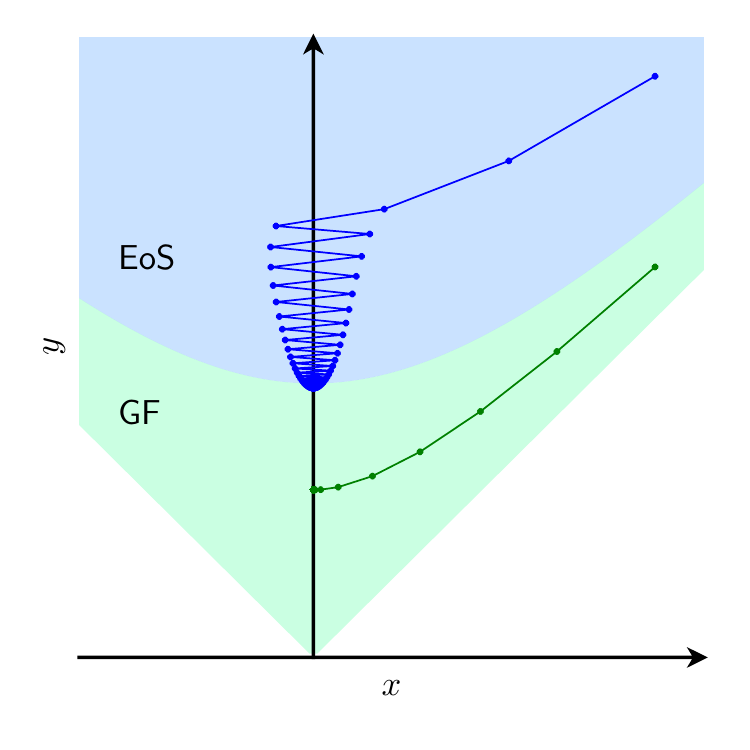}  
\caption{\footnotesize{Illustration of  two different regimes (the ``gradient flow'' regime and the ``EoS'' regime) of the GD dynamics. }}
\label{fig:simple}
\end{wrapfigure}

 \emphh{Explaining the EoS with a single-neuron example.} Although the EoS has been studied in various settings (see \autoref{scn:related_work} for a discussion), these works either do not rigorously establish the EoS phenomenon, or they operate under complex settings with opaque assumptions. 
Here, we study a simple two-dimensional loss function, $(x,y)\mapsto \ell(xy)$, where $\ell$ is convex, even, and Lipschitz. Some examples include\footnote{Suppose that we have a single-layer linear neural network $f(x; a, b) = abx$, and that the data is drawn according to $x = 1$, $y \sim \unif(\{\pm 1\})$. Then, the population loss under the logistic loss is $(a,b) \mapsto \lsym(ab)$ with $\lsym(s) = \frac{1}{2} \log(1+\exp(-s)) + \frac{1}{2} \log(1+\exp(+s))$.} $\ell(s) = \frac{1}{2} \log(1+\exp(-s)) + \frac{1}{2} \log(1+\exp(+s))$ and $\ell(s) = \sqrt{1+s^2}$. Surprisingly, GD on this loss already exhibits rich behavior (\autoref{fig:simple}).

En route to this result, we rigorously establish the quasi-static dynamics formulated in~\cite{ma2022multiscale}.

The elementary nature of our example leads to transparent arguments, and consequently our analysis isolates generalizable principles for ``bouncing'' dynamics. To demonstrate this, we use our insights to study our main question of learning threshold neurons.

\emphh{Learning threshold neurons with the mean model.}
The connection between the single-neuron example and the ReLU network \eqref{exp:relu_network} can already be anticipated via a comparison of the dynamics:  \emph{(i)} for the single neuron example, $x$ oscillates wildly while $y$ decreases (\autoref{fig:simple}); \emph{(ii)} for the ReLU network~\eqref{exp:relu_network}, the sum of weights $ (a^- + a^+)$ oscillates while $b$ decreases (\autoref{fig:motivation_behave}).

We study this example in \autoref{sec:warm-up} and delineate a transition from the ``gradient flow'' regime to the ``EoS regime'', depending on the step size $\eta$ and the initialization. Moreover, in the EoS regime, we rigorously establish asymptotics for the limiting sharpness which depend on the higher-order behavior of $\ell$. In particular, for the two losses mentioned above, the limiting sharpness is $2/\eta + O(\eta)$, whereas for losses $\ell$ which are exactly quadratic near the origin the limiting sharpness is $2/\eta + O(1)$.

\begin{wrapfigure}[13]{r}{0.4\linewidth}  
\vspace{-20pt}
\centering
\includegraphics[width=0.35\textwidth]{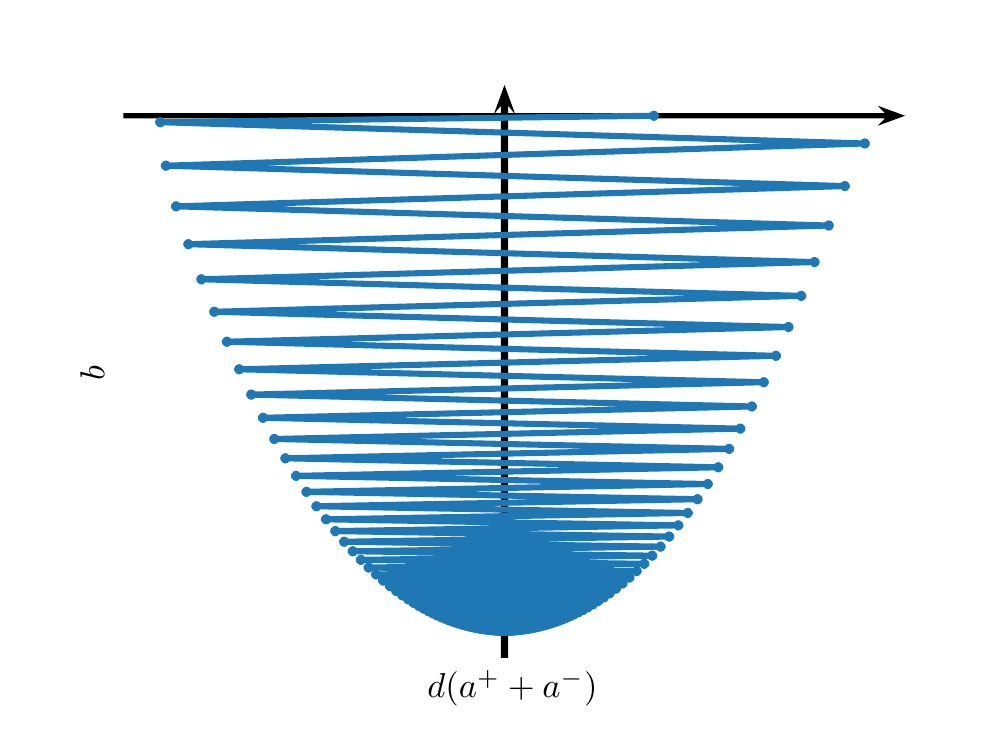}  
\caption{\footnotesize{Illustration of  GD dynamics on the ReLU network~\eqref{exp:relu_network}. The sum of weights $(a^- + a^+)$ oscillates while $b$ decreases.}}
\label{fig:mean}
\end{wrapfigure}
In fact, this connection can be made formal by considering an approximation for the GD dynamics for the ReLU network~\eqref{exp:relu_network}.
It turns out  (see \autoref{sec:justification} for details) that during the initial phase of training,  the dynamics of $A_t \deq  d\,(a^-_t + a^+_t)$ and $b_t$ due to the ReLU network are well-approximated by the ``rescaled'' GD dynamics
on the loss $(A,b)\mapsto \lsym(A \times g(b))$,
where the step size for the $A$-dynamics is multiplied by $2d^2$, $g(b) \deq \E_{z\sim \NN(0,1)}\ReLU(z+b)$ is the ``smoothed'' ReLU, and $\lsym$ is the symmetrized logistic loss; see \autoref{sec:justification} and~\autoref{fig:compare_mean_model}. 
We refer to these dynamics as the \emphh{mean model}. 
The mean model bears a great resemblance to the single-neuron example $(x,y)\mapsto \ell(xy)$, and hence we can leverage the techniques developed for the latter in order to study the former.

Our main result for the mean model precisely explains the phase transition in \autoref{fig:motivation_phase}.
For any $\delta > 0$,
\begin{itemize}
\item if $\eta \le (8-\delta){\pi}/{d^2}$, then \emphh{the mean model fails to learn threshold neurons}: the limiting bias satisfies $\abs{b_\infty} = O_\delta(1/d^2)$.
\item if $\eta \ge (8+\delta) {\pi}/{d^2}$, then \emphh{the mean model enters the EoS and learns threshold neurons}: the limiting bias satisfies $b_\infty \le -\Omega_\delta(1)$.
\end{itemize} 

\vspace{-10pt}

\subsection{Related work}\label{scn:related_work}

\emphh{Edge of stability.} Our work is motivated by the extensive empirical study of~\cite{Cohen2021}, which identified the EoS phenomenon. Subsequently, there has been a flurry of works aiming at developing a theoretical understanding of the EoS, which we briefly summarize here.

\emph{Properties of the loss landscape.} The works~\citep{ahn2022understanding, ma2022multiscale} study the properties of the loss landscape that lead to the EoS\@. Namely, \cite{ahn2022understanding} argue that the existence of forward-invariant subsets near the minimizers allows GD to convergence even in the unstable regime.
They also explore various characteristics of EoS in terms of loss and iterates. Also, \cite{ma2022multiscale} empirically show that the loss landscape of neural networks exhibits subquadratic growth locally around the minimizers. They prove that for a one-dimensional loss, subquadratic growth implies that GD finds a $2$-periodic trajectory.

\emph{Limiting dynamics.} Other works characterize the limiting dynamics of the EoS in various regimes.  \citep{arora2022understanding, lyu2022understanding} show that (normalized) GD tracks a ``sharpness reduction flow'' near the manifold of minimizers. The recent work of \cite{DamNicLee23selfstab} obtains a different predicted dynamics based on self-stabilization of the GD trajectory. Also,  \cite{ma2022multiscale} describes a quasi-static heuristic for the overall trajectory of GD when one component of the iterate is oscillating.

\emph{Simple models and beyond.} Closely related to our own approach, there are prior works which carefully study simple models.  \cite{CheBru23EoS} prove global convergence of GD for the two-dimensional function $(x,y)\mapsto {(xy-1)}^2$ and a single-neuron student-teacher setting; note that unlike our results, they do not study the limiting sharpness.  \cite{WanLiLi22sharpness} study progressive sharpening for a neural network model. Also, the recent and concurrent work of \cite{Zhuetal23eosminimalist} studies the two-dimensional loss $(x,y)\mapsto (x^2 y^2-1)^2$; to our knowledge, their work is the first to asymptotically and rigorously show that the limiting sharpness of GD is $2/\eta$ in a simple setting, at least when initialized locally. In comparison, in \autoref{sec:warm-up}, we perform a global analysis of the limiting sharpness of GD for $(x,y)\mapsto \ell(xy)$ for a class of convex, even, and Lipschitz losses $\ell$, and in doing so we clearly delineate the ``gradient flow regime'' from the ``EoS regime''.

\emphh{Effect of learning rate on learning.} Recently, several works have sought to understand how the choice of learning rate affects the learning process, in terms of the properties of the resulting minima~\citep{jasetal2018widthofminima, wu2018sgd, mulayoff2021implicit,nacson2022implicit} and the behavior of optimization dynamics~\citep{xing2018walk, jastrzkebski2018relation, jastrzebski2020break,lewkowycz2020large, jastrzebski2021catastrophic}.

\cite{li2019towards} demonstrate for a synthethic data distribution and a  two-layer ReLU network model that choosing a larger step size for SGD helps with generalization. 
Subsequent works have shown similar phenomena for regression~\citep{nakkiran2020learning, wu2021direction, baetal2022onestepsgd}, kernel ridge regression~\cite{beugnot2022benefits}, and linear diagonal networks~\cite{nacson2022implicit}.
However, the large step sizes considered in these work still fall under the scope of descent lemma, and most prior works do not theoretically investigate the effect of large step size in the EoS regime. 
A notable exception is the work of~\cite{wangetal2022largelr}, which studies the impact of learning rates greater than $2/\text{smoothness}$ for a matrix factorization problem.
Also, the recent work of~\cite{Andetal23SGDLarge} seeks to explain the generalization benefit of SGD in the large step size regime by relying on a heuristic SDE model for the case of linear diagonal networks. 
Despite this similarity, their main scope is quite different from ours, as we \emph{(i)} focus on GD instead of SGD and \emph{(ii)} establish a direct and detailed analysis of the GD dynamics for a model of the motivating sparse coding example.

\section{Single-neuron linear network}
\label{sec:warm-up}

In this section, we analyze the single-neuron linear network model $(x,y)\mapsto f(x,y) \deq \ell(x\times y)$.

\subsection{Basic properties and assumptions}\label{scn:single_overview} 

\emphh{Basic properties.} 
If $\ell$ is minimized at $0$, then the \emph{global minimizers} of $f$ are the $x$- and $y$-axes.  
The GD \emph{iterates}  $x_t,y_t$, for step size $\eta > 0$ and iteration $t\ge 0$ can be written as 
\begin{align}
x_{t+1}  = x_t - \eta\, \ell'(x_ty_t)\,y_t\,,  
\quad\quad   y_{t+1}  = y_t - \eta \,\ell'(x_ty_t)\,x_t\,. 
\end{align}

\emphh{Assumptions.} From here onward, we assume $\eta<1$ and the following conditions on $\ell:\R\to \R$.
\begin{enumerate}[label={(A\arabic*)}, ref=(A\arabic*), leftmargin=1cm]
\item \label{A:cvx_lips} $\ell$ is convex, even, $1$-Lipschitz, and of class $\mathcal C^2$ near the origin with $\ell''(0) = 1$.
\item \label{A:origin_rate}
There exist constants $\beta > 1$ and $c > 0$ with the following property: for all $s\ne 0$,
\begin{align*}
\ell'(s)/s \le 1-c \,\abs s^\beta \one\{\abs s \le c\}\,.
\end{align*}
We allow $\beta= +\infty$, in which case we simply require that $\frac{\ell'(s)}{s} \le 1$ for all $s\ne 0$.
\end{enumerate}

Assumption~\ref{A:origin_rate} imposes decay of $s\mapsto \ell'(s)/s$ locally away from the origin in order to obtain more fine-grained results on the limiting sharpness in~\autoref{thm:EOS}.
As we show in \autoref{lemma:properties_ell} below, when $\ell$ is smooth and has a strictly negative fourth derivative at the origin, then Assumption~\ref{A:origin_rate} holds with $\beta = 2$.
See \autoref{ex:losses} for some simple examples of losses satisfying our assumptions.

\subsection{Two different regimes for GD depending on the step size}
\label{scn:two_regimes_single}

Before stating rigorous results, in this section we begin by giving an intuitive understanding of the GD dynamics.
It turns out that for a given initialization $(x_0,y_0)$, there are two different regimes for the GD dynamics depending on the step size $\eta$.
Namely, there exists a threshold on the step size  such that \emph{(i)}  below the threshold, GD remains close to the gradient flow for all time, and \emph{(ii)} above the threshold, GD enters the edge of stability and diverges away from the gradient flow. See \autoref{fig:overview}.

First, recall that the GD dynamics are symmetric in $x,y$ and that the lines $y=\pm x$ are invariant. 
Hence, we may assume without loss of generality that
\begin{align*}
y_0>x_0>0\,,\quad y_t > \abs{x_t}~~\text{for all}~t\ge 1\,,\quad\text{and GD converges to $(0,y_\infty)$ for $y_\infty>0$}\,.
\end{align*}
From the expression~\eqref{eq:hessian_lxy} for the Hessian of $f$ and our normalization $\ell''(0) = 1$, it follows that the sharpness (the largest eigenvalue of loss Hessian) reached by GD in this example is precisely $y_\infty^2$.

Initially, in both regimes, the GD dynamics tracks the continuous-time gradient flow.
Our first observation is that the gradient flow admits a conserved quantity, thereby allowing us to predict the dynamics in this initial phase.

\begin{lemma}[conserved quantity]\label{lemma:conserverd_quantity}
Along the gradient flow for $f$, the quantity $y^2 - x^2$ is conserved.
\end{lemma}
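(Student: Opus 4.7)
The statement is a one-line calculation once the gradient flow is written out, so my plan is simply to differentiate the candidate conserved quantity along the flow and observe a cancellation driven by the symmetry of $f(x,y)=\ell(xy)$ under swapping the roles of $x$ and $y$ (up to a sign in the derivative).

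First, I would write the gradient flow associated with $f(x,y)=\ell(xy)$ using the chain rule:
\begin{equation}
\dot x = -\partial_x f(x,y) = -\ell'(xy)\,y, \qquad \dot y = -\partial_y f(x,y) = -\ell'(xy)\,x.
\end{equation}
Then I would compute the time derivative of $y^2 - x^2$ along a solution:
\begin{equation}
\frac{\mathrm d}{\mathrm dt}\bigl(y(t)^2 - x(t)^2\bigr) = 2 y(t)\,\dot y(t) - 2 x(t)\,\dot x(t) = -2\,\ell'(x(t) y(t))\,\bigl(x(t) y(t) - x(t) y(t)\bigr) = 0.
\end{equation}
This immediately gives $y(t)^2 - x(t)^2 \equiv y(0)^2 - x(0)^2$ for all $t$, proving the claim.

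There is essentially no obstacle here: the lemma just records the symmetry $\partial_x f / y = \partial_y f / x = \ell'(xy)$ that causes the $xy$ contributions to cancel. The result is stated as a standalone fact because it will be used to predict the initial phase of GD (which tracks the flow), so while the proof itself is immediate, the utility lies in the downstream analysis in Section~\ref{scn:two_regimes_single}.
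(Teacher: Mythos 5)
Your proof is correct and is essentially identical to the paper's one-line argument: differentiate $y^2 - x^2$ along the flow $\dot x = -\ell'(xy)\,y$, $\dot y = -\ell'(xy)\,x$ and observe that the two $\ell'(xy)\,xy$ terms cancel. No issues.
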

\begin{proof}
Differentiating $y_t^2 - x_t^2$ with respect to $t$ gives $2y_t\, (-\ell'(x_ty_t)\,x_t) - 2x_t\,(-\ell(x_ty_t)\, y_t)=0$. 
\end{proof}

\noindent \autoref{lemma:conserverd_quantity} implies that the gradient flow converges to $(0,y^{\mathrm{GF}}_\infty) = (0, \sqrt{y_0^2-x_0^2})$.
For GD with step size $\eta > 0$, the quantity $y^2 - x^2$ is no longer conserved, but we show in  \autoref{lemma:app_conserved_quantity} that it is \emph{approximately} conserved until the GD iterate lies close to the $y$-axis.
Hence, GD initialized at $(x_0, y_0)$ also reaches the $y$-axis approximately at the point $(x_{\tz}, y_{\tz}) \approx (0, \sqrt{y_0^2 - x_0^2})$.

At this point, GD either approximately converges to the gradient flow solution $(0, \sqrt{y_0^2 -x_0^2})$ or diverges away from it, depending on whether or not $y_{\tz}^2 > 2/\eta$.
To see this, for $\abs{x_{\tz} y_{\tz}} \ll 1$, we can Taylor expand $\ell'$ near zero to obtain the approximate dynamics for $x$ (recalling  $\ell''(0) = 1$),
\begin{align} \label{linear_approx_dynamics}
\begin{split}
x_{\tz+1}
&\approx x_{\tz} - \eta x_{\tz} y_{\tz}^2 =(1-\eta y_{\tz}^2) \, x_{\tz}\,.
\end{split}
\end{align} 
From~\eqref{linear_approx_dynamics}, we deduce the following conclusions.
\begin{enumerate}
\item[\emph{(i)}] If $y_{\tz}^2 < 2/\eta$, then $\abs{1-\eta y_{\tz}^2} < 1$.
Since $y_t$ is decreasing, it implies that $\abs{1-\eta y_{t}^2} < 1$ for all $t\geq \tz$, and so $\abs{x_t}$ converges to zero exponentially fast.
\item[\emph{(ii)}] On the other hand, if $y_{\tz}^2 > 2/\eta$, then $\abs{1-\eta y_{\tz}^2} > 1$, i.e., the magnitude of $x_{t_0}$ increases in the next iteration, and hence GD cannot stabilize.
In fact, in the approximate dynamics, $x_{{\tz}+1}$ has the opposite sign as $x_{\tz}$, i.e., $x_{\tz}$ jumps across the $y$-axis. 
One can show that the ``bouncing'' of the $x$ variable continues until $y_t^2$ has decreased past $2/\eta$, at which point we are in the previous case and GD approximately converges to $(0, 2/\eta)$.

\end{enumerate}

This reasoning, combined with the expression for the Hessian of $f$, shows that
\begin{align*}
\mathsf{sharpness}(0, y_\infty)
\deq \lambda_{\max}\bigl(\nabla^2 f(0,y_\infty)\bigr)
&\approx \min\bigl\{y_0^2 - x_0^2,\, 2/\eta\bigr\} \\
&= \min\{\text{gradient flow sharpness}, \, \text{EoS prediction}\}\,.
\end{align*}
Accordingly, we refer to the case $y_0^2 - x_0^2 < 2/\eta$ as the \emphh{gradient flow regime}, and the case $y_0^2 - x_0^2 > 2/\eta$ as the \emphh{EoS regime}.

See \autoref{fig:simple} and \autoref{fig:overview} for illustrations of these two regimes; see also \autoref{fig:illustration_eos} for detailed illustrations of the EoS regime.
In the subsequent sections, we aim to make the above reasoning rigorous. For example, instead of the approximate dynamics~\eqref{linear_approx_dynamics}, we consider the original GD dynamics and justify the Taylor approximation. Also, in the EoS regime, rather than loosely asserting that $\abs{x_t} \searrow 0$ exponentially fast and hence the dynamics stabilizes ``quickly'' once $y_t^2 < 2/\eta$, we track precisely how long this convergence takes so that we can bound the gap between the limiting sharpness and the prediction $2/\eta$.

\subsection{Results}\label{scn:lxy_results}

\emphh{Gradient flow regime.}
Our first rigorous result is that when $y_0^2 - x_0^2 = \nicefrac{(2-\delta)}{\eta}$ for some constant $\delta \in (0,2)$, then the limiting sharpness of GD with step size $\eta$ is $y_0^2 - x_0^2 + O(1) = \nicefrac{(2-\delta)}{\eta} + O(1)$, which is precisely the sharpness attained by the gradient flow up to a controlled error term.

In fact, our theorem is slightly more general, as it covers initializations in which $\delta$ can scale mildly with $\eta$. The precise statement is as follows.

\begin{theorem}[gradient flow regime; see \autoref{sec:2d_small}] \label{thm:GF}
Suppose we run GD with step size $\eta > 0$ on the objective $f$, where $f(x,y) \deq \ell(xy)$, and $\ell$ satisfies Assumptions~\ref{A:cvx_lips} and~\ref{A:origin_rate}.
Let $(\tilde x,\tilde y) \in\R^2$ satisfy $\tilde y > \tilde x > 0$ with $\tilde y^2 - \tilde x^2 = 1$.
Suppose we initialize GD at $(x_0,y_0) \deq (\frac{2-\delta}{\eta})^{1/2} \, (\tilde x, \tilde y)$, where $\delta \in (0,2)$ and $\eta \lesssim \delta^{1/2} \wedge (2-\delta)$.
Then, GD converges to $(0, y_\infty)$ satisfying
\begin{align*}
\frac{2-\delta}{\eta} - O(2-\delta) - O\Bigl(\frac{\eta}{\min\{\delta,2-\delta\}}\Bigr)
&\le 
\lambda_{\max}\bigl(\nabla^2 f(0, y_\infty)\bigr)
\le \frac{2-\delta}{\eta} + O\Bigl(\frac{\eta}{2-\delta}\Bigr)\,,
\end{align*}
where the implied constants depend on $\tilde x$, $\tilde y$, and $\ell$, but not on $\delta$, $\eta$.
\end{theorem}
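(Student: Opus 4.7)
The plan is to analyze the trajectory through the two scalar quantities $Q_t \deq y_t^2 - x_t^2$ and $p_t \deq x_t y_t$. A direct calculation from the GD update yields the exact multiplicative recurrence
\[Q_{t+1} = Q_t\,(1 - \eta^2\,\ell'(p_t)^2),\]
which is the discrete analog of \autoref{lemma:conserverd_quantity}. Since $|\ell'| \le 1$ and $\eta < 1$, this shows that $Q_t$ is non-increasing and remains non-negative, so $Q_\infty \in [0, (2-\delta)/\eta]$. A short sign argument (using $\ell'$ odd and $y_0 > x_0 > 0$) also shows that $y_t$ stays positive and monotonically decreases. Once we establish $x_t \to 0$, this gives $y_\infty^2 = Q_\infty \le (2-\delta)/\eta$, already producing the upper bound (in fact strictly stronger than claimed).

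For the lower bound, the task reduces to controlling the total drift $\sum_t \eta^2\,\ell'(p_t)^2$. I would establish a convexity-based descent inequality for $V_t \deq \ell(p_t)$: from $p_{t+1} - p_t = -\eta\,\ell'(p_t)(x_t^2+y_t^2) + \eta^2\,\ell'(p_t)^2\,p_t$, together with convexity and $1$-Lipschitzness of $\ell$, one obtains
\[V_{t+1} - V_t \;\le\; -\eta\,\ell'(p_t)^2\,(x_t^2+y_t^2) + O\bigl(\eta^2\,\ell'(p_t)^2\,|p_t|\bigr).\]
Telescoping and using $V_0 \le |p_0| \lesssim (2-\delta)/\eta$ gives $\sum_t \eta\,\ell'(p_t)^2\,(x_t^2+y_t^2) \lesssim (2-\delta)/\eta$. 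Dividing by the lower bound $y_t^2 \gtrsim (2-\delta)/\eta$ (to be bootstrapped) yields $\sum_t \eta^2\,\ell'(p_t)^2 \lesssim \eta$; substituting into the product formula for $Q$ then gives $Q_\infty \ge Q_0(1 - C\eta) = (2-\delta)/\eta - O(2-\delta)$.

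The residual error $O(\eta/\min(\delta, 2-\delta))$ and the convergence $x_t \to 0$ come from a two-phase analysis. Define $t_0$ as the first iterate with $|p_{t_0}| \le 1$. For $t < t_0$ (``Phase I''), $|\ell'(p_t)| \approx 1$ (Lipschitz saturation) and the descent drives $p_t$ down to $O(1)$ in at most $O(1/\eta)$ steps; then approximate conservation of $Q$ gives $y_{t_0}^2 \approx (2-\delta)/\eta$ and hence $x_{t_0}^2 = p_{t_0}^2/y_{t_0}^2 \lesssim \eta/(2-\delta)$. For $t \ge t_0$ (``Phase II''), Assumption~\ref{A:cvx_lips} linearizes the $x$-recurrence to $x_{t+1} \approx (1 - \eta y_t^2)\,x_t$ with contractivity $|1 - \eta y_t^2| \approx |1 - \delta|$, so $x_t$ decays geometrically with spectral gap $\min(\delta, 2-\delta)$ away from unity. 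Using the refinement $|\ell'(p_t)| \lesssim |p_t|$ from Assumption~\ref{A:origin_rate}, the Phase~II drift sums to
\[\textstyle \sum_{t \ge t_0} \eta^2\,\ell'(p_t)^2 \;\lesssim\; \eta^2\,y_{t_0}^2 \cdot x_{t_0}^2 / \min(\delta, 2-\delta) \;\lesssim\; \eta^2/\min(\delta, 2-\delta),\]
which after multiplying by $Q_{t_0} \approx (2-\delta)/\eta$ produces precisely the second stated error term.

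The main technical obstacle is the apparent circularity between the lower bound $y_t^2 \gtrsim (2-\delta)/\eta$ and the drift bound: the former is needed to divide in the descent inequality, yet the latter is needed to establish the former via the product formula for $Q$. I would resolve this by a standard continuity/bootstrap argument: let $T \deq \sup\{t : y_s^2 \ge (2-\delta)/(2\eta)\text{ for all }s \le t\}$, prove the drift bound on $[0,T]$ with a constant strictly smaller than the bootstrap threshold, and conclude $T = \infty$. A secondary difficulty is that $\ell$ is only $\mathcal{C}^2$ near the origin, so the descent inequality of Phase~I must be derived from convexity and Lipschitzness alone; the $\mathcal{C}^2$ structure and Assumption~\ref{A:origin_rate} are invoked only in Phase~II for the sharpened bound $|\ell'(p)| \lesssim |p|$ on small $p$.
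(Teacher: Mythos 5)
Your proposal follows essentially the same two-phase strategy as the paper: an initial phase governed by the (exact) multiplicative recurrence $Q_{t+1} = (1-\eta^2\,\ell'(p_t)^2)\,Q_t$ for $Q_t = y_t^2 - x_t^2$ (the paper's Lemma~\ref{lemma:app_conserved_quantity}), followed by a convergence phase in which $|x_t|$ contracts geometrically with rate $\min\{\delta,2-\delta\}$, and the two error terms arise exactly as you describe ($O(2-\delta)$ from the drift accumulated over the $O(1/\eta)$ iterations of Phase~I, and $O(\eta/\min\{\delta,2-\delta\})$ from summing the geometric tail in Phase~II). Your observation that the upper bound $y_\infty^2 \le Q_0$ follows directly from monotonicity of $Q_t$ is correct and slightly sharper than the stated bound. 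The one step I would flag is the claimed one-step descent inequality $V_{t+1}-V_t \le -\eta\,\ell'(p_t)^2\,(x_t^2+y_t^2) + O(\eta^2\,\ell'(p_t)^2\,|p_t|)$ for $V_t=\ell(p_t)$: convexity gives $\ell(p_{t+1})-\ell(p_t) \le \ell'(p_{t+1})\,(p_{t+1}-p_t)$, and replacing $\ell'(p_{t+1})$ by $\ell'(p_t)$ goes the \emph{wrong} way when $|p_{t+1}|\le|p_t|$ (monotonicity of $\ell'$ then gives $|\ell'(p_{t+1})|\le|\ell'(p_t)|$), so the inequality does not follow from convexity and Lipschitzness alone; you would need a lower bound on $|\ell'(p_{t+1})|/|\ell'(p_t)|$, which requires the quantitative behavior of $\ell'$ near the origin. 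Fortunately this Lyapunov argument is not load-bearing: since you already show Phase~I lasts $t_0 = O(1/\eta)$ iterations (each step decreases $p_t$ by $\Theta(2-\delta)$ because $|\ell'(p_t)|\gtrsim 1$ there), the crude bound $\sum_{t<t_0}\eta^2\,\ell'(p_t)^2 \le \eta^2 t_0 = O(\eta)$ using only $|\ell'|\le 1$ already yields $Q_{t_0}\ge (1-O(\eta))\,Q_0$, which is all the paper uses. With that substitution your argument is complete and matches the paper's.
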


The proof of~\autoref{thm:GF} is based on a two-stage analysis.
In the first stage, we use Lemma~\ref{lemma:app_conserved_quantity} on the approximate conservation of $y^2 - x^2$ along GD in order to show that GD lands near the $y$-axis with $y_{\tz}^2 \approx \nicefrac{(2-\delta)}{\eta}$. In the second stage, we use the assumptions on $\ell$ in order to control the rate of convergence of $\abs{x_t}$ to $0$, which is subsequently used to control the final deviation of $y_\infty^2$ from $\nicefrac{(2-\delta)}{\eta}$.

\emphh{EoS regime.}
Our next result states that when $y_0^2 - x_0^2 > 2/\eta$, then the limiting sharpness of GD is close to the EoS prediction of $2/\eta$, up to an error term which depends on the exponent $\beta$ in~\ref{A:origin_rate}.

\begin{theorem}[EoS; see \autoref{sec:2d_large}] \label{thm:EOS}
Suppose we run GD on $f$ with step size $\eta>0$, where $f(x,y) \deq \ell(xy)$, and $\ell$ satisfies \ref{A:cvx_lips} and~\ref{A:origin_rate}.  
Let $(\tilde x,\tilde y) \in\R^2$ satisfy $\tilde y > \tilde x > 0$ with $\tilde y^2 - \tilde x^2 = 1$.
Suppose we initialize GD at $(x_0,y_0) \deq \sqrt{\nicefrac{(2+\delta)}{\eta}} \, (\tilde x, \tilde y)$, where $\delta > 0$ is a constant.
Also, assume that for all $t\ge 1$ such that $y_t^2 > 2/\eta$, we have $x_t \ne 0$.
Then, GD converges to $(0, y_\infty)$ satisfying
\begin{align*}
2/\eta - O(\eta^{1/(\beta-1)})
&\le \lambda_{\max}\bigl(\nabla^2 f(0, y_\infty)\bigr)
\le 2/\eta\,,
\end{align*}
where the implied constants depend on $\tilde x$, $\tilde y$, $\delta \wedge 1$, and $\ell$, but not on $\eta$.
\end{theorem}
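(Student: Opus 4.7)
The plan is to decompose the trajectory into three phases and track the quantity $y_t^2 - x_t^2$ throughout. Writing the GD recursion as $x_{t+1} = (1 - \eta\rho_t y_t^2)\,x_t$ and $y_{t+1} = (1 - \eta\rho_t x_t^2)\,y_t$ with $\rho_t \deq \ell'(x_t y_t)/(x_t y_t) \in [0,1]$ (nonnegative by convexity and evenness of $\ell$, upper-bounded by $1$ via Assumption~\ref{A:origin_rate}, and well-defined in the limit $s \to 0$ by $\ell''(0) = 1$), a direct expansion yields the identity $y_{t+1}^2 - x_{t+1}^2 = (y_t^2 - x_t^2)\,(1 - \eta^2\rho_t^2\,x_t^2\,y_t^2)$. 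So $y_t^2 - x_t^2$ is monotonically non-increasing along GD, with decrement per step proportional to $\eta^2 \rho_t^2 x_t^2 y_t^2$. This identity replaces the exact conservation law of Lemma~\ref{lemma:conserverd_quantity} and is the key quantitative tool.

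In the first phase, starting at $(x_0,y_0) = \sqrt{(2+\delta)/\eta}\,(\tilde x,\tilde y)$, the product $x_0 y_0$ is of order $1/\eta$, so Lipschitzness of $\ell$ gives $\rho_0 = O(\eta)$ and GD tracks the gradient flow closely (the same approximate-conservation argument of Lemma~\ref{lemma:app_conserved_quantity}, invoked in the proof of \autoref{thm:GF}, applies here). The iterates then reach a time $t_0$ at which $|x_{t_0}|$ is small (on a polynomial-in-$\eta$ scale to be chosen) while $y_{t_0}^2 \approx (2+\delta)/\eta$. In the bouncing phase, $\rho_t$ is close to $1$ and the linearization $x_{t+1} \approx (1-\eta y_t^2)\,x_t$ with $\eta y_t^2 > 2$ causes $|x_t|$ to grow geometrically while flipping signs (the theorem's non-degeneracy hypothesis on $x_t$ rules out exact $y$-axis crossings). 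The identity above forces $y_t^2 - x_t^2$ to strictly decrease, and invoking Assumption~\ref{A:origin_rate} in the form $\ell'(s) \le s\,(1-c\,|s|^\beta)$ shows that $|x_t|$ saturates at a scale where $|x_t y_t|^\beta \sim \eta\,(y_t^2 - 2/\eta)$. Continuing the drift, $y_t^2$ crosses $2/\eta$ at some time $t_1$, which delivers the upper bound $y_\infty^2 \le 2/\eta$. Thereafter $|1 - \eta \rho_t y_t^2| < 1$ and $|x_t|$ decays geometrically; the residual decrement of $y_t^2$ is controlled by the geometric sum $\sum_{t \ge t_1} \eta\,\rho_t\,x_t^2\,y_t^2$, which together with the Phase~2 size of $|x_{t_1}|$ gives the lower bound $y_\infty^2 \ge 2/\eta - O(\eta^{1/(\beta-1)})$.

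The main obstacle is the lower bound, which boils down to controlling the overshoot at the Phase~2/3 transition. The scaling $\eta^{1/(\beta-1)}$ arises from balancing two $\beta$-dependent effects: the saturation level of $|x_t|$ in Phase~2 (set by the nonlinearity $|\ell'(s) - s| = O(|s|^{\beta+1})$ implied by Assumption~\ref{A:origin_rate}) against the geometric rate $|1 - \eta\,y_{t_1}^2|$ in Phase~3. Carrying out the matched asymptotics cleanly — while keeping track of the sign flips of $x_t$, ensuring $y_t$ stays bounded away from $0$, and handling the fact that $\rho_t$ can momentarily drop when $x_t y_t$ swings to a large value mid-bounce — is the main technical labor and dictates the precise $\beta$-dependence in the statement.
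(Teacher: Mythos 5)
Your plan follows the paper's proof essentially step for step: your exact recursion $y_{t+1}^2-x_{t+1}^2=(y_t^2-x_t^2)\,(1-\eta^2\rho_t^2\,x_t^2y_t^2)$ is precisely the identity used inside \autoref{lemma:app_conserved_quantity}, your Phase-2 saturation claim is the quasi-static envelope of \autoref{prop:quasi_static_upper}, and your Phase-3 geometric-sum argument is \autoref{prop:phase_3} (with \autoref{lemma:crossing} covering the degenerate case where $y_t^2$ tends to $2/\eta$ without crossing). The one point on which your sketch needs correction is where the rate $\eta^{1/(\beta-1)}$ actually comes from. Writing $y_t^2=(2+\delta_t)/\eta$, the relation $\abs{x_ty_t}^\beta\sim\delta_t$ cannot hold all the way to the crossing --- taken literally it would give $\abs{x_{\tcr}y_{\tcr}}=0$ --- because the envelope moves by $\asymp\eta^2\delta_t^{2/\beta}$ per step while the excess of $\abs{x_t}$ over the envelope contracts only at rate $1-\Omega(\delta_t)$, so tracking fails once $\delta_t\lesssim\eta^{\beta/(\beta-1)}$. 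What one can prove (the induction in \autoref{prop:quasi_static_upper}) is $\abs{s_t}^\beta\le C\,(\delta_t+C'\eta^{\beta/(\beta-1)})$, and it is this additive floor, evaluated at the crossing where $\delta_t\le 0$, that yields $\abs{s_{\tcr}}\lesssim\eta^{1/(\beta-1)}$ and hence the theorem via \autoref{prop:phase_3}; the balance is internal to the bouncing phase rather than a Phase-2/Phase-3 matching as you describe. Closing that induction --- choosing the constants and the exponent $\gamma=\beta/(\beta-1)$ consistently, using Assumption~\ref{A:origin_rate} in the strengthened form~\eqref{eq:A2_strong} --- is exactly the technical labor you correctly flag as outstanding.
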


\emphh{Remarks on the assumptions.}  The initialization in our results is such that both $y_0$ and $y_0 - x_0$ are on the same scale, i.e., $y_0, y_0 - x_0 = \Theta(1/\sqrt\eta)$. This rules out extreme initializations such as $y_0 \approx x_0$, which are problematic because they lie too close to the invariant line $y=x$. Since our aim in this work is not to explore every edge case, we focus on this setting for simplicity. Moreover, we imposed the assumption that the iterates of GD do not exactly hit the $y$-axis before crossing $y^2 = 2/\eta$. This is necessary because if $x_t = 0$ for some iteration $t$, then $(x_{t'}, y_{t'}) = (x_t, y_t)$ for all $t' > t$, and hence the limiting sharpness may not be close to $2/\eta$. This assumption holds generically, e.g., if we perturb each iterate of GD with a vanishing amount of noise from a continuous distribution, and we conjecture that for any  $\eta > 0$, the assumption holds for all but a measure zero set of initializations.

When $\beta = +\infty$, which is the case for the Huber loss in \autoref{ex:losses}, the limiting sharpness is $2/\eta + O(1)$.
When $\beta = 2$, which is the case for the logistic and square root losses in \autoref{ex:losses}, the limiting sharpness is $2/\eta + O(\eta)$.
Numerical experiments show that \emphh{our error bound of $O(\eta^{1/(\beta-1)})$ is sharp}; see~\autoref{fig:gap_time} below.

We make a few remark about the proof. As we outline the proof in \autoref{sec:2d_large_outline}, in turns out in order to bound the gap $2/\eta - y_\infty^2$,  the proof requires a  control of the size  $\abs{x_{\tcr} y_{\tcr}}$, where $\tcr$ is the first iteration such that $y_{\tcr}^2$ crosses $2/\eta$. However, controlling  the size  of $\abs{x_{\tcr} y_{\tcr}}$ is surprisingly delicate as it requires a fine-grained understanding of the bouncing phase. 
The insight that guides the proof is the observation that during the bouncing phase, the GD iterates lie close to a certain envelope (\autoref{fig:overview}). 

As a by-product of our analysis, we obtain a rigorous version of the quasi-static principle from which can more accurately track the sharpness gap and convergence rate (see \autoref{sec:quasi}). 
The results of~\autoref{thm:GF},~\autoref{thm:EOS}, and~\autoref{thm:quasi_static} are displayed pictorially as~\autoref{fig:overview}.

\section{Understanding the bias evolution of the ReLU network}
\label{sec:multi-neuron}

In this section, we use the insights from \autoref{sec:warm-up} to answer our main question, namely understanding the role of a large step size in learning threshold neurons for the ReLU network \eqref{exp:relu_network}.
Based on the observed dynamics (\autoref{fig:motivation_behave}), we can make our question more concrete as follows.
\begin{center}
{\bf Q.} What is the role of a large step size during the ``\emph{initial phase}'' of training in which \emph{(i)} the bias $b$ rapidly decreases and \emph{(ii)} the sum of weights $a^- + a^+$ oscillates?
\end{center}

\subsection{Approximating the initial phase of GD with the ``mean model''}\label{sec:justification}

Deferring details to \autoref{app:deriv_mean}, the GD dynamics for the ReLU network~\eqref{exp:relu_network} in the initial phase are well-approximated by
\begin{align}
\text{GD dynamics on }(a^-,a^+, b) \mapsto \lsym(d\,(a^-+a^+)\,g(b))\,,
\end{align} 
where $\lsym(s) \deq \frac{1}{2}(\log(1+\exp(-s))+ \log(1+\exp(+s)))$ and $g(b) \deq \E_{z\sim \NN(0,1)}\ReLU(z+b)$ is the `smoothed' ReLU\@.
The GD dynamics can be compactly written in terms of the parameter $A_t \deq d\,(a^-_t+a^+_t)$. 
\begin{align}\label{exp:mean_model} 
A_{t+1} &= A_t   - 2d^2\eta \,  \lsym'( A_t g(b_t))  \, g(b_t)\,, \qquad
b_{t+1} = b_t  - \eta \, \lsym'(A_t g(b_t) ) \, A_t g'(b_t)\,.
\end{align} 

\begin{wrapfigure}[7]{r}{0.35\linewidth} 
\centering
\vspace{-10pt}
\includegraphics[width=0.7\linewidth]{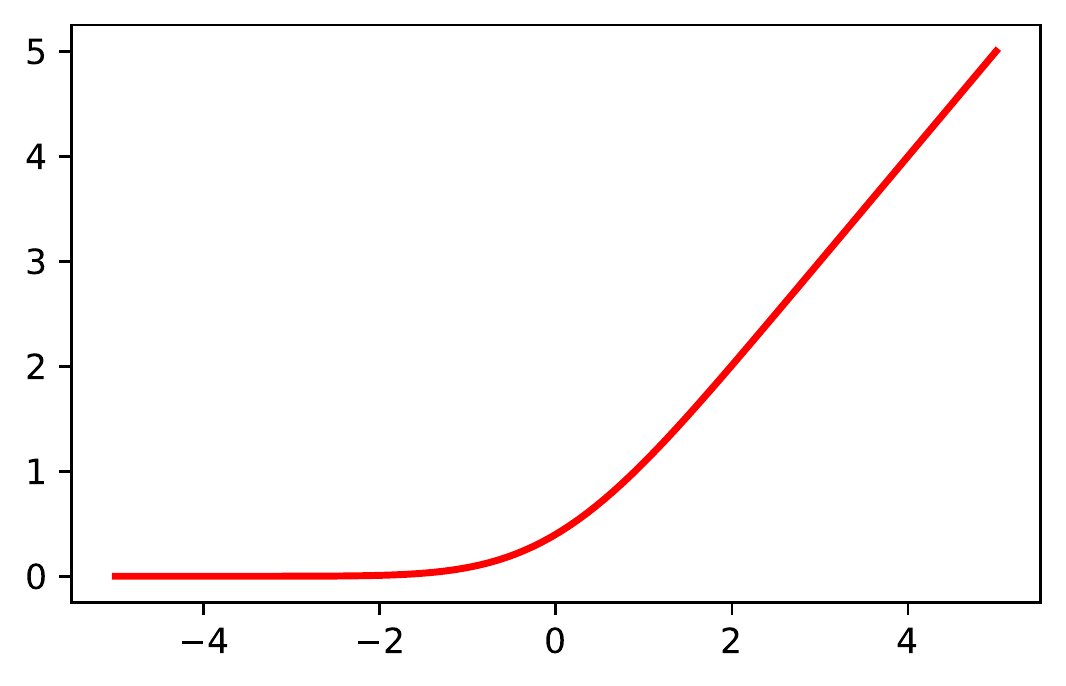}    
\vspace{-10pt}
\caption{\footnotesize{The `smoothed' ReLU $g(b)$}}\label{fig:g} 
\end{wrapfigure}
We call these dynamics \emphh{the mean model}. 
\autoref{fig:compare_mean_model} shows that the mean model closely captures the GD dynamics for the ReLU network~\eqref{exp:relu_network}, and we henceforth focus on analyzing the mean model.

The main advantage of the representation  \eqref{exp:mean_model} is that it makes apparent the connection to the single-neuron example that we studied in \autoref{sec:warm-up}.  
More specifically, \eqref{exp:mean_model} can be interpreted as the ``rescaled'' GD dynamics on the objective $(A,b)\mapsto \lsym(A g(b))$,
where the step size for the $A$-dynamics is multiplied by $2d^2$.
Due to this resemblance, we can apply the techniques from \autoref{sec:warm-up}.

\subsection{Two different regimes for the mean model}\label{scn:two_regimes_mean_model}

Throughout the section, we use the shorthand $\ell \deq \lsym$, and focus on initializing wiht $a_0^\pm = \Theta(1/d)$, $a^- + a^+ \ne 0$, and $b_0 = 0$.
This implies $A_0 = \Theta(1)$.
We also note the following fact for later use.

\begin{lemma}[formula for the smoothed ReLU; see \autoref{sec:mean_model_lemmas}]\label{lem:smoothed_relu}
The smoothed ReLU function $g$ can be expressed in terms of the PDF $\pdf$ and the CDF $\cdf$ of the standard Gaussian distribution as $g(b) = \pdf(b) + b\, \cdf(b)$. 
In particular, $g' = \cdf$.
\end{lemma}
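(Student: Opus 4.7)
The plan is to compute $g(b) = \E_{z\sim\NN(0,1)}[\ReLU(z+b)]$ by a direct integral calculation, using only standard identities for the Gaussian PDF and CDF. Since $\ReLU(z+b) = (z+b)\,\one\{z \ge -b\}$, I would first write
\begin{align*}
g(b) = \int_{-b}^{\infty} (z+b)\,\pdf(z)\,\D z = \int_{-b}^{\infty} z\,\pdf(z)\,\D z \;+\; b\int_{-b}^{\infty} \pdf(z)\,\D z\,.
\end{align*}

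For the first integral, I would use the identity $\pdf'(z) = -z\,\pdf(z)$, so $\int z\,\pdf(z)\,\D z = -\pdf(z) + C$. Evaluated on $[-b,\infty)$, this gives $\pdf(-b) = \pdf(b)$, using that $\pdf$ is even. For the second integral, $\int_{-b}^{\infty}\pdf(z)\,\D z = 1 - \cdf(-b) = \cdf(b)$ by the symmetry $\cdf(-b) = 1-\cdf(b)$. Combining yields $g(b) = \pdf(b) + b\,\cdf(b)$.

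For the claim $g' = \cdf$, I would differentiate the closed form term by term:
\begin{align*}
g'(b) = \pdf'(b) + \cdf(b) + b\,\cdf'(b) = -b\,\pdf(b) + \cdf(b) + b\,\pdf(b) = \cdf(b)\,,
\end{align*}
again invoking $\pdf'(b) = -b\,\pdf(b)$ and $\cdf'(b) = \pdf(b)$. There is no real obstacle here; the only thing to be careful about is the sign conventions and the change of variables when using $\pdf(-b) = \pdf(b)$ and $\cdf(-b) = 1 - \cdf(b)$. An alternative, equally short route is to first compute $g'(b)$ directly by differentiating under the expectation (using $\frac{d}{db}\ReLU(z+b) = \one\{z+b \ge 0\}$, valid a.e.), obtain $g'(b) = \Pr\{z \ge -b\} = \cdf(b)$, and then integrate with respect to $b$ using $g(0) = \E[\ReLU(z)] = \pdf(0)$ to recover the closed form.
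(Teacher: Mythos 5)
Your proposal is correct and follows essentially the same route as the paper: split the integral $\int_{-b}^{\infty}(z+b)\,\pdf(z)\,\D z$ into two pieces, use $\pdf'(z)=-z\,\pdf(z)$ to evaluate the first as $\pdf(b)$, identify the second as $b\,\cdf(b)$, and differentiate the closed form to obtain $g'=\cdf$. The only difference is that you spell out the symmetry identities $\pdf(-b)=\pdf(b)$ and $\cdf(-b)=1-\cdf(b)$ explicitly, which the paper leaves implicit.
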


Note also that  $b_t$ is monotonically decreasing. This is because $\ell'(A_t g(b_t) ) \, A_t g'(b_t)\geq 0$  since $\ell'$ is an odd function and $g(b),g'(b)> 0$ for any $b\in \R$.

Following  \autoref{scn:two_regimes_single}, we begin with the continuous-time dynamics of the mean model:
\begin{align} \label{gf:mean_model}
\dot A
&= -2d^2\,\ell'(Ag(b)) \, g(b)\,, \qquad
\dot b
= -\ell'(Ag(b)) \, Ag'(b)\,.
\end{align} 

\begin{lemma}[conserved quantity; see \autoref{sec:mean_model_lemmas}]\label{lemma:conserverd_mean_model}
Let $\kappa:\R \to \R$ be defined as $\kappa(b) \deq \int_0^b g/g'$. Along the gradient flow \eqref{gf:mean_model}, the quantity $\frac{1}{2}A^2 - 2d^2 \kappa(b)$ is conserved.
\end{lemma}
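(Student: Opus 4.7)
My plan is to verify the conservation law by direct differentiation along the flow, in direct analogy with the proof of \autoref{lemma:conserverd_quantity} for the single-neuron example. Define $H(A,b) \deq \tfrac{1}{2}A^2 - 2d^2\kappa(b)$. By the fundamental theorem of calculus, $\kappa'(b) = g(b)/g'(b)$, which is well-defined because $g' = \cdf > 0$ everywhere (by \autoref{lem:smoothed_relu}). Therefore the chain rule gives
\begin{align*}
\frac{\D}{\D t} H(A_t, b_t) \;=\; A\,\dot A \;-\; 2d^2\,\frac{g(b)}{g'(b)}\,\dot b.
\end{align*}

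Substituting the gradient flow equations~\eqref{gf:mean_model} directly into each term, the first contribution becomes
\begin{align*}
A\,\dot A \;=\; -2d^2\,A\,\ell'(Ag(b))\,g(b),
\end{align*}
while the second contribution becomes
\begin{align*}
-2d^2\,\frac{g(b)}{g'(b)}\,\dot b \;=\; -2d^2\,\frac{g(b)}{g'(b)}\,\bigl(-\ell'(Ag(b))\,A\,g'(b)\bigr) \;=\; +2d^2\,A\,\ell'(Ag(b))\,g(b).
\end{align*}
These two expressions cancel identically, so $\frac{\D}{\D t} H(A_t,b_t) = 0$ for all $t$.

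There is no real obstacle here; the only thing to check is that $\kappa$ makes sense, which is why I invoke \autoref{lem:smoothed_relu} to ensure $g' = \cdf > 0$ so the integrand $g/g'$ is finite on all of $\R$. The whole proof is a one-line calculation once one observes that the factor $g'(b)$ appearing in $\dot b$ is exactly what the derivative $\kappa'(b) = g(b)/g'(b)$ was designed to cancel, producing the same $A\,\ell'(Ag(b))\,g(b)$ expression (up to sign) as $A\dot A$. The choice of the $2d^2$ prefactor in front of $\kappa(b)$ is precisely the one that matches the $2d^2$ prefactor in the $A$-dynamics; this is the ``rescaled'' analogue of the $y^2 - x^2$ conservation in the single-neuron case, where the role of $y^2/2$ is played by $A^2/2$ and the role of $x^2/2$ is played by $2d^2\kappa(b)$.
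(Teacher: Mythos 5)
Your proof is correct and is essentially identical to the paper's own argument: differentiate $\tfrac{1}{2}A^2 - 2d^2\kappa(b)$ along the flow, use $\kappa' = g/g'$, and observe that the two terms cancel. The added check that $g' = \cdf > 0$ (so $\kappa$ is well-defined) is a sensible, if minor, supplement.
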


\noindent Based on \autoref{lemma:conserverd_mean_model},  if we initialize the continuous-time dynamics \eqref{gf:mean_model} at $(A_0, 0)$ and if $A_t \to 0$, then the limiting value of the bias $b_\infty^{\rm GF}$ satisfies $\kappa(b_\infty^{\rm GF}) = -\frac{1}{4d^2} \, A_0^2$, which implies that $b_\infty^{\rm GF} = -\Theta( \frac{1}{d^2})$; indeed, this holds since $\kappa'(0) = g(0)/g'(0) > 0$, so there exist constants $c_0, c_1 > 0$ such that $c_0 b \le \kappa(b) \le c_1 b$ for all $-1 \le b \le 0$.
Since the mean model~\eqref{exp:mean_model} tracks the continuous-time dynamics~\eqref{gf:mean_model} until it reaches the $b$-axis, the mean model initialized at $(A_0, 0)$ also approximately reaches $(A_{\tz}, b_{\tz}) \approx (0,  -\Theta( \frac{1}{d^2}) ) \approx (0, 0)$ in high dimension $d \gg 1$.
In other words,  \emphh{the continuous-time dynamics~\eqref{gf:mean_model} fails to learn threshold neurons}.

Once the mean model reaches the $b$-axis,
we again identify two different regimes depending on the step size. 
A Taylor expansion of $\ell'$ around the origin yields the following approximate dynamics (here $\ell''(0) = 1/4$): $A_{\tz+1}
\approx A_{\tz} - \frac{\eta d^2}{2}\, A_{\tz}\, {g(b_{\tz})}^2  = A_{\tz}\,\bigl(1-  \frac{\eta d^2}{2} \,{g(b_{\tz})}^2\bigr)$.  
We conclude that the condition which now dictates whether we have bouncing or convergence is $\frac{1}{2}\, d^2 g(b_{\tz})^2 > 2/\eta$.   
\begin{enumerate}
\item[\emph{(i)}] \emphh{Gradient flow regime:} If  $2/\eta >   d^2 g(0)^2/2 =  d^2/ (4\pi)$  (since $g(0)^2 ={1}/{(2\pi)}$), i.e., the step size $\eta$ is \emph{below} the threshold $8\pi/d^2 $,   then the final bias of the mean model $b_\infty^{\rm MM}$  satisfies $b_\infty^{\rm MM} \approx b_\infty^{\rm GF}\approx 0$.
In other words, \emphh{the mean model fails to learn threshold neurons}.

\item[\emph{(ii)}] \emphh{EoS regime:} If  $2/\eta <  d^2/ (4\pi)$, i.e., the step size $\eta$ is \emph{above} the threshold $8\pi/d^2$, then $\frac{1}{2}\,d^2 g^2(b_\infty^{\rm MM} ) < 2/\eta$, i.e., $b_\infty^{\rm MM} < g^{-1}(2/\sqrt{\eta d^2})$.
For instance,  if $\eta = \frac{10\pi}{d^2}$, then  $b_\infty^{\rm MM} <-0.087$.
In other words, \emphh{the mean model successfully learns threshold neurons}.
\end{enumerate}

\subsection{Results for the mean model}


\begin{theorem}[mean model, gradient flow regime; see \autoref{scn:mean_model_pfs}] \label{thm:GF_MM}
Consider the mean model~\eqref{exp:mean_model} initialized at $(A_0, 0)$,  with step size $\eta = \frac{(8-\delta) \, \pi}{d^2}$ for some $\delta>0$.
Let $\gamma \deq \frac{1}{200}\min\{ \delta ,\, 8-\delta,\,  \frac{8-\delta}{\abs{A_0}}\}$. 
Then, as long as $\eta \le \gamma/\abs{A_0}$, the limiting bias $b_\infty^{\rm MM}$ satisfies
\begin{align*}
0
&\ge b_\infty^{\rm MM}
\ge - (\eta/\gamma)\,\abs{A_0}
= -O_{A_0,\delta}(1/d^2)\,.
\end{align*}
In other words,  the mean model fails to learn threshold neurons.
\end{theorem}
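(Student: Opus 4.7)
The plan is to adapt the two-stage strategy of \autoref{thm:GF} to the mean model, exploiting the tight structural parallel in which $A$ plays the role of the oscillating variable $x$ and $b$ that of the slowly varying variable $y$. The anchor is the quantity $H(A,b) \deq \frac{1}{2}A^2 - 2d^2\kappa(b)$ from \autoref{lemma:conserverd_mean_model}, which is exactly conserved under the continuous-time dynamics \eqref{gf:mean_model}. At initialization, $H(A_0, 0) = A_0^2/2$; since $\kappa'(0) = g(0)/g'(0) = \sqrt{2/\pi}$ by \autoref{lem:smoothed_relu}, the gradient flow converges to $(0, b_\infty^{\rm GF})$ with $\abs{b_\infty^{\rm GF}} = \Theta(A_0^2/d^2)$, so the goal is to show that GD stays close to this prediction.

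The first step is to establish \emph{geometric contraction of $A_t$ under a bootstrap}. Assuming a priori that $b_t \in [-1, 0]$ for all $t$ up to the current time, write the $A$-update in \eqref{exp:mean_model} as $A_{t+1} = (1 - \rho_t)\, A_t$ with $\rho_t \deq 2d^2 \eta\, g(b_t)^2 \cdot [\ell'(A_t g(b_t))/(A_t g(b_t))]$. Since $\lsym$ is convex and even with $\lsym''(0) = 1/4$, the ratio $\ell'(s)/s \le 1/4$ everywhere on $\R \setminus \{0\}$; combined with $g(b_t) \le g(0) = 1/\sqrt{2\pi}$ (because $g' = \cdf > 0$ and $b_t \le 0$), this gives $\rho_t \le (8-\delta)/4 < 2$ in the gradient flow regime $\eta \le (8-\delta)\pi/d^2$. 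The a priori bound on $b_t$ together with $\abs{A_t} \le \abs{A_0}$ keeps $\abs{A_t g(b_t)}$ in a compact set, so $\ell'(s)/s$ is also uniformly bounded below, yielding $\rho_t \ge \rho_\star(A_0, \delta) > 0$. Hence $\abs{1 - \rho_t} \le 1 - c$ uniformly for some $c = c(A_0, \delta) > 0$, giving $\abs{A_t} \le (1-c)^t \abs{A_0}$ and $\sum_{t \ge 0} A_t^2 \le A_0^2/(2c - c^2)$.

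Using the monotonicity $b_{t+1} \le b_t$ noted right after \autoref{lem:smoothed_relu}, together with $\ell'(s) \le s/4$ for $s \ge 0$, one bounds the total drift of the bias directly by a telescoping argument:
\[
\abs{b_\infty^{\rm MM}} = \sum_{t \ge 0} (b_t - b_{t+1}) = \eta \sum_{t \ge 0} \ell'(A_t g(b_t))\, A_t\, g'(b_t) \le \frac{\eta\, g(0)\, g'(0)}{4} \sum_{t \ge 0} A_t^2.
\]
Substituting $g(0)\, g'(0) = 1/(2\sqrt{2\pi})$, the geometric-decay bound from the previous step, and $\eta \le (8-\delta)\pi/d^2$ yields $\abs{b_\infty^{\rm MM}} \le C(A_0, \delta)\, \eta\, A_0^2 = O_{A_0, \delta}(1/d^2)$. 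Choosing $\gamma$ sufficiently small in $\abs{A_0}$ and in the gap to the critical threshold makes this at most $\eta \abs{A_0}/\gamma$, as stated.

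The main obstacle is closing the bootstrap. The contraction of $A_t$ requires $\abs{b_t}$ small (so that $g(b_t)$ stays near $g(0)$ and $\ell'(A_t g(b_t))/(A_t g(b_t))$ is bounded below), while smallness of $\abs{b_t}$ in turn relies on the contraction via the telescoping estimate above. The condition $\eta \le \gamma/\abs{A_0}$ provides exactly the slack needed to close this loop by induction on $t$: one shows inductively that if $b_s \in [-1, 0]$ for all $s \le t$, then $\abs{b_t}$ is already far smaller than $1$, so the a priori bound is never saturated. The reasoning parallels the bootstrap in the proof of \autoref{thm:GF}, but the nonlinearities $g$ and $\kappa$ make the bookkeeping somewhat more intricate; in particular, one must carefully track how the effective contraction constant $c$ depends on $\abs{A_0}$ and $\delta$, which is the role played by the prefactor $1/200$ and the three-way minimum in the definition of $\gamma$.
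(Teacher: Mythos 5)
Your proposal is correct and follows essentially the same route as the paper's proof: a coupled induction in which geometric contraction of $A_t$ (via $\lsym'(s)/s \le 1/4$, $g(b_t)\le g(0)$ for the upper bound on the contraction factor, and a lower bound on $\lsym'(s)/s$ on the relevant compact range for the lower bound) is bootstrapped against the bound $\abs{b_t}\le \eta\abs{A_0}/\gamma \le 1$ obtained by summing the resulting geometric series, with $\eta\le\gamma/\abs{A_0}$ closing the loop. The only cosmetic differences are that you bound the bias drift by $\eta\sum_t A_t^2$ rather than $\eta\sum_t\abs{A_t}$ and obtain the lower contraction bound by compactness instead of the paper's explicit case split at $A_tg(b_t)=2$; with the given $\gamma$ (rather than a freely chosen one) the constants still check out.
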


\begin{theorem}[mean model, EoS regime; see \autoref{scn:mean_model_pfs}] \label{thm:EOS_MM}
Consider the mean model initialized at $(A_0, 0)$,  with step size $\eta = \frac{(8+\delta) \, \pi}{d^2}$ for some $\delta>0$. 
Furthermore, assume that for all $t\ge 1$ such that $\frac{1}{2}\,d^2 g(b_t)^2 > 2/\eta$, we have $A_t \ne 0$.
Then, the limiting bias $b_\infty^{\rm MM}$ satisfies
\begin{align*}
b_\infty^{\rm MM} \le g^{-1}\bigl(2/\sqrt{(8+\delta)\,\pi}\bigr) \le -\Omega_\delta(1)\,.
\end{align*} 
For instance,  if $\eta = \frac{10\pi}{d^2}$, then  $b_\infty^{\rm MM} <-0.087$.
In other words,  the mean model successfully learns threshold neurons.
\end{theorem}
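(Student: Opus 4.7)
The plan is to follow the two-phase analysis used for the single-neuron model in~\autoref{thm:EOS}, adapted to the mean model~\eqref{exp:mean_model} via the rescaling correspondence developed in~\autoref{sec:justification}. Phase 1 tracks the gradient flow from $(A_0,0)$ to a point near the $b$-axis; Phase 2 shows that, once in the EoS regime, bouncing of $A_t$ must drag $b_t$ past the threshold $b^\star \deq g^{-1}(2/\sqrt{(8+\delta)\pi})$.

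\emph{Phase 1 (gradient-flow tracking until the $b$-axis).} I would exploit~\autoref{lemma:conserverd_mean_model}: the quantity $E \deq \tfrac{1}{2}A^2 - 2d^2\,\kappa(b)$ is exactly conserved along the flow~\eqref{gf:mean_model}. A discrete-time counterpart, in complete analogy with~\autoref{lemma:app_conserved_quantity} from the single-neuron proof, shows that $E_t$ is preserved up to an error of order $\eta$ per step until the iterate lies close to the $b$-axis. Initialized at $(A_0,0)$, we have $E_0 = \tfrac{1}{2}A_0^2$, and if $\tz$ denotes the first iteration with $|A_{\tz}| \le 1/d$, then approximate conservation yields $-2d^2\,\kappa(b_{\tz}) = \tfrac{1}{2}A_0^2 + O(\eta)$. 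Taylor expanding $\kappa(b) = (g(0)/g'(0))\,b + O(b^2) = \sqrt{2\pi}\,b + O(b^2)$ near $0$ forces $b_{\tz} = -\Theta(A_0^2/d^2)$, so $g(b_{\tz}) = 1/\sqrt{2\pi} + O(1/d^2)$. With $\eta = (8+\delta)\pi/d^2$ this gives
\begin{align*}
\tfrac{1}{2}\,\eta d^2\,g(b_{\tz})^2 \;=\; \tfrac{8+\delta}{8} \cdot 2 + o(1) \;>\; 2\,,
\end{align*}
which places the iterate strictly inside the EoS regime with a $\delta$-margin.

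\emph{Phase 2 (bouncing descent).} The starting point is the monotonicity observation: since $\ell'$ is odd and $g(b),g'(b)>0$, each GD step satisfies $b_{t+1}-b_t = -\eta\,\ell'(A_t g(b_t))\,A_t\,g'(b_t) \le 0$, so $b_\infty^{\rm MM} \deq \lim_{t\to\infty} b_t \in [-\infty,0]$. The case $b_\infty^{\rm MM} = -\infty$ is trivial, so I would argue by contradiction supposing $b^\star < b_\infty^{\rm MM} \le 0$. By monotonicity and continuity of $g$, there exists $\eps_\delta > 0$ with $\tfrac{1}{2}\eta d^2\,g(b_t)^2 \ge 2 + \eps_\delta$ for all large $t$. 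Linearizing the $A$-update around $A=0$ using $\ell''(0)=1/4$ gives $A_{t+1} = A_t\,(1 - \tfrac{1}{2}\eta d^2\,g(b_t)^2) + O(A_t^3)$, so the linear multiplier has magnitude at least $1+\eps_\delta$. Combined with the non-degeneracy assumption $A_t\ne 0$, this prevents $|A_t|$ from converging to $0$: any sufficiently small $|A_t|$ is amplified by a factor $\ge 1+\eps_\delta$ per step until it leaves the linearization zone. On the other hand, $b_\infty^{\rm MM}$ being finite forces $b_{t+1}-b_t \to 0$, and using $\ell'(s)\,s \ge \tfrac{1}{8}s^2$ for small $|s|$ (from $\ell''(0)=1/4$ and convexity of $\lsym$) together with $g(b_t)g'(b_t) \ge g(b_\infty^{\rm MM})g'(b_\infty^{\rm MM}) > 0$, one obtains $|b_{t+1}-b_t| \gtrsim \eta \min(|A_t|,A_t^2)$, forcing $A_t \to 0$. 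The two conclusions contradict, so $b_\infty^{\rm MM} \le b^\star$. The quantitative claim $b_\infty^{\rm MM} < -0.087$ at $\eta = 10\pi/d^2$ follows by numerically inverting $g(b) = \pdf(b) + b\,\cdf(b)$ via~\autoref{lem:smoothed_relu}.

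\emph{Main obstacle.} The delicate part is Phase 1: the step size for $A$ is inflated by $2d^2$ while $b$ moves only at rate $\eta$, and one must track the accumulated drift in $E_t$ over the $O(d)$-many iterations taken to reach the $b$-axis, ensuring that the landing bias $b_{\tz}$ is actually small enough that the EoS margin $\delta$ is preserved without dissipation. Handling the nonlinear $O(b^2)$ correction in $\kappa$ and the $O(A_t^3)$ correction in $\ell'$ simultaneously with a worst-case bookkeeping of the drift is where the bulk of the technical work will live. The Phase 2 contradiction argument is comparatively robust, relying only on the one-sided ``linear-instability versus finite-limit'' dichotomy.
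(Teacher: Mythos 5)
Your ``Phase 2'' is essentially the paper's entire proof, and it is correct: the paper argues by contradiction that if $\tfrac{1}{2}\,d^2 g(b_t)^2 > (2+\varepsilon)/\eta$ for all $t$, then (using $\ell'(s)/s \to 1/4$ and the assumption $A_t \ne 0$) the linear multiplier on $A_t$ has magnitude exceeding $1$, so $\liminf_t \abs{A_t} > 0$; but then $b_t$ decreases by a fixed positive amount each iteration, which is incompatible with $b_t$ staying above the threshold. Combined with the monotonicity of $t \mapsto b_t$ and of $g$, this yields $g(b_\infty^{\rm MM}) \le 2/\sqrt{\eta d^2}$, which is exactly the stated bound. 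Where your proposal diverges is ``Phase 1,'' which you identify as the main technical obstacle: it is in fact unnecessary for this theorem. The conclusion is only a one-sided upper bound on $b_\infty^{\rm MM}$, and the contradiction argument delivers it no matter where the iterate sits after the initial descent of $\abs{A_t}$ --- you do not need to certify that the trajectory lands near the $b$-axis with the EoS margin intact, nor to control the drift of the conserved quantity $\tfrac{1}{2}A^2 - 2d^2\kappa(b)$ in discrete time. (That kind of careful tracking is what the paper does for the \emph{gradient-flow} regime, \autoref{thm:GF_MM}, where one must show $b_\infty^{\rm MM}$ stays \emph{close to} $0$; and it would also be needed if one wanted a matching lower bound on $b_\infty^{\rm MM}$ in the EoS regime, which the theorem does not claim.) So your plan proves the result, but at the cost of a substantial and delicate component that can simply be deleted.
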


\section{Conclusion}\label{scn:conclusion}

In this paper, we present the first explanation for the emergence of threshold neuron (i.e., ReLU neurons with negative bias) in models such as the sparse coding model~\eqref{eq:sparse_coding} through a novel connection with the ``edge of stability'' (EoS) phenomenon.
Along the way, we obtain a detailed and rigorous understanding of the dynamics of GD in the EoS regime for a simple class of loss functions, thereby shedding light on the impact of large learning rates in non-convex optimization.

Our approach is largely inspired by the recent paradigm of ``\emph{physics-style}'' approaches to understanding deep learning based on simplified models and controlled experiments (c.f. \citep{zhang2022unveiling, von2022transformers,abernethy2023mechanism,allen2023physics, li2023transformers,ahn2023transformers,ahn2023linear}).
We found such physics-style approach quite effective to understand deep learning, especially given the complexity of modern deep neural networks. We hope that our work inspires further research on understanding the working mechanisms of deep learning.

Many interesting questions remain, and we conclude with some directions for future research.
\begin{itemize}
\item \emphh{Extending the analysis of EoS to richer models.}
Although the analysis we present in this work is restricted to simple models, the underlying principles can potentially be applied to more general settings.
In this direction, it would be interesting to study models which capture the impact of the depth of the neural network on the EoS phenomenon.
Notably, a follow-up work by \cite{song2023trajectory} uses bifurcation theory to extend our results to more complex models.

\item \emphh{The interplay between the EoS and the choice of optimization algorithm.}
As discussed in \autoref{scn:lxy_results}, the bouncing phase of the EoS substantially slows down the convergence of GD (see~\autoref{fig:gap_time}). Investigating how different optimization algorithm (e.g., SGD, or GD with momentum) interact with the EoS phenomenon could potentially lead to practical speed-ups or improved generalization.  
Notably, a follow up work by \citet{dai2023crucial} studies the working mechanisms of a popular modern optimization technique called \emph{sharpness-aware minimization}~\citep{foret2020sharpness} based on our sparse coding problem.

\item \emphh{An end-to-end analysis of the sparse coding model.} Finally, we have left open the motivating question of analyzing how two-layer ReLU networks learn to solve the sparse coding model~\eqref{eq:sparse_coding}.
Despite the apparent simplicity of the problem, its analysis has thus far remained out of reach, and we believe that a resolution to this question would constitute compelling and substantial progress towards understanding neural network learning.
We are hopeful that the insights in this paper provide the first step towards this goal.
\end{itemize}

 \section*{Acknowledgments}
We thank Ronan Eldan, Suriya Gunasekar, Yuanzhi Li, Jonathan Niles-Weed, and Adil Salim for initial discussions on this project.
KA was supported by the ONR grant (N00014-20-1-2394) and MIT-IBM Watson as well as a Vannevar Bush fellowship from Office of the Secretary of Defense.	
SC was supported by the NSF TRIPODS program (award DMS-2022448). 
\bibliographystyle{plainnat}
\bibliography{ref} 

\newpage

\appendix
\renewcommand{\appendixpagename}{\centering \LARGE Appendix}
\appendixpage

\startcontents[section]
\printcontents[section]{l}{1}{\setcounter{tocdepth}{2}}

\section{Additional illustrations for \autoref{sec:warm-up}}

In this section, we provide some illustrations of the results presented in \autoref{sec:warm-up}.
We first illustrate the two different regimes of GD presented in \autoref{scn:two_regimes_single}.
\begin{figure}[H] 
\centering
\includegraphics[width=.5\textwidth]{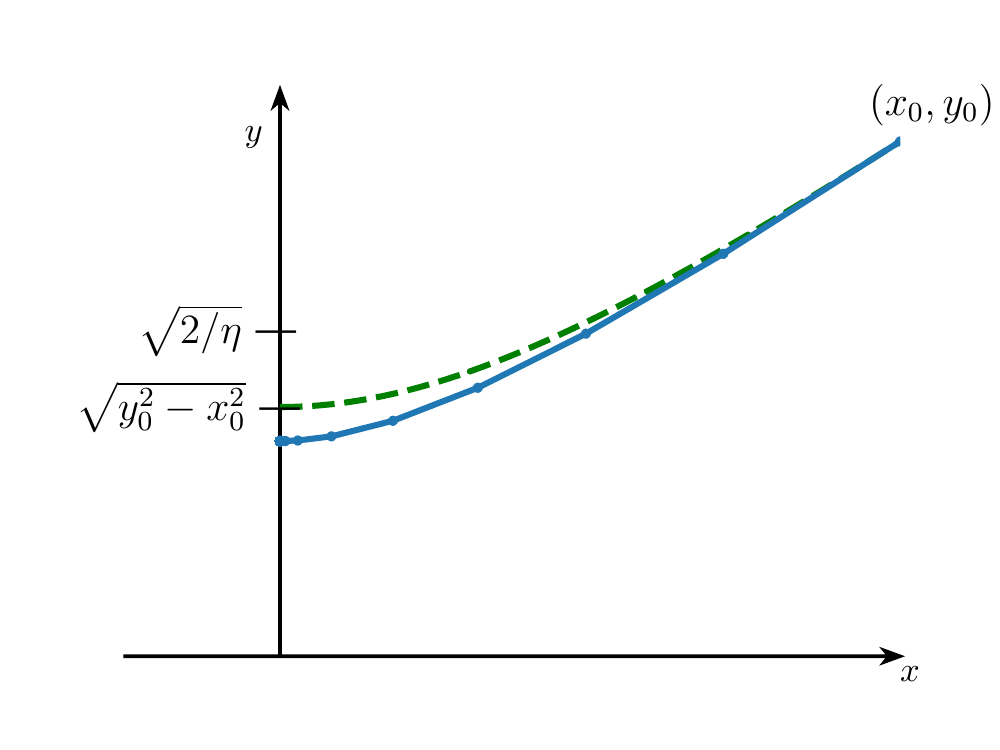}
\includegraphics[width=.4\textwidth]{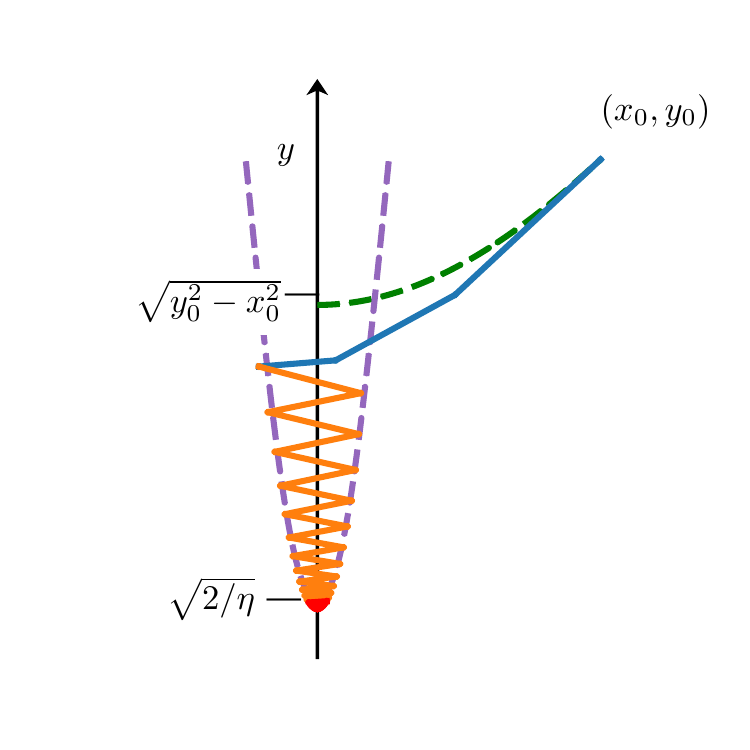}
\caption{\footnotesize \emphh{Two regimes for GD.} We run GD on the square root loss with step size $\frac14$. The gradient flow regime is illustrated on the left for $(x_0,y_0)=(3,4)$. GD (blue) tracks the gradient flow (green) when $\eta<2/(y_0^2-x_0^2)$. Otherwise, as illustrated on the right for $(x_0,y_0)=(3,6)$, GD is in the EoS regime and goes through a gradient flow phase (blue), an intermediate bouncing phase (orange) that tracks the quasi-static envelope (purple), and a converging phase (red).}
\label{fig:overview}
\end{figure}

Next, we next present detailed illustrations of the edge-of-stability regime depending on the choice of step size. Compare this plot with our theoretical results characterized in \autoref{thm:EOS}.

\begin{figure}[H] 
\centering\includegraphics[height=0.4\textheight]{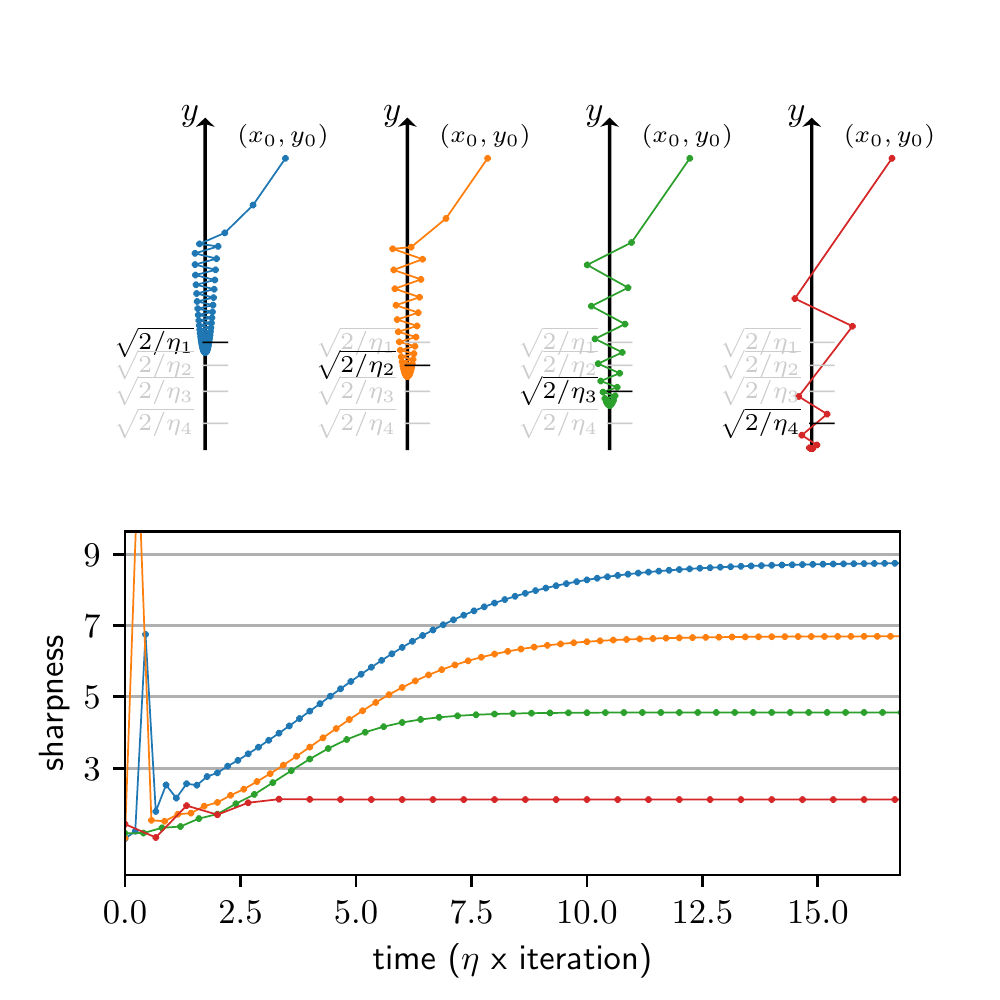} 
\caption{\footnotesize{ We plot the GD trajectory for $\ell(s)=\sqrt{1+s^2}$ and sharpness  for step sizes $2/\eta_1 = 9$, $2/\eta_2 = 7$, $2/\eta_3 = 5$, and $2/\eta_4 = 3$. In the EoS regime, the final sharpness is close to $2/(\text{step size})$.}}
\label{fig:illustration_eos}
\end{figure}

Lastly, we present another detailed illustrations of \autoref{thm:EOS} in terms of its dependence on $\beta$.

\begin{figure}[H]
\centering
\includegraphics[width=.4\textwidth]{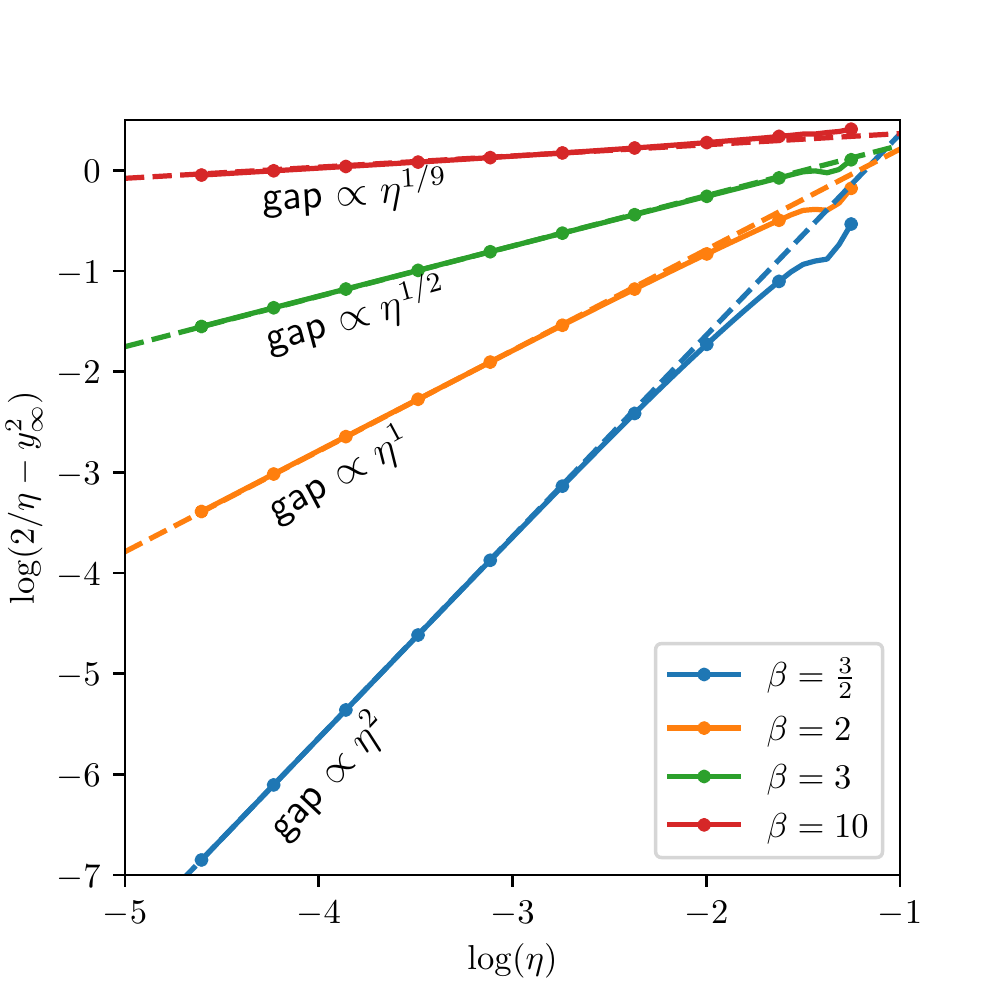}
\includegraphics[width=.4\textwidth]{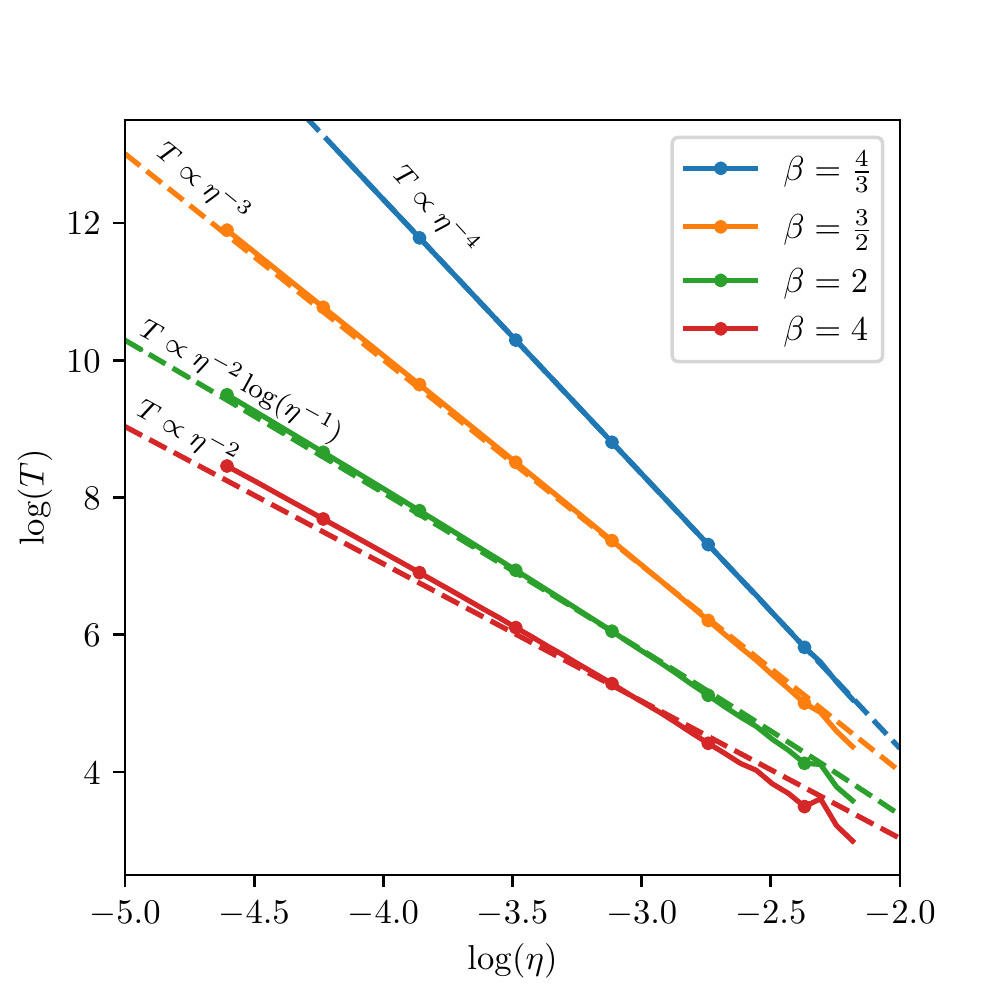}
\caption{\footnotesize (Left) Log-log plot of the \emphh{sharpness gap} as a function of $\eta$, for $\ell_\beta$ in \autoref{ex:higher_order} and $\beta=\frac32,\,2,\,3,\,10$. 
(Right) Log-log plot of the \emphh{iteration count} for the bouncing region with $y_t^2\in[\frac{2}{\eta},\frac{3}{\eta}]$ as a function of $\eta$, for $\ell_\beta$ in \autoref{ex:higher_order} and $\beta=\frac43,\,\frac32,\,2,\,4$. 
\emphh{The dashed lines show the predicted sharpness gap and iteration count} with an offset computed via linear regression of the data for $\eta<e^{-2}$. }
\label{fig:gap_time}
\end{figure}

\section{Further experimental results}\label{scn:more_exp}

In this section, we report further experimental results which demonstrate that our theory, while limited to the specific models we study (namely, the single-neuron example and the mean model), is in fact indicative of behaviors commonly observed in more realistic instances of neural network training.
In particular, we show that threshold neurons often emerge in the presence of oscillations in the other weight parameters of the network.

\subsection{Experiments for the full sparse coding model}\label{sec:sparse_coding}

We provide the details for the top plot of  \autoref{fig:real_exp}.
consider the sparse coding model in the form~\eqref{eq:sparse_coding}.
Compared to \eqref{exp:relu_network}, we assume that the basis vectors are unknown, and the neural network learn them through additional parameters  $\bm{W} =(\bm{w}_i)_{i=1}^m$ together with $m$ different  weights $\bm{a} =(a_i)_{i=1}^m$ for the second layer as follows:
\begin{align} \label{exp:unshared_relu}
f(\bx; \bm a, \bm W, b)
&= \sum_{i=1}^m a_i \ReLU\bigl(\langle \bm w_i, \bm x\rangle + b\bigr)\,.
\end{align}

We show results for $d=100$, $m=2000$. We generate $n= 20000$ data points according to the aforementioned sparse coding model with $\lambda = 5$.  We use the He initialization, i.e., $\bm a \sim \NN(0, I_m/m)$, $\bm w \sim \NN(0, I_d/d)$, and $b=0$.  
As shown in the top plot of   \autoref{fig:real_exp}, the bias decreases more with the large learning rate. 
Further, we report the behavior of the average of second layer weights in \autoref{fig:two_layer_oscilation} (left), and confirm that the sum oscillates.

\begin{figure}[H]
\centering
\includegraphics[width=0.45\textwidth]{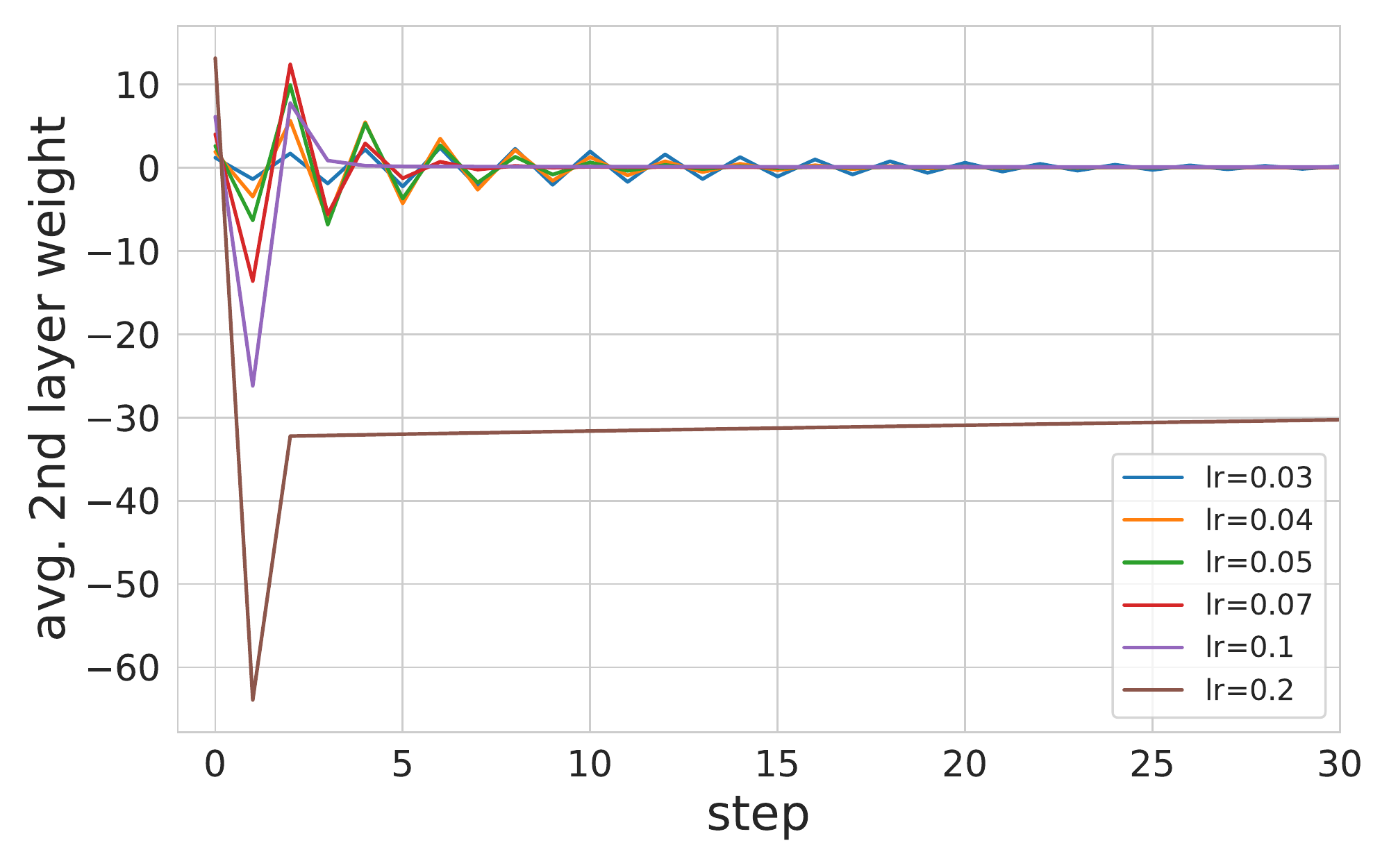} 
\includegraphics[width=0.45\textwidth]{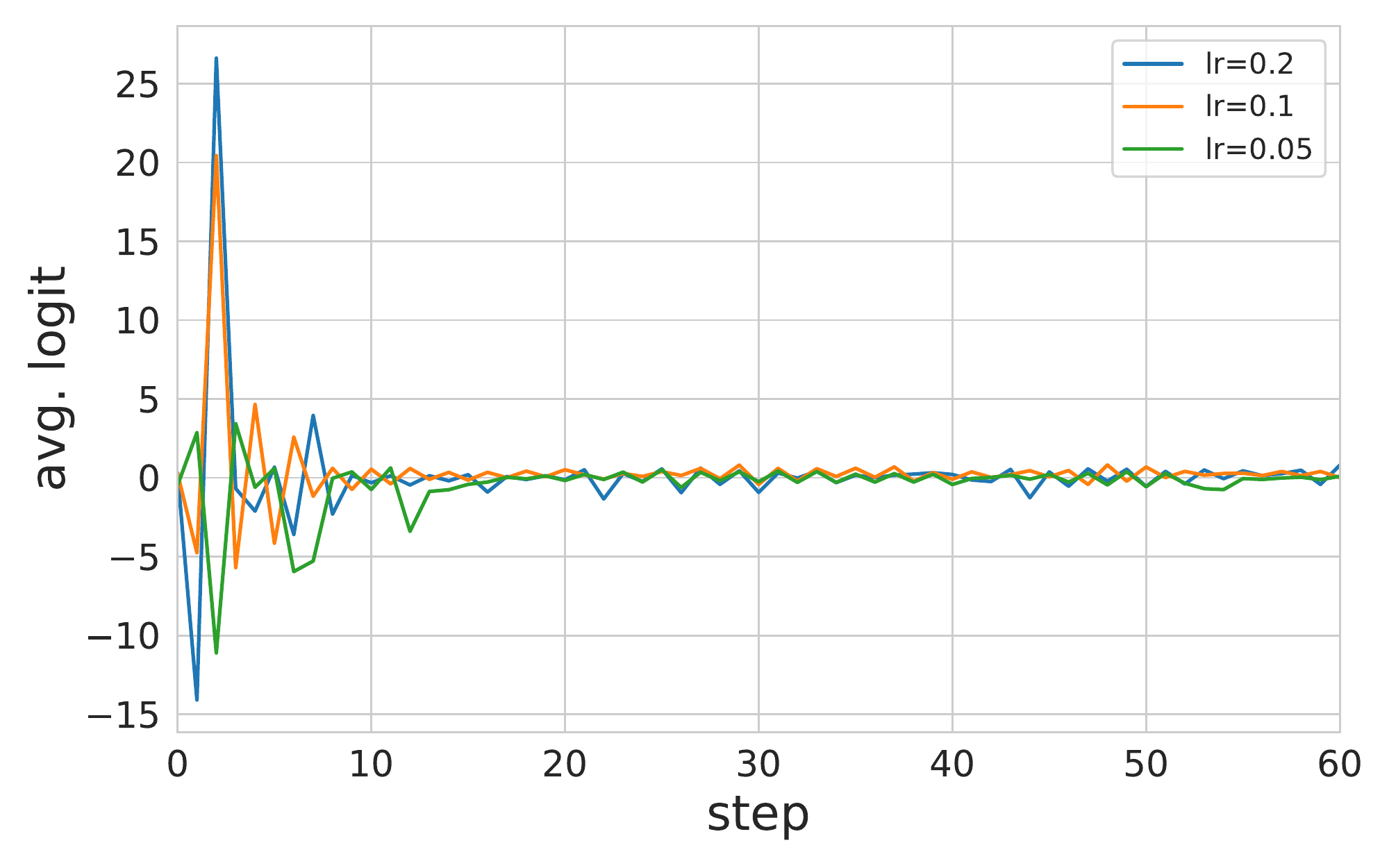}
\caption{\footnotesize (\emphh{Left}) The average of the second layer weights of the ReLU network \eqref{exp:unshared_relu}. Note that the average value  oscillates similarly to our findings for the mean model. (\emphh{Right}) \emphh{Oscillation of logit of ResNet18 model averaged over the (binary) CIFAR-10 training set. } Since the dataset is binary, the logit is simply a scalar.}
\label{fig:two_layer_oscilation}
\end{figure}

\subsection{Experiments on the CIFAR-10 dataset}

Next, we provide the details for the bottom plot of  \autoref{fig:real_exp}. We train ResNet-18 on a binarized version of the CIFAR-10 dataset formed by taking only the first two classes; this is done for the purpose of monitoring the average logit of the network.
The average logit is measured over the entire training set. 
The median bias is measured at the last convolutional layer right before the pooling. For the optimizer, we use full-batch GD with no momentum or weight decay, plus a cosine learning rate scheduler where learning rates shown in the plots are the initial values.

\emphh{Oscillation of expected output (logit) of the network.} 
Bearing a striking resemblance to our two-layer models, as one can see from \autoref{fig:two_layer_oscilation} (right) that the expected mean of the output (logit) of the deep net also oscillates due to GD dynamics. As we have argued in the previous sections, this occurs as the bias parameters are driven towards negative values.

\emphh{Results for SGD.}
In~\autoref{fig:sgd_resnet}, we report qualitatively similar phenomena when we instead train ResNet-18 with stochastic gradient descent (SGD), where we use all ten classes of CIFAR-10. 
Again, the median bias is measured at the last convolutional layer.
We further report the average activation which is the output of the  ReLU activation at the last convolutional layer, averaged over the neurons and the entire training set.
The average activation statistics represent the hidden representations  before the linear classifier part, and lower values represent sparser representations. 
Interestingly,  the threshold neuron  also emerges with larger step sizes similarly to the case of gradient descent.

\begin{figure}[H]
\centering 
\includegraphics[width=0.8\textwidth]{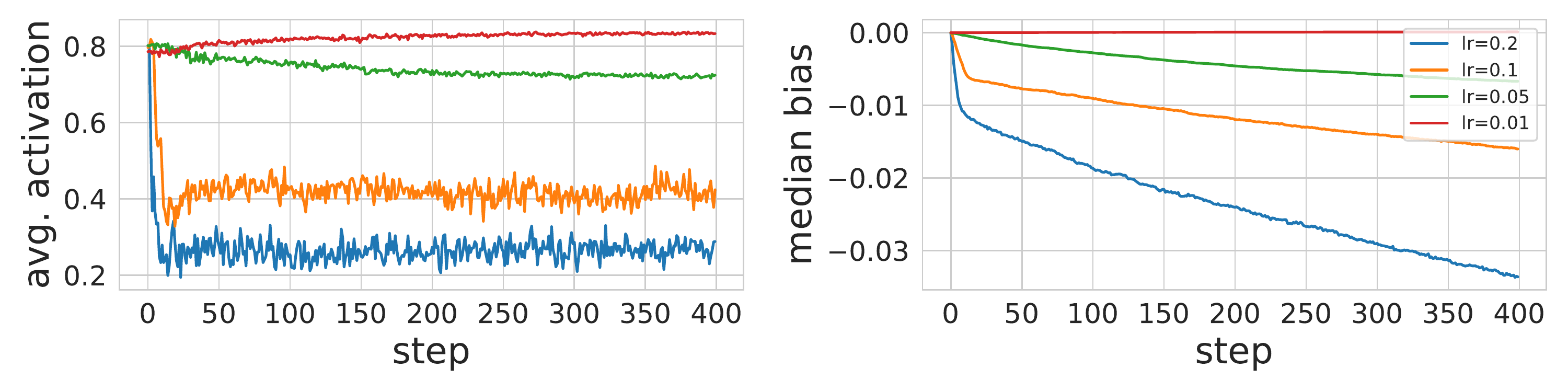}

\includegraphics[width=0.8\textwidth]{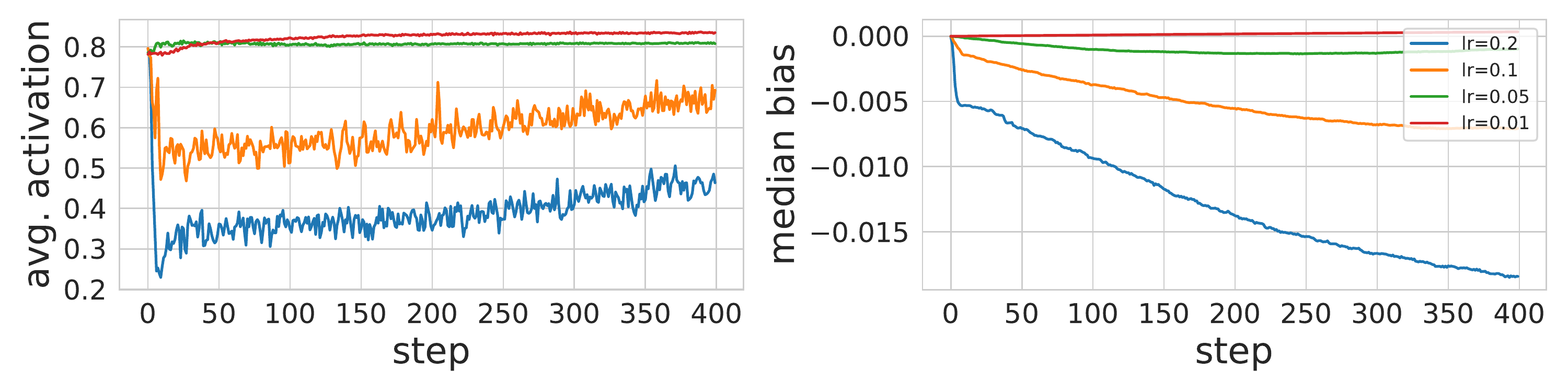}

\caption{\footnotesize  {SGD dynamics of ResNet-18 on (multiclass) CIFAR-10 with various learning rates and batch sizes}. (Top) batch size $100$; (Bottom) batch size $1000$.
The results are consistent across different batch sizes.}
\label{fig:sgd_resnet}
\end{figure}

\section{Proofs for the single-neuron linear network}\label{scn:lxy_pfs}
We start by describing basic and relevant properties of the model and the assumptions on $\ell$.

\emphh{Basic properties.} 
If $\ell$ is minimized at $0$, then the \emph{global minimizers} of $f$ are the $x$- and $y$-axes. The \emph{gradient and Hessian} of $f$ are given by:
\begin{align}
\nabla f(x,y)
&= \ell'(xy)\, \begin{bmatrix} y \\ x \end{bmatrix}\,, \\
\nabla^2 f(x,y)
&= \ell''(xy) \,\begin{bmatrix} y \\ x \end{bmatrix}^{\otimes 2} + \ell'(xy) \,\begin{bmatrix} 0 & 1 \\ 1 & 0 \end{bmatrix}\,. \label{eq:hessian_lxy}
\end{align}
This results in GD \emph{iterates}  $x_t,y_t$, for step size $\eta > 0$ and iteration $t\ge 0$:
\begin{align}
x_{t+1} &= x_t - \eta\, \ell'(x_ty_t)\,y_t\,, \notag \\
y_{t+1} &= y_t - \eta \,\ell'(x_ty_t)\,x_t\,. \label{eq:y_update}
\end{align}

\begin{lemma}[invariant lines] \label{lemma:invariant_lines}
Assume that $\ell$ is even, so that $\ell'$ is odd.
Then, the lines $y=\pm x$ are invariant for gradient descent on $f$.
\end{lemma}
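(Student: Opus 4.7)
The plan is to verify invariance of each of the two lines by a direct substitution into the GD update~\eqref{eq:y_update}. Since each line is a one-dimensional subspace through the origin, it suffices to show that whenever $(x_t,y_t)$ satisfies the defining relation of the line, so does $(x_{t+1},y_{t+1})$. This reduces the lemma to two algebraic checks.

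First I would treat the diagonal $y=x$. Setting $y_t = x_t$, the product $x_t y_t$ equals $x_t^2$, and the two coordinate updates collapse into the \emph{same} expression $x_t - \eta\,\ell'(x_t^2)\,x_t$, which immediately gives $y_{t+1} = x_{t+1}$. This case uses only the symmetry of the GD update under swapping the two coordinates, and does not actually invoke any assumption on $\ell$.

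Next I would treat the anti-diagonal $y=-x$. Setting $y_t = -x_t$, the product $x_t y_t$ equals $-x_t^2$, so the updates become $x_{t+1} = x_t\,(1+\eta\,\ell'(-x_t^2))$ and $y_{t+1} = -x_t\,(1+\eta\,\ell'(-x_t^2))$, from which $y_{t+1} = -x_{t+1}$ follows by inspection. Using that $\ell'$ is odd (which follows from the assumed evenness of $\ell$), I can further simplify $\ell'(-x_t^2) = -\ell'(x_t^2)$ and thereby present both cases uniformly as the scalar dynamics $x_{t+1} = (1-\eta\,\ell'(x_t^2))\,x_t$, which is itself the GD iteration for the one-variable loss $x \mapsto \ell(\pm x^2)$.

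There is no real obstacle here: the lemma is a one-line symmetry check, and its proof will fit in just a few displays. Its role in the sequel is structural rather than technical — ruling out crossings of these invariant lines justifies the WLOG reduction $y_t > |x_t|$ stated at the start of \autoref{scn:two_regimes_single}, which in turn underlies the two-regime dichotomy in \autoref{thm:GF} and \autoref{thm:EOS}.
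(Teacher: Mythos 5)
Your proof is correct and is essentially the same direct substitution check as the paper's, which handles both lines at once with a $\pm$/$\mp$ shorthand where you split into two explicit cases. One tiny slip in your closing aside: the restricted scalar dynamics $x_{t+1}=(1-\eta\,\ell'(x_t^2))\,x_t$ is GD on $x\mapsto \tfrac{1}{2}\,\ell(x^2)$ (note the factor $\tfrac12$), not on $x\mapsto \ell(\pm x^2)$; this does not affect the lemma itself.
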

\begin{proof}
If $y_t = \pm x_t$, then
\begin{align*}
y_{t+1}
&= y_t - \eta \, \ell'(x_t y_t) \, x_t
= \pm x_t \mp \eta \, \ell'(x_t^2) \,x_t\,, \\
x_{t+1}
&= x_t - \eta \, \ell'(x_t y_t) \, y_t
= x_t - \eta \, \ell'(x_t^2) \, x_t\,,
\end{align*}
and hence $y_{t+1} = \pm x_{t+1}$. Note that the iterates ${(x_t)}_{t\ge 0}$ are the iterates of GD with step size $\eta$ on the one-dimensional loss function $x\mapsto \frac{1}{2} \, \ell(x^2)$.
\end{proof}
We focus instead on initializing away from these two lines. We now state our assumptions on $\ell$.

%

We gather together some elementary properties of $\ell$.

\begin{lemma}[properties of $\ell$] \label{lemma:properties_ell}
Suppose that Assumption~\ref{A:cvx_lips} holds.
\begin{enumerate}
\item $\ell$ is minimized at the origin and $\ell'(0) = 0$.
\item Suppose that $\ell$ is four times continuously differentiable near the origin. If Assumption~\ref{A:origin_rate} holds, then $\ell^{(4)}(0) \le 0$. Conversely, if $\ell^{(4)}(0) < 0$, then Assumption~\ref{A:origin_rate} holds for $\beta = 2$.
\end{enumerate}
\end{lemma}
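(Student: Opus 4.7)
The plan is to treat the two parts of the lemma as straightforward consequences of symmetry, convexity, and a Taylor expansion at the origin, organized as follows.

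For part 1, I would use the fact that $\ell$ is even and differentiable at $0$ (part of Assumption \ref{A:cvx_lips}, which includes $\mathcal{C}^2$ regularity near the origin). Evenness $\ell(-s)=\ell(s)$ differentiates to give $\ell'(-s)=-\ell'(s)$, so $\ell'$ is odd and in particular $\ell'(0)=0$. Convexity of $\ell$ then implies that any critical point is a global minimizer, so the origin is a global minimum of $\ell$.

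For part 2, I would exploit the fact that under the $\mathcal{C}^4$ assumption, evenness forces the odd derivatives $\ell'(0)$ and $\ell'''(0)$ to vanish. Combined with $\ell''(0)=1$, Taylor's theorem yields
\begin{align*}
\ell'(s) = s + \tfrac{1}{6}\,\ell^{(4)}(0)\,s^3 + o(s^3)\,,\qquad\text{so}\qquad \frac{\ell'(s)}{s} = 1 + \tfrac{1}{6}\,\ell^{(4)}(0)\,s^2 + o(s^2)\,,
\end{align*}
as $s\to 0$. For the forward direction, suppose Assumption \ref{A:origin_rate} holds with some $\beta>1$ and constant $c>0$. Then $\ell'(s)/s \le 1 - c|s|^\beta < 1$ for all small $s\ne 0$, so
\begin{align*}
\tfrac{1}{6}\,\ell^{(4)}(0)\,s^2 + o(s^2) \le -c\,|s|^\beta\quad\text{for all small }s\ne 0\,.
\end{align*}
Dividing by $s^2$ and letting $s\to 0$, if $\beta\ge 2$ the right-hand side is bounded above by $0$ and if $\beta<2$ it tends to $-\infty$; in either case the left-hand side tends to $\frac{1}{6}\,\ell^{(4)}(0)$, forcing $\ell^{(4)}(0)\le 0$ (and in fact also ruling out $\beta<2$). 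For the converse direction, suppose $\ell^{(4)}(0)<0$. Then the same Taylor expansion gives some $c'>0$ and $\eta>0$ with
\begin{align*}
\frac{\ell'(s)}{s} \le 1 - c'\,s^2\quad\text{for all }|s|\le \eta\,,
\end{align*}
which is exactly Assumption \ref{A:origin_rate} with $\beta=2$ and constant $\min\{c',\eta\}$.

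There is no real obstacle here: the argument is essentially a matching of Taylor coefficients with the polynomial rate in \ref{A:origin_rate}. The only subtle point to state carefully is that evenness plus $\mathcal{C}^4$ regularity really does kill the odd-order terms, and that the little-$o$ remainder can be absorbed into a constant multiple of $s^2$ for the converse direction. I would present both parts in a couple of short paragraphs and not belabor the elementary algebra.
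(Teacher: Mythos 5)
Your proof is correct and follows essentially the same route as the paper: part 1 from evenness plus convexity, and part 2 by Taylor-expanding $\ell'(s)/s$ at the origin and matching the $s^2$ coefficient $\tfrac16\,\ell^{(4)}(0)$ against the decay rate $-c\,|s|^\beta$ in both directions (the paper uses the integral form of the remainder and argues about the sign of $\ell'''$ near $0$, while you use the Peano form, but the content is identical). The one point both treatments gloss over equally is that Assumption~\ref{A:origin_rate} also demands $\ell'(s)/s \le 1$ for all $s \ne 0$, not just near the origin, so no gap relative to the paper.
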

\begin{proof}
The first statement is straightforward. The second statement follows from Taylor expansion: for $s\ne 0$ near the origin,
\begin{align}\label{eq:taylor}
\frac{\ell'(s)}{s}
&= \frac{\ell'(0) + \ell''(0) \, s + \int_0^s (s-r) \, \ell'''(r) \, \D r}{s}
= 1 + \int_0^s \bigl(1 - \frac{r}{s}\bigr) \,\ell'''(r) \, \D r\,.
\end{align}
Since $\ell'''$ is odd, then Assumption~\ref{A:origin_rate} and~\eqref{eq:taylor} imply that $\ell'''$ is non-positive on $(0, \varepsilon)$ for some $\varepsilon > 0$, which in turn implies $\ell^{(4)}(0) \le 0$. Conversely, if $\ell^{(4)}(0) < 0$, then there exists $\varepsilon > 0$ such that $\ell'''(s) \le -\varepsilon s$ for $s \in (0,\varepsilon)$. From~\eqref{eq:taylor}, we see that $\ell'(s)/s \le 1-\varepsilon \int_0^s (s-r) \, \D r \le 1-\varepsilon s^2/2$. By symmetry, we conclude that Assumption~\ref{A:origin_rate} holds with $\beta = 2$ and some $c > 0$.
\end{proof}

We give some simple examples of losses satisfying our assumptions.

\begin{example}\label{ex:losses}
The examples below showcase several functions $\ell$ that satisfy Assumptions~\ref{A:cvx_lips} and~\ref{A:origin_rate} with different values of $\beta$.
\begin{itemize}
\item \emph{Rescaled and symmetrized logistic loss.} \label{ex:logistic} 
$\rlsym(s) \deq \frac{1}{2}\, \logisticloss(-2s) + \frac{1}{2}\,\logisticloss(+2s)$. 

Note $\rlsym'(s) = \tanh(s)$, thus $\rlsym'(s)/s\leq 1$ and $\rlsym'(s)/s \le 1 - \tfrac14\, |s|^2$, for $\abs s < \tfrac14$. 

\item \emph{Square root loss.} \label{ex:sqrt} 
$\sqrtloss(s) \deq \sqrt{1+s^2}$.

Note $\sqrtloss'(s) = \frac{s}{\sqrt{1+s^2}}$, thus $\sqrtloss'(s)/s \le 1$ and $\sqrtloss'(s)/s \le 1 - \tfrac25\, |s|^2$, for $\abs s < \tfrac25$.

\item \emph{Huber loss.}  \label{ex:huber} 
$\huberloss(s) \deq \frac{s^2}{2} \one\{s \in [-1,1]\} + \bigl( \abs{s}-\frac12 \bigr) \one\{s\notin [-1,1]\}$.

Note $\huberloss'(s) = s \one\{s \in [-1,1]\} +  \sign(s) \one\{s\notin [-1,1]\}$, thus $\huberloss'(s)/s \le 1$, i.e., we have Assumption~\ref{A:origin_rate} with $\beta = +\infty$.

\item \emph{Higher-order.} \label{ex:higher_order}
For $\beta > 1$ let $c_\beta \deq \frac{1}{\beta+1}\,\bigl(\frac{\beta}{\beta+1}\bigr)^{\beta}$ 
and $r_\beta\deq \frac{\beta+1}{\beta}$. 
We define $\ell_\beta$ implicitly via  its derivative  
$$\ell_\beta'(s)\deq s\,\bigl(1-\, c_\beta\,|s|^\beta\bigr)\one\{s^2 <  r_\beta^2\} 
+ \sign(s) \one\{s^2 \ge r_\beta^2\}\,.$$
By definition, $\ell_\beta'(s)/s \le 1$ and $\ell_\beta'(s)/s\le 1-c_\ell\, \abs s^\beta$, where $c_\ell = c_\beta \wedge r_\beta$.
\end{itemize}
\end{example}

\noindent We now prove our main results from \autoref{scn:lxy_results} in order.

\subsection{Approximate conservation along GD}

We begin by stating and proving the approximate conservation of $y^2 - x^2$ for the GD dynamics.

\begin{lemma}[approximately conserved quantity]\label{lemma:app_conserved_quantity}
Let $(\tilde x,\tilde y) \in\R^2$ be such that $\tilde y > \tilde x > 0$ with $\tilde y^2 - \tilde x^2 = 1$.
Suppose that we run GD on $f$ with step size $\eta$ with initial point $(x_0,y_0) \deq \sqrt{\frac{\gamma}{\eta}} \, (\tilde x, \tilde y)$, for some $\gamma > 0$. Then, there exists $\tz=O(\frac{1}{\eta})$ such that $\sup_{t \ge \tz}{\abs{x_t}} \le O(\sqrt{(\gamma^{-1} \vee \gamma) \,\eta})$ and
$$y_{\tz}^2-x_{\tz}^2 = \bigl(1-O(\eta)\bigr)\,(y_0^2-x_0^2)\,,$$
where the implied constant depends on $\tilde x$, $\tilde y$, and $\ell$.
\end{lemma}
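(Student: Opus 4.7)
The backbone of the proof is a one-line algebraic identity obtained by expanding the GD updates: the cross term $2\eta\,\ell'(x_ty_t)\,x_ty_t$ appears in both $y_{t+1}^2$ and $x_{t+1}^2$ and cancels, leaving
\begin{align*}
y_{t+1}^2 - x_{t+1}^2 = (y_t^2 - x_t^2)\,\bigl(1 - \eta^2\,\ell'(x_ty_t)^2\bigr).
\end{align*}
By the $1$-Lipschitz part of \ref{A:cvx_lips}, $|\ell'|\le 1$, so the per-step multiplicative factor lies in $[1-\eta^2, 1]$. Iterating yields $y_{t_0}^2 - x_{t_0}^2 \ge (1-\eta^2)^{t_0}\,(y_0^2 - x_0^2)$ for any $t_0$, and whenever $t_0 = O(1/\eta)$ this collapses to $(1-O(\eta))\,(y_0^2 - x_0^2)$, which is exactly the second claim of the lemma.

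The bulk of the work is thus to exhibit such a $t_0 = O(1/\eta)$ at which $|x_t|$ has reached the target scale $\sqrt{(\gamma^{-1}\vee\gamma)\,\eta}$, and to verify that this bound persists for all subsequent iterates. I would split the analysis according to the magnitude of $u_t \deq x_ty_t$. In the \emph{macroscopic} stage, while $|u_t|\gtrsim 1$, convexity of $\ell$ together with $\ell'(0)=0$ and $\ell''(0)=1$ gives $|\ell'(u_t)|\ge \ell'(1)>0$, a constant depending only on $\ell$; combined with $y_t = \Theta(\sqrt{\gamma/\eta})$ (which follows from the identity above, since $y_t^2 \ge y_t^2-x_t^2$ stays within a constant factor of $\gamma/\eta$) and the sign property $x_t\,\ell'(x_ty_t)\,y_t\ge 0$, each step shrinks $|x_t|$ by $\Omega(\sqrt{\gamma\eta})$. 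Since $|x_0|=\Theta(\sqrt{\gamma/\eta})$, a budget of $O(1/\eta)$ iterations suffices to bring $|u_t|$ below a constant. In the \emph{microscopic} stage I would Taylor-expand $\ell'$ near the origin, using $\ell''(0)=1$ and the remainder estimate from \ref{A:origin_rate}, to obtain the linearized update $x_{t+1}\approx (1-\eta y_t^2)\,x_t$; standard contraction/overshoot analysis then pins $|x_t|$ within the stated window.

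The main obstacle I anticipate is uniformity in $\gamma$: the same conclusion must hold in both the sub-critical regime $\gamma<2$ (monotone decay of $x_t$) and the super-critical regime $\gamma>2$ (bouncing across the $y$-axis). The identity from the first paragraph is insensitive to this dichotomy, so the approximate-conservation claim is robust. The uniform supremum bound, by contrast, is delicate in the super-critical regime because $|x_t|$ is non-monotone. My plan there is to enclose the bouncing inside an envelope of the form $|x_t|\lesssim 1/y_t$, with the $O(1)$ constant controlled by $|\ell'|\le 1$ and the near-constancy of $y_t$; combined with $y_t=\Theta(\sqrt{\gamma/\eta})$ this gives $|x_t|=O(\sqrt{\eta/\gamma})$. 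In the sub-critical regime the decay of $|x_t|$ is essentially monotone and the bound is governed by the single-step overshoot of at most $O(\eta y_t)=O(\sqrt{\gamma\eta})$. Taking the maximum of these two bounds recovers the stated $O(\sqrt{(\gamma^{-1}\vee\gamma)\,\eta})$, and the approximate-conservation identity closes the argument once the envelope is in place.
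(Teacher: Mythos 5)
Your proposal is correct and follows essentially the same route as the paper: the exact multiplicative identity $y_{t+1}^2-x_{t+1}^2=(1-\eta^2\,\ell'(x_ty_t)^2)\,(y_t^2-x_t^2)$ with $\abs{\ell'}\le 1$, a per-step decrement of $\Theta(\sqrt{\gamma\eta})$ while $\abs{x_ty_t}$ exceeds a constant (giving $\tz=O(1/\eta)$ and $\abs{x_{\tz}}\lesssim\sqrt{\eta/\gamma}$ or $\sqrt{\gamma\eta}$), and the overshoot bound $\abs{x_{t+1}}\le\eta\,\abs{\ell'(x_ty_t)}\,y_t\le\eta y_t=O(\sqrt{\gamma\eta})$ when $x$ changes sign. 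The only divergence is your proposed bouncing envelope $\abs{x_t}\lesssim 1/y_t$, which is unnecessary for this lemma: the sign dichotomy (same sign $\Rightarrow\abs{x_{t+1}}\le\abs{x_t}$; opposite sign $\Rightarrow\abs{x_{t+1}}\le\eta y_t$) already yields the uniform supremum bound in both the sub- and super-critical regimes, so no Taylor expansion or quasi-static control is needed here.
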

\begin{proof}
Let $D_t \deq y_t^2-x_t^2$ and note that
\begin{align} 
D_{t+1} &= \bigl(y_t-\eta \,\ell'(x_t y_t)\,x_t\bigr)^2 - \bigl(x_t-\eta \,\ell'(x_t y_t)\,y_t\bigr)^2 \notag  \\
& = \bigl(1- \eta^2 \,\ell'(x_t y_t)^2\bigr)\, D_t\,. \label{eq:y2_x2} 
\end{align}
Since $\ell$ is $1$-Lipschitz, then $D_{t+1}=(1-O(\eta^2))\,D_t$. 

This shows that for $t \lesssim 1/\eta^2$, we have $y_t^2 - x_t^2 = D_t \gtrsim D_0 = y_0^2 - x_0^2 \asymp \gamma/\eta$.
Since $\ell''(0) = 1$, there exist constants $c_0, c_1 > 0$ such that $\ell'(\abs{xy}) \ge \ell'(c_0) \ge c_1$ whenever $\abs{xy} \ge c_0$.
Hence, for all $t \ge 1$ such that $t \lesssim 1/\eta^2$, $x_t > 0$, and $\abs{x_t y_t} \ge c_0$, we have $y_t^2 \gtrsim \gamma/\eta$ and
\begin{align}\label{eq:approx_conserve}
x_{t+1}
&= x_t - \eta\, \ell'(x_t y_t) \, y_t
= x_t - \Theta(\eta y_t)
= x_t - \Theta\bigl(\sqrt{\gamma \eta}\bigr)\,.
\end{align}
Since $x_0 \asymp \sqrt{\gamma/\eta}$, this shows that after at most $O(1/\eta)$ iterations, we must have either $x_t < 0$ or $\abs{x_t y_t} \le c_0$ for the first time.
In the first case,~\eqref{eq:approx_conserve} shows that $\abs{x_t} \lesssim \sqrt{\gamma\eta}$. In the second case, since $y_t^2 \gtrsim \gamma/\eta$, we have $\abs{x_t} \lesssim \sqrt{\eta/\gamma}$.
Let $\tz$ denote the iteration at which this occurs.

Next, for iterations $t\ge \tz$, we use the dynamics~\eqref{eq:approx_conserve} for $x$ and the fact that $\ell'(x_t y_t)$ has the same sign as $x_t$ to conclude that there are two possibilities: either $x_{t+1}$ has the same sign as $x_{t}$, in which case $\abs{x_{t+1}} \le \abs{x_{t}}$, or $x_{t+1}$ has the opposite sign as $x_{t}$, in which case $\abs{x_{t+1}} \le \eta \, \abs{\ell'(x_{t} y_{t})} \, y_{t} \le \eta y_{t} \le O(\sqrt{\gamma\eta})$.
This implies $\sup_{t\ge \tz}{\abs{x_{t}}} \le O(\sqrt{(\gamma^{-1} \vee \gamma)\,\eta})$ as asserted.
\end{proof}

\subsection{Gradient flow regime}\label{sec:2d_small}

In this section, we prove~\autoref{thm:GF}. 
From \autoref{lemma:app_conserved_quantity}, there exists an iteration $t_0$ such that $\abs{x_{t_0}} \lesssim \sqrt{\eta/(2-\delta)}$ and
\begin{align*}
\frac{2-\delta}{\eta} - O(2-\delta)
&\le y_{t_0}^2
\le \frac{2-\delta}{\eta} + x_{t_0}^2
\le \frac{2-\delta}{\eta} + O\bigl(\frac{\eta}{2-\delta}\bigr)\,.
\end{align*}
In particular, $C \deq \abs{x_{t_0} y_{t_0}} \lesssim 1$.

We prove by induction the following facts: for $t\ge t_0$,
\begin{enumerate}
\item $\abs{x_t y_t} \le C$.
\item $\abs{x_t} \le \abs{x_{t_0}} \exp(-\Omega(\alpha \, (t-t_0)))$, where $\alpha \deq \min\{\delta, 2-\delta\}$.
\end{enumerate}
Suppose that these conditions hold up to iteration $t\ge t_0$.
By Assumption~\ref{A:origin_rate}, we have $\abs{\ell'(s)} \le \abs s$ for all $s\ne 0$.
Therefore,
\begin{align}
y_{t+1}
&= y_t - \eta \, \ell'(x_t y_t) \, x_t
\ge (1-\eta x_t^2) \, y_t \\
&\ge \exp\Bigl( - O\bigl( \frac{\eta^2}{2-\delta} \bigr) \exp\bigl(-\Omega(\alpha \, (t-t_0))\bigr) \Bigr) \, y_t \\
&\ge \exp\Bigl( - O\bigl( \frac{\eta^2}{2-\delta} \bigr) \sum_{s=t_0}^t \exp\bigl(-\Omega(\alpha \, (s-t_0))\bigr) \Bigr) \, y_{t_0}
\ge \exp\Bigl( - O\bigl( \frac{\eta^2}{\alpha \, (2-\delta)}\bigr)\Bigr) \, y_{t_0}\,, \\
y_{t+1}^2
&\ge \frac{2-\delta}{\eta} - O(2-\delta) - O\bigl( \frac{\eta}{\alpha}\bigr)\,.\label{eq:gf_final_sharpness}
\end{align}
In particular, $\frac{1}{2}\, \frac{2-\delta}{\eta} \le y_t^2 \le \frac{2-\delta/2}{\eta}$ throughout.
In order for these assertions to hold, we require $\eta^2 \lesssim \alpha \, (2-\delta)$, i.e., $\eta \lesssim \min\{\sqrt\delta, 2-\delta\}$.

Next, we would like to show that $t\mapsto \abs{x_t}$ is decaying exponentially fast. Since
\begin{align*}
\abs{x_{t+1}}
&= \abs{x_t - \eta \, \ell'(x_t y_t) \, y_t}
= \bigl\lvert\,\abs{x_t} - \eta\,\ell'(\abs{x_t}\,y_t)\,y_t\, \bigr\rvert\,,
\end{align*}
it suffices to consider the case when $x_t > 0$. Assumption~\ref{A:origin_rate} implies that
\begin{align*}
x_{t+1}
&\ge (1 - \eta y_t^2) \, x_t
\ge -\bigl(1-\frac{\delta}{2}\bigr) \, x_t\,.
\end{align*}
For the upper bound, we split into two cases. We begin by observing that since $\ell$ is twice continuously differentiable near the origin with $\ell''(0) = 1$, there is a constant $\varepsilon_0$ such that $\abs s < \varepsilon_0$ implies $\abs{\ell'(s)} \ge \frac{1}{2} \, \abs s$.
If $s_t \deq x_t y_t \le \varepsilon_0$, then
\begin{align*}
x_{t+1}
&\le \bigl(1 - \frac{\eta}{2} \, y_t^2\bigr)\,x_t
\le \bigl(1 - \frac{2-\delta}{4}\bigr) \, x_t\,.
\end{align*}
Otherwise, if $s_t \ge \varepsilon_0$, then
\begin{align*}
x_{t+1}
&\le x_t - \eta \, \ell'(\varepsilon_0) \, y_t
\le x_t - \eta \, \ell'(\varepsilon_0) \, \frac{y_t^2}{s_t}
\le x_t - \eta \, \ell'(\varepsilon_0) \, \frac{2-\delta}{2C\eta}\, x_t
\le \bigl(1 - \Omega(2-\delta)\bigr) \, x_t\,.
\end{align*}
Combining these inequalities, we obtain
\begin{align*}
\abs{x_{t+1}}
&\le \abs{x_t} \exp\bigl(-\Omega(\alpha)\bigr)\,.
\end{align*}
This verifies the second statement in the induction.
The first statement follows because both $t\mapsto \abs{x_t}$ and $t\mapsto y_t$ are decreasing.

This shows in particular that $\abs{x_t} \searrow 0$, i.e., we have global convergence.
To conclude the proof, observe that~\eqref{eq:gf_final_sharpness} gives a bound on the final sharpness.

\begin{remark}
The proof also gives us estimates on the convergence rate.
Namely, from Lemma~\ref{lemma:app_conserved_quantity}, the initial phase in which we approach the $y$-axis takes $O(\frac{1}{\eta})$ iterations.
For the convergence phase, in order to achieve $\varepsilon$ error, we need $\abs{x_t} \lesssim \frac{\sqrt{\varepsilon\eta}}{\sqrt{2-\delta}}$; hence, the convergence phase needs only $O(\frac{1}{\alpha} \log \frac{1}{\varepsilon})$ iterations. Note that the rate of convergence in the latter phase does not depend on the step size $\eta$.
\end{remark}

\subsection{EoS regime: proof outline}

\label{sec:2d_large_outline}

We give a brief  outline of the proof of~\autoref{thm:EOS}:
As before, \autoref{lemma:app_conserved_quantity} shows that GD reaches the $y$-axis approximately at $(0,\sqrt{y_0^2-x_0^2})$.
At this point, $x$ starts bouncing while $y$ steadily decreases, and we argue that unless $x_t = 0$ or $y_t^2 \le 2/\eta$, the GD dynamics cannot stabilize (see~\autoref{lemma:crossing}).

To bound the gap $2/\eta - y_\infty^2$, we look at the first iteration $\tcr$ such that $y_{\tcr}^2$ crosses $2/\eta$.
By making use of Assumption~\ref{A:origin_rate}, we simultaneously control both the convergence rate of $\abs{x_t}$ to zero and the decrease in $y_t^2$ in order to prove that
\begin{align}\label{eq:gap_controlled_by_s}
y_\infty^2 \ge \frac{2}{\eta} - O(\abs{x_{\tcr} y_{\tcr}})\,,
\end{align}
see~\autoref{prop:phase_3}.
Therefore, to establish~\autoref{thm:EOS}, we must bound $\abs{x_{\tcr} y_{\tcr}}$ at iteration $\tcr$.

Controlling the size of $\abs{x_{\tcr} y_{\tcr}}$, however, is surprisingly delicate as it requires a fine-grained understanding of the bouncing phase.
The insight that guides the proof is the observation that during the bouncing phase, the GD iterates lie close to a certain envelope (\autoref{fig:overview}).
This envelope is predicted by the quasi-static heuristic as described in~\cite{ma2022multiscale}.
Namely, suppose that after one iteration of GD, we have perfect bouncing: $x_{t+1} = -x_t$.
Substituting this into the GD dynamics, we obtain the equation
\begin{align}\label{eq:quasistatic}
\eta\, \ell'(x_t y_t) \, y_t = 2x_t\,.
\end{align}
According to Assumption~\ref{A:origin_rate}, we have $\ell'(x_t y_t) = x_t y_t \, (1-\Omega(\abs{x_t y_t}^\beta))$,
Together with~\eqref{eq:quasistatic}, if $y_t^2 = (2+\delta_t)/\eta \ge 2/\eta$, where $\delta_t$ is sufficiently small, it suggests that
\begin{align}\label{eq:predict_s}
\abs{x_t y_t} \lesssim \delta_t^{1/\beta}\,.
\end{align}
The quasi-static prediction~\eqref{eq:predict_s} fails when $\delta_t$ is too small. Nevertheless, we show that it remains accurate as long as $\delta_t \gtrsim \eta^{\beta/(\beta-1)}$, and consequently we obtain $\abs{x_{\tcr} y_{\tcr}} \lesssim \eta^{1/(\beta-1)}$.
Combined with~\eqref{eq:gap_controlled_by_s}, it yields~\autoref{thm:EOS}.


\subsection{EoS regime: crossing the threshold and the convergence phase}
\label{sec:2d_large}

In this section, we prove~\autoref{thm:EOS}. We first show that $y_t^2$ must cross $\nicefrac{2}{\eta}$ in order for GD to converge, and we bound the size of the jump across $\nicefrac{2}{\eta}$ once this happens.

Throughout this section and the next, we use the following notation:
\begin{itemize}
\item $s_t \deq x_t y_t$;
\item $r_t \deq \ell'(s_t)/s_t$.
\end{itemize}
In this notation, we can write the GD equations as
\begin{align*}
x_{t+1}
&= (1-\eta r_t y_t^2) \, x_t\,, \\
y_{t+1}
&= (1-\eta r_t x_t^2) \, y_t\,.
\end{align*}

We also make a remark regarding 
Assumption~\ref{A:origin_rate}. If $\beta < +\infty$, then Assumption~\ref{A:origin_rate} is equivalent to the following seemingly strongly assumption: for all $r > 0$, there exists a constant $c(r) > 0$ such that
\begin{align}\label{eq:A2_strong}\tag{{A2${}^+$}}
\frac{\ell'(s)}{s}
&\le 1- c(r) \, \abs s^\beta\,, \qquad\text{for all}~0 < \abs s \le r\,.
\end{align}
Indeed, Assumption~\ref{A:origin_rate} states that~\eqref{eq:A2_strong} holds for \emph{some} $r > 0$.
To verify that~\eqref{eq:A2_strong} holds for some larger $r' > r$, we can split into two cases. If $\abs s \le r$, then $\ell'(s)/s\le 1-c\,\abs s^\beta$. Otherwise, if $\abs s > r$, then $\ell'(r)/r < 1$ and the $1$-Lipschitzness of $\ell'$ imply that $\ell'(s)/s < 1$ for $r \le \abs s \le r'$, and hence  $\ell'(s)/s \le 1-c'\,|s|^\beta$, for a sufficiently small constant $c'>0$; thus we can take $c(r')= c \wedge c'$.
Later, we will invoke~\eqref{eq:A2_strong} with $r$ chosen to be a universal constant, so that $c(r)$ can also be thought of as universal.

We begin with the following result about the limiting value of $y_t$.

\begin{lemma}[threshold crossing]\label{lemma:crossing}
Let $(\tilde x,\tilde y) \in\R^2$ satisfy $\tilde y > \tilde x > 0$ with $\tilde y^2 - \tilde x^2 = 1$.
Suppose we initialize GD with step size $\eta$ with initial point $(x_0,y_0) \deq \sqrt{\frac{2+\delta}{\eta}} \, (\tilde x, \tilde y)$, where $\delta > 0$ is a constant. Then either $x_t=0$ for some $t$ or
\begin{align*}
\lim_{t\to\infty} y_t^2
&\le \frac{2}{\eta}\,.
\end{align*}
\end{lemma}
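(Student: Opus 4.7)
The plan is to use \autoref{lemma:app_conserved_quantity} to pass the initial transient, then show that $y_t^2$ becomes monotone decreasing to some limit $y_\infty^2$, and finally derive a contradiction from the hypothesis $y_\infty^2 > 2/\eta$ via a local expansion argument near the $y$-axis. The key observation driving the argument is that once $|x_t|$ is small, the factor governing the $x$-update, $1-\eta r_t y_t^2$ with $r_t \deq \ell'(s_t)/s_t$, is close to $1-\eta y_\infty^2$, which is expanding whenever $\eta y_\infty^2 > 2$; this contradicts the requirement that $|x_t|$ itself must converge to $0$ in order for $y_t^2$ to stabilize above $2/\eta$.

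\emph{Setup and monotonicity.} Apply \autoref{lemma:app_conserved_quantity} with $\gamma = 2+\delta$ to obtain $t_0 = O(1/\eta)$ after which $|x_t| \le O(\sqrt{\eta})$. Identity~\eqref{eq:y2_x2} shows that $y_t^2 - x_t^2$ is non-increasing and non-negative (using $|\ell'|\le 1$ and $\eta<1$), so $y_t^2 = O(1/\eta)$ and $|s_t|=|x_t|\,y_t = O(1)$ for $t\ge t_0$. Since $r(s)\deq \ell'(s)/s$ extends continuously to $r(0)=\ell''(0)=1$ and is strictly positive for $s\ne 0$, there is a constant $r_{\min}>0$ with $r_t\ge r_{\min}$ for all $t\ge t_0$. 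Rearranging the $y$-update yields $y_{t+1}^2 = y_t^2\,(1-\eta r_t x_t^2)^2$, and since $\eta r_t x_t^2 = O(\eta^2) < 2$ for $t\ge t_0$, the sequence $y_t^2$ is non-increasing past $t_0$. Thus $y_\infty^2 \deq \lim_t y_t^2$ exists.

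\emph{Contradiction.} Assume $x_t\ne 0$ for all $t$ and, for contradiction, $y_\infty^2 > 2/\eta$. The telescoping product $\prod_{s\ge t_0}(1-\eta r_s x_s^2)^2 = y_\infty^2/y_{t_0}^2 > 0$ converges to a positive limit, forcing $\sum_{s\ge t_0}\eta r_s x_s^2 < \infty$; combined with $r_s \ge r_{\min}$ this gives $\sum x_s^2 < \infty$ and in particular $x_s\to 0$. Then $s_s=x_s\,y_s\to 0$ (as $y_s$ is bounded), so $r_s\to 1$ by continuity, and hence
\[
\Bigl|\frac{x_{s+1}}{x_s}\Bigr| = |1-\eta r_s y_s^2| \longrightarrow \eta y_\infty^2 - 1 > 1,
\]
so $|x_s|$ must eventually grow geometrically, contradicting $x_s\to 0$. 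Therefore $y_\infty^2 \le 2/\eta$.

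\emph{Main obstacle.} The key subtlety is establishing the uniform bound $r_t\ge r_{\min}>0$: without it, convergence of $\sum r_s x_s^2$ would not imply $x_s\to 0$, since $r_s$ could in principle vanish when $|s_t|$ grows. The approximate conservation of $y_t^2-x_t^2$ supplied by \autoref{lemma:app_conserved_quantity} is exactly what keeps $|s_t|$ bounded in the post-transient regime, and hence keeps $r_t$ uniformly away from zero; once this is in hand, the remainder of the argument is the elementary observation that $y$-values bouncing strictly above $2/\eta$ create a linearly expanding dynamics for $x$ near the $y$-axis.
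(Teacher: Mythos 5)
Your argument is correct and rests on the same essential mechanism as the paper's proof: both derive a contradiction from the fact that once $|x_t y_t|$ is small and $\eta y_t^2 > 2$, the update $x_{t+1} = (1-\eta r_t y_t^2)\,x_t$ is expanding, which is incompatible with $y_t^2$ stabilizing above $2/\eta$. The only difference is bookkeeping: the paper shows $\liminf_t |x_t| \ge \delta > 0$ and concludes that $y_t$ must then keep decreasing past the threshold, whereas you extract $x_t \to 0$ from positivity of the infinite product $\prod_s (1-\eta r_s x_s^2)^2$ (together with the uniform lower bound $r_s \ge r_{\min}$ guaranteed by the boundedness of $|s_t|$) and contradict that directly with the expansion.
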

\begin{proof}
Assume throughout that $x_t\ne 0$ for all $t$.
Recall the dynamics for $y$:
\begin{align*}
y_{t+1}
&= y_t - \eta \, \ell'(x_t y_t) \, x_t\,.
\end{align*}
By assumption $\ell'(s)/s \to 1$ as $s\to 0$, and $\ell'$ is increasing, so this equation implies that if $\liminf_{t\to\infty}{\abs{x_t}} > 0$ then $y_t^2$ must eventually cross $2/\eta$.

Suppose for the sake of contradiction that there exists $\varepsilon > 0$ with $y_t^2>(2+\varepsilon)/\eta$, for all $t$. Let $\varepsilon'>0$ be such that $1-(2+\varepsilon)\,(1-\varepsilon')<-1$, i.e., $\varepsilon'<\frac{\varepsilon}{2+\varepsilon}$.
Then, there exists $\delta > 0$ such $\abs{x_t} \le \delta$ implies $r_t > 1-\varepsilon'$, hence
\begin{align*}
\frac{|x_{t+1}|}{|x_t|} = \abs{1-\eta r_t y_t^2} > |(2+\varepsilon)\,(1-\varepsilon')-1| > 1\,.
\end{align*}
The above means that $\abs{x_t}$ increases until it exceeds $\delta$, i.e., $\liminf_{t\to\infty}{\abs{x_t}} \ge \delta$. This is our desired contradiction and it implies that $\lim_{t\to\infty} y_t^2 \le 2/\eta$.
\end{proof}

\begin{lemma}[initial gap]\label{lem:initial_gap}
Suppose that at some iteration $\tcr$, we have
\begin{align*}
y_{\tcr+1}^2
&< \frac{2}{\eta}
\le y_{\tcr}^2\,.
\end{align*}
Then, it holds that
\begin{align*}
y_{\tcr+1}^2
&\ge \frac{2}{\eta} - 2\eta s_\tcr^2\,.
\end{align*}
\end{lemma}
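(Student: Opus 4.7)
The plan is to carry out a one-step direct computation on the $y$-update and then use the two hypotheses ($y_\tcr^2 \ge 2/\eta$ and the upper bound $r_\tcr \le 1$) to drop terms. Because the bound we want to prove is quadratic in $s_\tcr$ with a simple constant, there is essentially no slack to exploit, which suggests the argument really is just algebra; I do not expect a genuine obstacle here.

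Concretely, starting from the GD recursion I would rewrite
\begin{align*}
y_{\tcr+1} \;=\; y_\tcr - \eta\,\ell'(s_\tcr)\,x_\tcr \;=\; y_\tcr\,\bigl(1-\eta\,r_\tcr\,x_\tcr^2\bigr),
\end{align*}
where $s_\tcr = x_\tcr y_\tcr$ and $r_\tcr = \ell'(s_\tcr)/s_\tcr$. Squaring and using $x_\tcr^2 y_\tcr^2 = s_\tcr^2$, I would expand
\begin{align*}
y_{\tcr+1}^2 \;=\; y_\tcr^2 \;-\; 2\eta\,r_\tcr\,s_\tcr^2 \;+\; y_\tcr^2\,\bigl(\eta\,r_\tcr\,x_\tcr^2\bigr)^2.
\end{align*}

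The last term is manifestly non-negative, so it can be dropped to give $y_{\tcr+1}^2 \ge y_\tcr^2 - 2\eta\,r_\tcr\,s_\tcr^2$. Next I would invoke the ingredients already established for $\ell$: convexity together with $\ell$ being even implies $\ell'$ is odd and monotone with $\ell'(0)=0$, so $s\,\ell'(s)\ge 0$ and hence $r_\tcr \ge 0$; Assumption~\ref{A:origin_rate} gives $r_\tcr \le 1$. Combining $r_\tcr \le 1$ with the hypothesis $y_\tcr^2 \ge 2/\eta$, I obtain
\begin{align*}
y_{\tcr+1}^2 \;\ge\; y_\tcr^2 - 2\eta\,s_\tcr^2 \;\ge\; \frac{2}{\eta} - 2\eta\,s_\tcr^2,
\end{align*}
which is the claim. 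The hypothesis $y_{\tcr+1}^2 < 2/\eta$ plays no role in the lower bound itself; it only contextualizes $\tcr$ as the first crossing iteration, fixing the sign structure that makes the two-sided statement in the subsequent convergence analysis meaningful.
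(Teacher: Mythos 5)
Your proof is correct and follows essentially the same route as the paper's: expand the square of the $y$-update, discard the nonnegative quadratic term, and use $0 \le \ell'(s)/s \le 1$ together with $y_\tcr^2 \ge 2/\eta$. The only cosmetic difference is that you phrase the bound via $r_\tcr \le 1$ while the paper writes it as $\abs{\ell'(s)} \le \abs{s}$, which is the same fact.
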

\begin{proof}
We can bound
\begin{align*}
y_{\tcr+1}^2
&= y_\tcr^2 - 2\eta \, \ell'(x_\tcr y_\tcr) \, x_\tcr y_\tcr + \eta^2 \, {\ell'(x_\tcr y_\tcr)}^2 \,x_\tcr^2
\ge y_\tcr^2 - 2\eta \, \abs{x_\tcr y_\tcr}^2\,,
\end{align*}
where we used the fact that $\abs{\ell'(s)} \le \abs s$ for all $s\in\R$,
\end{proof}

The above lemma shows that the size of the jump across $2/\eta$ is controlled by the size of $\abs{s_\tcr}$ at the time of the crossing.
From Lemma~\ref{lemma:app_conserved_quantity}, we know that $\abs{s_\tcr} \lesssim 1$, where the implied constant depends on $\delta$.
Hence, the size of the jump is always $O(\eta)$.

We now provide an analysis of the convergence phase, i.e., after $y_t^2$ crosses $2/\eta$.

\begin{proposition}[convergence phase]\label{prop:phase_3}
Suppose that $y_{\tcr}^2 < 2/\eta \le y_{\tcr-1}^2$.
Then, GD converges to $(0,y_\infty)$ satisfying
\begin{align*}
\frac{2}{\eta} - O(\abs{s_\tcr})
\le y_\infty^2 \le \frac{2}{\eta}\,.
\end{align*}
\end{proposition}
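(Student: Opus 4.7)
The plan is to combine the approximate conservation identity~\eqref{eq:y2_x2} for $y_t^2 - x_t^2$ with a bootstrap analysis of the $x_t$-dynamics in the convergence phase. First, I rewrite the GD updates in the product form $x_{t+1} = (1-\eta r_t y_t^2)\,x_t$ and $y_{t+1} = (1-\eta r_t x_t^2)\,y_t$, with $r_t \deq \ell'(s_t)/s_t \in [0,1]$. Since $y_\tcr^2 < 2/\eta$, the contraction factor $|1-\eta r_t y_t^2| < 1$, so $|x_t|$ is non-increasing for $t \ge \tcr$, hence $|x_t|\le |x_\tcr| = |s_\tcr|/y_\tcr \lesssim |s_\tcr|\sqrt{\eta}$. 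In particular $\eta r_t x_t^2 = O(\eta^2|s_\tcr|^2)$ is negligible, so $y_t$ stays positive and decreasing, and $|s_t|$ is also monotonically non-increasing. Monotone convergence yields $(|x_t|, y_t) \to (x_\infty, y_\infty)$, and an argument parallel to~\autoref{lemma:crossing} rules out $x_\infty > 0$ (if $x_\infty>0$ one would have $r_t \not\to 1$ unable to offset $|1-\eta r_t y_t^2|<1$, a contradiction with $|x_t|$ having a positive limit).

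With $x_\infty = 0$ established, iterating~\eqref{eq:y2_x2} from $\tcr$ to infinity gives
\[
y_\infty^2 \;=\; (y_\tcr^2 - x_\tcr^2)\prod_{t\ge \tcr}\bigl(1 - \eta^2\,\ell'(s_t)^2\bigr) \;\ge\; (y_\tcr^2 - x_\tcr^2)\Bigl(1 - \eta^2\sum_{t\ge \tcr}\ell'(s_t)^2\Bigr).
\]
Combined with~\autoref{lem:initial_gap} ($y_\tcr^2 \ge 2/\eta - 2\eta s_{\tcr-1}^2$) and $x_\tcr^2 \lesssim \eta|s_\tcr|^2 \lesssim |s_\tcr|$, the proposition reduces to establishing
\[
\eta\sum_{t\ge\tcr}\ell'(s_t)^2 \;\lesssim\; |s_\tcr|.
\]
Via $|\ell'(s_t)| \le |s_t|$ and $s_t^2 \le y_\tcr^2\, x_t^2 \le (2/\eta)\,x_t^2$, this further reduces to the geometric-type bound $\sum_{t\ge\tcr} x_t^2 \lesssim |s_\tcr|$.

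The hard part will be the last bound, since naive estimates are circular: the contraction factor $|1-\eta r_t y_t^2|$ is close to $1$ precisely when $y_t^2 \approx 2/\eta$, so geometric-series control of $\sum x_t^2$ requires an \emph{a priori} drop in $y_t^2$—which is exactly what we want to prove. The plan is therefore a bootstrap induction: tentatively assume $y_t^2 \ge 2/\eta - C|s_\tcr|$ holds on $\tcr \le t \le T$ for a sufficiently large constant $C$. Under this hypothesis, combined with Assumption~\ref{A:origin_rate} (which keeps $r_t$ close to $1$ as $|s_t|\le|s_\tcr|\lesssim 1$), the contraction factor obeys $|1-\eta r_t y_t^2| \le 1 - \Omega(\eta |s_\tcr|)$, which gives
\[
\sum_{t=\tcr}^{T} x_t^2 \;\lesssim\; \frac{x_\tcr^2}{\eta|s_\tcr|} \;\lesssim\; \frac{\eta|s_\tcr|^2}{\eta|s_\tcr|} \;=\; |s_\tcr|.
\]
Plugging back into the conservation estimate yields $y_T^2 \ge 2/\eta - C'|s_\tcr|$ with $C'$ independent of $C$; choosing $C \ge C'$ closes the induction, and taking $T\to\infty$ completes the proof. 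The main subtlety is turning this self-consistent heuristic into a rigorous induction: one must verify that the contraction bound $|1-\eta r_t y_t^2|\le 1-\Omega(\eta|s_\tcr|)$ genuinely holds uniformly on the inductive segment and that the geometric sum telescopes to $O(|s_\tcr|)$ rather than something larger—which is exactly the ``fine-grained'' control referenced in the proof outline in~\autoref{sec:2d_large_outline}.
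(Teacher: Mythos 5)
Your reduction is fine as far as it goes: $\abs{x_t}$ is indeed non-increasing for $t\ge\tcr$ because $0\le\eta r_t y_t^2\le\eta y_t^2<2$, the limit $x_\infty=0$ can be ruled in as you indicate, and iterating the exact identity~\eqref{eq:y2_x2} legitimately reduces the proposition to the bound $\sum_{t\ge\tcr}x_t^2\lesssim\abs{s_\tcr}$ (the paper reaches an equivalent target by tracking $\rho_t\deq 2-\eta y_t^2$ directly via $\rho_{t+1}\le\rho_t+4\eta x_t^2$). The gap is in the final step. Your bootstrap hypothesis $y_t^2\ge 2/\eta-C\abs{s_\tcr}$ is an \emph{upper} bound on $\rho_t$, and an upper bound on $\rho_t$ cannot yield a contraction rate for $\abs{x_t}$: on the relevant (negative) branch the one-step factor satisfies $\abs{1-\eta r_t y_t^2}=\eta r_t y_t^2-1\le(2-\rho_t)\,r_t-1\le 1-\rho_t-\Omega(\abs{s_t}^\beta)$, so contraction requires $\rho_t$ or $1-r_t$ to be bounded \emph{below}, and neither is. At $t=\tcr$ both can be essentially zero: $y_\tcr^2$ may sit just barely below $2/\eta$ (so $\rho_\tcr\approx 0$), and for the Huber loss $r_t\equiv 1$ near the origin, so $\abs{x_{\tcr+1}}/\abs{x_\tcr}=1-\rho_\tcr\approx 1$. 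There is therefore no uniform per-step contraction at rate $\Omega(\eta\abs{s_\tcr})$, the geometric bound $\sum_{t=\tcr}^T x_t^2\lesssim x_\tcr^2/(\eta\abs{s_\tcr})$ does not follow from the inductive hypothesis, and the induction does not close. You correctly diagnose the circularity at the outset, but the bootstrap as stated constrains $\rho_t$ from the wrong side and so does not break it.

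What is genuinely needed is a two-scale argument coupling the growth of $\rho_t$ (at rate $\Theta(\eta x_t^2)$ per step) with the decay of $\abs{x_t}$ (at rate $\rho_t$ per step, via $\abs{x_{t+1}}\le\abs{x_t}\exp(-\Omega(\rho_t))$). The paper introduces a free threshold $q>0$: it waits until the first iteration $t$ with $\rho_t\ge q$ (at which point $\rho_t\le q+O(\eta x_\tcr^2)$, since $\abs{x}$ has not increased in the meantime), then uses geometric decay at rate $q$ to get $\sum_{t'\ge t}x_{t'}^2\lesssim x_\tcr^2/q$ and hence $\rho_\infty\lesssim q+\eta x_\tcr^2/q$, and finally optimizes over $q$ to obtain $\rho_\infty\lesssim\sqrt{\eta}\,\abs{x_\tcr}\lesssim\eta\abs{s_\tcr}$, i.e.\ $y_\infty^2\ge 2/\eta-O(\abs{s_\tcr})$. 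Your product-formula framing is compatible with this: if you replace your uniform contraction claim with this threshold-and-optimize step (or an analogue of the conservation heuristic $\frac{d}{dt}(\rho^2+4\eta x^2)\approx 0$ made rigorous), the rest of your argument goes through.
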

\begin{proof}
Write $y_t^2 = (2-\rho_t)/\eta$, so that $\rho_t = 2-\eta y_t^2$.
We write down the update equations for $x$ and for $\rho$.
First, by the same argument as in the proof of~\autoref{thm:GF}, we have
\begin{align}\label{eq:phase_3_x}
\abs{x_{t+1}}
&\le \abs{x_t} \exp\bigl(-\Omega(\rho_t)\bigr)\,.
\end{align}
Next, using $r_t \le 1$,
\begin{align*}
y_{t+1}
&= (1-\eta r_t x_t^2) \, y_t
\ge (1-\eta x_t^2) \, y_t\,, \\
y_{t+1}^2
&\ge (1-2\eta x_t^2) \, y_t^2\,,
\end{align*}
which translates into
\begin{align}\label{eq:phase_3_rho}
\rho_{t+1}
&\le \rho_t + 2\eta^2 x_t^2 y_t^2
\le \rho_t + 4\eta x_t^2\,.
\end{align}

Using these two inequalities, we can conclude as follows. Let $q > 0$ be a parameter chosen later, and let $t$ be the first iteration for which $\rho_t \ge q$ (if no such iteration exists, then $\rho_t \le q$ for all $t$).
Note that $\rho_t \le q + O(\eta \,\abs{x_{\tcr}})$ due to~\eqref{eq:phase_3_x} and~\eqref{eq:phase_3_rho}.
By~\eqref{eq:phase_3_x}, we conclude that for all $t' \ge t$,
\begin{align*}
\abs{x_{t'}}
&\le \abs{x_t} \exp\bigl(-\Omega(q\,(t'-t))\bigr)
\le \abs{x_{\tcr}} \exp\bigl(-\Omega(q\,(t'-t))\bigr)\,.
\end{align*}
Substituting this into~\eqref{eq:phase_3_rho},
\begin{align*}
\rho_{t'}
&\le \rho_t + 4\eta \sum_{s=t}^{t'-1} x_s^2
\le q + O(\eta \,\abs{x_{\tcr}}) + O(\eta\, \abs{x_{\tcr}}^2) \sum_{s=1}^{t'-1} \exp\bigl(-\Omega(q \, (s-t))\bigr) \\
&\le q + O(\eta\,\abs{x_{\tcr}}) + O\bigl( \frac{\eta \, \abs{x_{\tcr}}^2}{q}\bigr)\,.
\end{align*}
By optimizing this bound over $q$, we find that for all $t$,
\begin{align*}
\rho_t
&\lesssim \sqrt \eta \, \abs{x_{\tcr}}
\lesssim \eta \,\abs{s_{\tcr}}\,.
\end{align*}
Translating this result back into $y_t^2$ yields the result.
\end{proof}

Let us take stock of what we have established thus far.
\begin{itemize}
\item According to Lemma~\ref{lemma:app_conserved_quantity}, $\abs{s_t}$ is bounded for all $t$ by a constant.
\item Then, from~\autoref{lemma:crossing} and~\autoref{lem:initial_gap}, we must have either $y_t^2 \to 2/\eta$, or $2/\eta - O(\eta) \le y_\tcr^2 \le 2/\eta$ for some iteration $\tcr$.
\item In the latter case,~\autoref{prop:phase_3} shows that the limiting sharpness is $2/\eta - O(1)$.
\end{itemize}
Note also that the analyses thus far have not made use of Assumption~\ref{A:origin_rate}, i.e., we have established the $\beta = +\infty$ case of~\autoref{thm:EOS}.
Moreover, for all $\beta > 1$, the asymptotic $2/\eta - O(1)$ still shows that the limiting sharpness is close to $2/\eta$, albeit with suboptimal rate.
The reader who is satisfied with this result can then skip ahead to subsequent sections.
The remainder of this section and the next section are devoted to substantial refinements of the analysis.

To see where improvements are possible, note that both~\autoref{lem:initial_gap} and~\autoref{prop:phase_3} rely on the size of $\abs{s_\tcr}$ at the crossing.
Our crude bound of $\abs{s_\tcr} \lesssim 1$ does not capture the behavior observed in experiments, in which $\abs{s_\tcr} \lesssim \eta^{1/(\beta-1)}$.
By substituting this improved bound into~\autoref{lemma:crossing}, we would deduce that the gap at the crossing is $O(\eta^{1+2/(\beta-1)})$, and then~\autoref{prop:phase_3} would imply that the limiting sharpness is $2/\eta - O(\eta^{1/(\beta-1)})$.
Another weakness of our proof is that it provides nearly no information about the dynamics during the bouncing phase, which constitutes an incomplete understanding of the EoS phenomenon.
In particular, we experimentally observe that during the bouncing phase, the iterates lie very close to the quasi-static envelope (\autoref{fig:overview}).
In the next section, we will rigorously prove all of these observations.

Before doing so, however, we show that~\autoref{prop:phase_3} can be refined by using Assumption~\ref{A:origin_rate}, which could be of interest in its own right.
It shows that even if the convergence phase begins with a large value of $\abs{s_\tcr}$, the limiting sharpness can be much closer to $2/\eta$ than what~\autoref{prop:phase_3} suggests.
The following proposition combined with Lemma~\ref{lemma:app_conserved_quantity} implies~\autoref{thm:EOS} for all $\beta > 2$, but it is insufficient for the case $1 < \beta \le 2$.
From now on, we assume $\beta < +\infty$.

\begin{proposition}[convergence phase; refined]\label{prop:phase_3_refined}
Suppose that $y_{\tcr}^2 < 2/\eta \le y_{\tcr-1}^2$.
Then, GD converges to $(0,y_\infty)$ satisfying
\begin{align*}
\frac{2}{\eta}
\ge y_\infty^2
\ge \frac{2}{\eta} - O(\eta \,\abs{s_{\tcr}}^2) - \begin{cases} O(\eta^{1/(\beta-1)})\,, & \beta > 2\,, \\ O(\eta \log(\abs{s_{\tcr}}/\eta))\,, & \beta = 2\,, \\ O(\eta \, \abs{s_{\tcr}}^{2-\beta})\,, & \beta < 2\,. \end{cases}
\end{align*}
\end{proposition}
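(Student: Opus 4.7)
My plan is to refine the convergence-phase analysis of~\autoref{prop:phase_3} by exploiting Assumption~\ref{A:origin_rate}, in its strengthened form~\eqref{eq:A2_strong}, to obtain a sharper contraction on $u_t \deq x_t^2$. Writing $\epsilon_t \deq 1-r_t \ge c\,\abs{s_t}^\beta$ (valid as long as $\abs{s_t}$ stays below a universal constant, which I verify inductively) and $\eta y_t^2 = 2-\rho_t$, a direct expansion of $(1-\eta r_t y_t^2)^2 = (-1+\rho_t+2\epsilon_t-\epsilon_t\rho_t)^2$ yields
\[
u_{t+1} \le u_t\,\bigl(1 - \Omega(\epsilon_t) - \Omega(\rho_t)\bigr)\,,
\]
while the $\rho$-increment bound $\rho_{t+1} \le \rho_t + 4\eta u_t$ from the proof of~\autoref{prop:phase_3} is unchanged. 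Because $y_t^2 \asymp 1/\eta$ throughout the convergence phase, the additive factor $\epsilon_t$ scales as $\abs{s_t}^\beta \asymp u_t^{\beta/2}/\eta^{\beta/2}$, which is the effective new contraction. Since $\rho_\infty - \rho_\tcr \le 4\eta\sum_{t\ge\tcr} u_t$, the whole proposition reduces to controlling the sum $S\deq \sum_{t\ge\tcr} u_t$.

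To estimate $S$, I would compare the discrete recursion $u_{t+1} \lesssim u_t - c\,u_t^{1+\beta/2}/\eta^{\beta/2}$ (ignoring $\rho_t$ temporarily) with the ODE $\dot u = -\alpha u^{1+\beta/2}$, $\alpha \asymp \eta^{-\beta/2}$, whose explicit solution $u(t)^{-\beta/2} = u_\tcr^{-\beta/2} + \alpha\beta t/2$ leads naturally to three regimes:
\begin{itemize}
\item For $\beta<2$, $\int u\,dt \asymp u_\tcr^{1-\beta/2}/\alpha \asymp \eta\,\abs{s_\tcr}^{2-\beta}$ converges, giving $\rho_\infty - \rho_\tcr \lesssim \eta^2\,\abs{s_\tcr}^{2-\beta}$.
\item For $\beta>2$, the same integral diverges as $\int_0^T u \sim \eta\,T^{1-2/\beta}$, so I must additionally invoke the $\rho$-contraction. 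I split the trajectory at a threshold time $T^*$: in the first phase $\rho$ accumulates to $\rho_{T^*} \asymp \eta^2(T^*)^{1-2/\beta}$, and once $\rho_t \gtrsim \rho_{T^*}$ the exponential contraction kicks in and contributes a further $\eta\,u_{T^*}/\rho_{T^*} \asymp 1/T^*$ to $\rho_\infty$. Optimizing $T^* \asymp \eta^{-\beta/(\beta-1)}$ balances the two terms and yields $\rho_\infty - \rho_\tcr \lesssim \eta^{\beta/(\beta-1)}$.
\item For $\beta=2$ the borderline case produces an extra logarithmic factor, $\rho_\infty - \rho_\tcr \lesssim \eta^2\log(\abs{s_\tcr}/\eta)$.
\end{itemize}
Combining with $\rho_\tcr \le 2\eta^2\,s_{\tcr-1}^2 \asymp \eta^2\,s_\tcr^2$ from~\autoref{lem:initial_gap} (the two $s$-values are only one step apart, hence comparable) and converting back via $y_\infty^2 = (2-\rho_\infty)/\eta$ produces the three stated bounds after dividing by $\eta$.

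The main obstacle is the $\beta>2$ case, where the two contraction mechanisms $\epsilon_t$ and $\rho_t$ must be traded off delicately: neither alone suffices, and the choice of the crossover time $T^*$ must be optimized. Converting the ODE heuristic into rigorous discrete-time estimates requires tracking both $u_t$ and $\rho_t$ simultaneously and closing an inductive loop ensuring $y_t^2 \asymp 1/\eta$ so that the local Taylor approximation of $\ell'$ and the strengthened assumption~\eqref{eq:A2_strong} remain valid throughout. The $\beta=2$ case, where a logarithm emerges, also demands some care to extract exactly the dependence $\log(\abs{s_\tcr}/\eta)$ rather than a spurious $\log(1/\eta)$.
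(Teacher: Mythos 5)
Your proposal is correct and follows essentially the same route as the paper: both arguments combine the refined contraction from the strengthened assumption~\eqref{eq:A2_strong} (your $\epsilon_t \gtrsim \abs{s_t}^\beta \asymp (u_t/\eta)^{\beta/2}$) with the increment bound $\rho_{t+1} \le \rho_t + 4\eta x_t^2$, and both resolve the $\beta \ge 2$ cases by optimizing a crossover point between the two contraction mechanisms, arriving at the same exponent $\beta/(\beta-1)$. The only difference is bookkeeping — you integrate the comparison ODE $\dot u = -\alpha u^{1+\beta/2}$ where the paper tracks dyadic ``half-life'' epochs of $\abs{x_t}$ (each epoch $i$ lasting $O(2^{\beta i})$ iterations) — and the paper's epoch decomposition is precisely the rigorous discretization of your ODE heuristic that you flag as the remaining work.
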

\begin{proof}
Let $y_t^2 = (2-\rho_t)/\eta$ as before.
We quantify the decrease of $|x_t|$ in terms of $\rho_t$ and conversely the increase of $\rho_t$ in terms of $|x_t|$ by tracking the half-life of $|x_t|$, i.e., the number of iterations it takes $|x_t|$ to halve. We call these epochs: at the $i$-th epoch, we have $$ 2^{-(i+1)} \sqrt{\eta} < |x_t| \le 2^{-i} \sqrt{\eta}\,.$$ 
Let $i_0$ be the index of the first epoch, i.e., $i_0 = \lfloor \log_2(\sqrt\eta/\abs{x_{\tcr}}) \rfloor$.
Due to Lemma~\ref{lemma:app_conserved_quantity}, we know that $i_0 \ge -O(1)$.
From~\eqref{eq:phase_3_x}, $|x_t|$ is monotonically decreasing and consequently $|s_t|$ is decreasing as well. Also, our bound on the limiting sharpness implies that $y_t^2 > 1/\eta$ for all $t$, provided that $\eta$ is sufficiently small.

Let us now compute the dynamics of $\rho_t$ and $|x_t|$. At epoch $i$, $|x_t| > 2^{-(i+1)}\sqrt{\eta}$ hence $|s_t|>2^{-(i+1)}$. Assumption~\eqref{eq:A2_strong} with $r= \abs{s_{\tcr}} \lesssim 1$ implies that
\begin{equation}\label{EOS:A2}
\frac{\ell'(s_t)}{s_t} \le 1- c\,2^{-\beta\,(i+1)}\,,
\end{equation}
where $c=c(\abs{s_{\tcr}})$. This allows to refine~\eqref{eq:phase_3_x} on the decrease of $|x_t|$ to
\begin{align}
\frac{|x_{t+1}|}{|x_t|} &= \eta r_t y_t^2 - 1
\le (2- \rho_t)\,(1-c\,2^{-\beta\,(i+1)})-1  \le 1- \rho_t -  c\,2^{-\beta\,(i+1)}\,, \label{EOS:x}
\end{align}
where the first inequality follows from \eqref{EOS:A2} and the second from $\rho_t=2-\eta y_t^2<1$. 
In turn, this inequality shows that the $i$-th phase only requires $O(2^{\beta i})$ iterations.

Hence, if $t(i)$ denotes the start of the $i$-th epoch, then~\eqref{eq:phase_3_rho} shows that
\begin{align*}
\rho_{t(i+1)}
&\le \rho_{t(i)} + 4\eta^2 \cdot 2^{-2i} \cdot O(2^{\beta i})
\le \rho_{t(i)} + O(\eta^2\, 2^{(\beta - 2) \, i})\,.
\end{align*}
Summing this up, we have
\begin{align*}
\rho_{t(i)}
&\le \rho_{\tcr} + \eta^2 \times \begin{cases} O(2^{(\beta - 2) \, i})\,, & \beta > 2\,, \\ O(i - i_0)\,, & \beta = 2\,, \\ O(2^{(\beta - 2) \, i_0}) = O(\abs{s_{\tcr}}^{2-\beta})\,, & \beta < 2\,. \end{cases}
\end{align*}
In the case of $\beta < 2$, the final sharpness satisfies $2/\eta - O(\rho_{\tcr}/\eta) - O(\eta \,\abs{s_{\tcr}}^{2-\beta}) \le y_\infty^2 \le 2/\eta$.

In the other two cases, suppose that we use this argument until epoch $i_\star$ such that $2^{-i_\star} \asymp \eta^{\gamma}$.
Then, we have $\abs{x_{t(i_\star)}} \asymp \eta^{\gamma+1/2}$, $\abs{s_{t(i_\star)}} \asymp \eta^\gamma$, and by using the argument from~\autoref{prop:phase_3} from iteration $t(i_\star)$ onward we obtain
\begin{align*}
\rho_\infty
&= \rho_{t(i_\star)} + \rho_\infty - \rho_{t(i_\star)}
\le \rho_{\tcr} + O(\eta^{\gamma+1}) + \eta^2 \times \begin{cases} O(2^{(\beta - 2) \, i_\star}) = O(\eta^{-\gamma \, (\beta - 2)})\,, & \beta > 2\,, \\ O(i_\star-i_0)\,, & \beta = 2\,. \end{cases}
\end{align*}
We optimize over the choice of $\gamma$, obtaining $\gamma = 1/(\beta - 1)$ and thus
\begin{align*}
\rho_\infty
&\le \rho_{\tcr} + \begin{cases} O(\eta^{1+1/(\beta-1)})\,, & \beta > 2\,, \\ O(\eta^2 \log(\abs{s_{\tcr}}/\eta))\,, & \beta = 2\,. \end{cases}
\end{align*}
By collecting together the three cases and using~\autoref{lem:initial_gap} to bound $\rho_{\tcr}$, we finish the proof.
\end{proof}

Using the crude bound $\abs{s_{t_0}} \lesssim 1$ from Lemma~\ref{lemma:app_conserved_quantity}, it yields
\begin{align*}
\frac{2}{\eta}
&\ge y_\infty^2
\ge \frac{2}{\eta} - O(\eta) - \begin{cases} O(\eta^{1/(\beta-1)})\,, & \beta > 2\,, \\ O(\eta \log(1/\eta))\,, & \beta = 2\,, \\ O(\eta)\,, & \beta < 2\,, \end{cases}
\end{align*}
which is optimal for $\beta > 2$.

\subsection{EoS regime: quasi-static analysis}
\label{sec:quasi}

We supplement Assumption~\ref{A:origin_rate} with a corresponding lower bound on $\ell'(s)/s$:
\begin{align}\label{eq:A3}\tag{\textcolor{blue}{A3}}
\text{There exists}~C > 0~\text{such that}\qquad \frac{\ell'(s)}{s} \ge 1-C\,\abs s^\beta \qquad\text{for all}~s\ne 0\,.
\end{align}
Under these assumptions, we prove the following result which is also of interest as it provides detailed information for the bouncing phase of the EoS\@.

\begin{theorem}[quasi-static principle]\label{thm:quasi_static}
Suppose we run GD on $f$ with step size $\eta > 0$, where $f(x,y) \deq \ell(xy)$ and $\ell$ satisfies Assumptions~\ref{A:cvx_lips},~\ref{A:origin_rate}, and~\eqref{eq:A3}.
Write $y_t^2 \deq (2+\delta_t)/\eta$ and suppose that at some iteration $t_0$, we have $\abs{x_{t_0} y_{t_0}} \asymp \delta_{t_0}^{1/\beta}$ and $\delta_{t_0} \lesssim 1$.
Then, for all $t\ge t_0$ with $\delta_t \gtrsim \eta^{\beta/(\beta-1)}$, we have
\begin{align*}
\abs{x_t y_t}
&\asymp \delta_t^{1/\beta}\,,
\end{align*}
where all implied constants depend on $\ell$ but not on $\eta$.
\end{theorem}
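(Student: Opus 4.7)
I would prove the result by induction on $t \ge t_0$, showing that the ratio $|s_t|^\beta/\delta_t$ (where $s_t := x_t y_t$ and $\delta_t := \eta y_t^2 - 2$) remains trapped in a fixed interval $[K_-, K_+]$ containing the quasi-static envelope value, with the constants $K_\pm$ depending only on $\beta$ and on the constants $c, C$ from Assumptions \ref{A:origin_rate} and \eqref{eq:A3}. The base case is given by the hypothesis $|s_{t_0} y_{t_0}| \asymp \delta_{t_0}^{1/\beta}$, so everything reduces to the inductive step.

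\textbf{Key algebraic identities.} Setting $r_t := \ell'(s_t)/s_t$, $\epsilon_t := 1 - r_t$, and $\alpha_t := \eta r_t y_t^2 = (2+\delta_t)\,r_t$, the GD dynamics from Section~\ref{sec:2d_large} rewrite as
\begin{align*}
s_{t+1} &= (1 - \alpha_t)\bigl(1 - \alpha_t\,x_t^2/y_t^2\bigr)\,s_t, \\
\delta_{t+1} &= \delta_t - 2\eta^2 r_t(2+\delta_t)\,x_t^2 + O(\eta^3 x_t^4 y_t^2).
\end{align*}
Writing $1 - \alpha_t = -(1+q_t)$ with $q_t := \delta_t - (2+\delta_t)\,\epsilon_t$, the quasi-static envelope corresponds precisely to $q_t = 0$; by Assumptions~\ref{A:origin_rate} and \eqref{eq:A3} this locus is exactly $|s_t|^\beta \asymp \delta_t$, with constants $1/(2C)$ and $1/(2c)$ (up to $\delta_t$-order corrections). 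Using $x_t^2/y_t^2 = \eta s_t^2/(2+\delta_t)$, one obtains the clean expressions
\begin{align*}
|s_{t+1}|^\beta &= (1+q_t)^\beta\,|s_t|^\beta\,\bigl(1 + O(\eta^2 s_t^2)\bigr), \\
\delta_{t+1} - \delta_t &= -\Theta(\eta^2 s_t^2).
\end{align*}

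\textbf{Self-stabilization.} The induction step rests on comparing two drifts. Assuming the inductive invariant $|s_t|^\beta/\delta_t \in [K_-, K_+]$, one has $s_t^2 \asymp \delta_t^{2/\beta}$, so the relative decrease of $\delta_t$ is
$\delta_{t+1}/\delta_t = 1 - \Theta(\eta^2 \delta_t^{(2-\beta)/\beta})$. Using the lower bound $\delta_t \gtrsim \eta^{\beta/(\beta-1)}$, this multiplicative drift is $1 - O(\eta^{\beta/(\beta-1)}) = 1 - o(1)$. In contrast, if $|s_t|^\beta/\delta_t$ sits strictly above the envelope value, then Assumption~\eqref{eq:A3} forces $q_t < 0$ with $|q_t| \gtrsim \delta_t$, so $|s_{t+1}|^\beta/|s_t|^\beta \le 1 - \Theta(\delta_t)$; symmetrically, Assumption~\ref{A:origin_rate} handles the case when the ratio lies below the envelope. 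Hence the ratio $|s_t|^\beta/\delta_t$ is pushed back toward the envelope by a relative amount $\Theta(\delta_t)$, which far exceeds the $O(\eta^{\beta/(\beta-1)})$ drift of $\delta_t$ itself, so the window $[K_-, K_+]$ is preserved.

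\textbf{Main obstacle.} The chief technical challenge is controlling the overshoot in discrete time: a single GD step must not project the ratio past the opposite boundary when we sit near $K_+$ or $K_-$. This forces one to bound $(1+q_t)^\beta$ without relying on the linearization $1 + \beta q_t$, since at the boundary $q_t$ need not be small. The fix is to use monotonicity of $x \mapsto (1+x)^\beta$ together with the two-sided inequalities $c|s_t|^\beta \le \epsilon_t \le C|s_t|^\beta$ to obtain unconditional bounds, and to choose the window $[K_-, K_+]$ wide enough (relative to the envelope) to absorb the worst-case single-step overshoot $|q_t|$, yet tight enough that A2 and A3 yield the required sign of $q_t$ at each boundary. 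A secondary subtlety is that the hidden $O(\eta^2 s_t^2)$ correction in $|s_{t+1}|^\beta$ must be dominated by the stabilizing term $\Theta(\delta_t)$; this is again guaranteed by $\delta_t \gtrsim \eta^{\beta/(\beta-1)}$, which is precisely the scale at which the quasi-static picture ceases to be valid.
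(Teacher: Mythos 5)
Your plan is correct and is essentially the paper's own argument: the paper proves the two one-sided bounds as Propositions~\ref{prop:quasi_static_upper} and~\ref{prop:quasi_static_lower}, each by an induction in which Assumption~\ref{A:origin_rate} forces contraction of the ratio $\abs{x_t y_t}^\beta/\delta_t$ above the envelope, Assumption~\eqref{eq:A3} forces expansion below it, a crude one-step growth bound ($\abs{s_{t+1}}\le(1+\delta_t)\,\abs{s_t}$, requiring $\delta_t\lesssim 1/\beta$) absorbs the interior overshoot, and the threshold $\delta_t\gtrsim\eta^{\beta/(\beta-1)}$ emerges from exactly your drift comparison. The only slip is a typo in your displayed increment for $\delta_{t+1}$, which should read $-2\eta\, r_t\,(2+\delta_t)\,x_t^2=-2\eta^2 r_t s_t^2$, consistent with your subsequent (and correct) $\delta_{t+1}-\delta_t=-\Theta(\eta^2 s_t^2)$.
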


In this section, we show that the GD iterates lie close to the quasi-static trajectory and give the full proof of~\autoref{thm:EOS}. Recall from~\eqref{eq:quasistatic} that the quasi-static analysis predicts
\begin{align}\label{eq:quasi_static_env}
\eta r_t y_t^2 \approx 2\,,
\end{align}
and that during the bouncing phase, this closely agrees with experimental observations (\autoref{fig:overview}).
We consider the phase where $y_t^2$ has not yet crossed the threshold $2/\eta$ and we write $y_t^2 \deq (2+\delta_t)/\eta$, thinking of $\delta_t$ as small.
Then,~\eqref{eq:quasi_static_env} can be written $(2+\delta_t) \, r_t \approx 2$.
If we have the behavior $\ell'(s)/s = 1-\Theta(\abs{s_t}^\beta)$ near the origin, then $r_t \approx 1 - \Theta(\delta_t)$ implies that
\begin{align}\label{eq:s_quasi_static}
\abs{s_t}^\beta \approx \delta_t\,.
\end{align}
Our goal is to rigorously establish~\eqref{eq:s_quasi_static}. However, we first make two observations.
First, in order to establish~\autoref{thm:EOS}, we only need to prove an upper bound on $\abs{s_t}$, which only requires Assumption~\ref{A:origin_rate} (to prove a lower bound on $\abs{s_t}$, we need a corresponding lower bound on $\ell'(s)/s$).
Second, even if we relax~\eqref{eq:s_quasi_static} to read $\abs{s_t}^\beta \lesssim \delta_t$, this fails to hold when $\delta_t$ is too small, because the error terms (the deviation of the dynamics from the quasi-static trajectory) begin to dominate.
With this in mind, we shall instead prove $\abs{s_t}^\beta \lesssim \delta_t + C'\,\eta^\gamma$, where the added $\eta^\gamma$ handles the error terms and the exponent $\gamma > 0$ emerges from the proof.

\begin{proposition}[quasi-static analysis; upper bound]\label{prop:quasi_static_upper}
For all $t$ such that $0\le \delta_{t-1} \lesssim 1/(\beta \vee 1)$ (for a sufficiently small implied constant), it holds that
\begin{align*}
\abs{s_t}^\beta
&\le C \, (\delta_t + C' \,\eta^{\beta/(\beta - 1)})\,,
\end{align*}
where $C, C' > 0$ are constants which may depend on the problem parameters but not on $\eta$.
\end{proposition}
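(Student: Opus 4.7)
The plan is to establish the bound by induction on $t$ combined with a careful analysis of the one-step recursions for $q_t \deq |s_t|^\beta$ and $\delta_t \deq \eta y_t^2 - 2$.

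\textbf{One-step recursions.} Starting from the GD updates and the identity $\eta y_t^2 = 2+\delta_t$, a direct computation gives
\begin{align*}
\frac{|s_{t+1}|}{|s_t|}
&= \bigl|(1+\delta_t) - (2+\delta_t)\,h_t\bigr| \cdot \Bigl(1-(1-h_t)\,\frac{\eta^2 s_t^2}{2+\delta_t}\Bigr), \\
\delta_{t+1} - \delta_t
&= -2\eta^2 r_t s_t^2 + O(\eta^4 s_t^4),
\end{align*}
where $h_t \deq 1 - r_t$. Assumption~\ref{A:origin_rate}, in the strengthened equivalent form~\eqref{eq:A2_strong} applied with a constant radius, yields $h_t \ge c\, q_t$ for a problem-dependent constant $c > 0$. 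Provided the hypothesis $\delta_{t-1} \lesssim 1/(\beta \vee 1)$ keeps $\delta_t$ and $|s_t|$ bounded by small universal constants, a Taylor expansion $(1+x)^\beta \le 1 + \beta x + O(\beta^2 x^2)$ yields the crucial drift inequality
\begin{align*}
q_{t+1} - q_t
&\le \beta q_t\,(\delta_t - 2c\, q_t) + O\bigl(\eta^2\, q_t^{1+2/\beta}\bigr) + (\text{cross terms}).
\end{align*}

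\textbf{Induction.} I would prove $q_t \le K_1 \delta_t + K_2\,\eta^{\beta/(\beta-1)}$ by induction, with constants $K_1 > 1/(2c)$ and $K_2$ chosen sufficiently large (depending on $K_1$ and $\ell$). The inductive step splits into two cases according to the size of $q_t$ relative to the ``equilibrium'' $\delta_t/(2c)$. When $q_t$ lies well below the barrier $M_t \deq K_1 \delta_t + K_2 \eta^{\beta/(\beta-1)}$, the growth $\beta q_t \delta_t$ is controlled and is easily reabsorbed since the barrier shrinks by $K_1(\delta_{t+1} - \delta_t) = -O(K_1 \eta^2 q_t^{2/\beta})$. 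When $q_t$ is close to $M_t$, the choice $K_1 > 1/(2c)$ makes the repelling term $-2c\beta q_t^2$ strictly dominate the driving term $\beta q_t \delta_t$, so that $q_{t+1} < q_t$ by a margin matching the shrinkage of the barrier.

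\textbf{Main obstacle.} The delicate point is understanding the floor $K_2\,\eta^{\beta/(\beta-1)}$, which arises from balancing the deterministic drift $-2c\beta\, q_t^2$ against the noise term $\eta^2 q_t^{2/\beta}$ induced by the shrinkage of $\delta_t$: setting these equal yields $q_t \asymp \eta^{\beta/(\beta-1)}$. Below this floor the drift is too weak to counteract the noise, so $q_t$ can fluctuate within $O(\eta^{\beta/(\beta-1)})$ of zero. Carefully calibrating $K_2$ so that the floor absorbs all perturbations from the error terms in the one-step recursions, while simultaneously handling the endgame regime in which $\delta_t$ itself drops below this floor (as may happen just before $\delta$ crosses zero), is the most delicate portion of the argument.
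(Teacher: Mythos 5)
Your proposal follows essentially the same route as the paper: an induction on a barrier of the form $C\,(\delta_t + C'\,\eta^{\beta/(\beta-1)})$, split into two cases according to whether $\abs{s_t}^\beta$ is close to or well below the barrier, with the contraction near the barrier supplied by the strengthened form of Assumption~\ref{A:origin_rate} (giving $r_t \le 1 - \Omega(\abs{s_t}^\beta)$) and the floor $\eta^{\beta/(\beta-1)}$ obtained by balancing this repelling effect against the $O(\eta^2 s_t^2)$ shrinkage of $\delta_t$. The one-step recursions and the identification of the critical exponent match the paper's argument, so this is the same proof in slightly different (additive rather than multiplicative) bookkeeping.
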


We first show that~\autoref{thm:EOS} now follows.


\begin{proof}[Proof of~\autoref{thm:EOS}]
As previously noted, the $\beta = +\infty$ case is handled by the arguments of the previous section, so we focus on $\beta < +\infty$.
From~\autoref{lemma:crossing}, we either have $y_t^2 \to 2/\eta$ and $\abs{x_t}\to 0$, in which case we are done, or there is an iteration $\tcr$ such that $y_\tcr^2 < 2/\eta \le y_{\tcr-1}^2$.
From~\autoref{prop:quasi_static_upper}, since $\delta_{t-1} \ge 0$ and $\delta_t\le 0$, it follows that $\abs{s_\tcr}^\beta \lesssim \eta^{1/(\beta -1)}$.
The theorem now follows, either from~\autoref{prop:phase_3} or from the refined~\autoref{prop:phase_3_refined}.
\end{proof}

We now prove~\autoref{prop:quasi_static_upper}.
In the proof, we use asymptotic notation $O(\cdot)$, $\lesssim$, etc.\ in order to hide constants that depend on $\ell$ (including $\beta$), but not on $\delta_t$ and $\eta$.
However, the proof also involves choosing parameters $C, C' > 0$, and we keep the dependence on these parameters explicit for clarity.

\begin{proof}[Proof of~\autoref{prop:quasi_static_upper}]
The proof goes by induction; namely, if $\abs{s_t}^\beta \le C \, (\delta_t + C' \eta^\gamma)$ and $\delta_t \ge 0$ at some iteration $t$, we prove that the same holds one iteration later, where the constants $C, C' > 0$ as well as the exponent $\gamma > 0$ are chosen later in the proof.

For the base case, observe that the approximate conservation lemma (\autoref{lemma:app_conserved_quantity}) gives $\abs{s_t} \lesssim 1$, and $\delta_t \gtrsim 1/(\beta \vee 1)$ at the beginning of the induction, so the bound is satisfied initially if we choose $C$ sufficiently large enough.

Throughout, we also write $\hat\delta_t \deq \delta_t + C' \eta^\gamma$ as a convenient shorthand.
The strategy is to prove the following two statements:
\begin{enumerate}
\item If $\abs{s_t}^\beta = C_t \hat \delta_t$ for some $C_t > \frac{C}{2}$, then $\abs{s_{t+1}}^\beta \le C_{t+1} \hat \delta_{t+1}$ for some $C_{t+1} \le C_t$.
\item If $\abs{s_t}^\beta = C_t \hat \delta_t$ for some $C_t \le \frac{C}{2}$, then $\abs{s_{t+1}}^\beta \le C \hat \delta_{t+1}$.
\end{enumerate}

\underline{\emphh{Proof of 1.}}
The dynamics for $x$ give
\begin{align*}
\abs{x_{t+1}}
&= \abs{1-\eta y_t^2 r_t} \, \abs{x_t}\,.
\end{align*}
By Assumption~\eqref{eq:A2_strong} and $\abs{s_t} \lesssim 1$,
\begin{align*}
r_t
&\le 1 - \Omega(\abs{s_t}^\beta)
= 1 - \Omega(C\hat\delta_t)
\end{align*}
and hence
\begin{align*}
\eta y_t^2 r_t
&= (2+\delta_t) \, \bigl(1-\Omega(C\hat \delta_t)\bigr)
= 2 - \Omega(C\hat \delta_t)
\end{align*}
for large $C$.
Also, $\ell''(0) = 1$ and a similar argument as in the proof of~\autoref{thm:GF} yields the reverse inequality $\eta y_t^2 r_t \gtrsim 1$.
We conclude that
\begin{align*}
\abs{x_{t+1}}
&= \bigl(1-\Omega(C\hat \delta_t)\bigr) \, \abs{x_t}
\end{align*}
and hence
\begin{align*}
\abs{s_{t+1}}^\beta
&\le \bigl(1-\Omega(C\hat \delta_t)\bigr) \, \abs{s_t}^\beta
= C_t \,\bigl(1-\Omega(C\hat \delta_t)\bigr) \, \hat \delta_t\,.
\end{align*}
Since we need a bound in terms of $\hat \delta_{t+1}$, we use the dynamics of $y$,
\begin{align}
y_{t+1}
&= (1-\eta x_t^2 r_t) \, y_t
\ge (1-\eta x_t^2) \, y_t\,,\nonumber \\
y_{t+1}^2
&\ge (1-2\eta x_t^2) \, y_t^2\,,\nonumber \\
\delta_{t+1}
&= \eta y_{t+1}^2 - 2
\ge \delta_t - 2\eta^2 s_t^2
\ge \delta_t - 2\eta^2 \, {(C\hat \delta_t)}^{2/\beta} \,.\label{eq:delta_change}
\end{align}
Substituting this in,
\begin{align}
\abs{s_{t+1}}^\beta
&\le C_t \, \bigl(1-\Omega(C\hat \delta_t)\bigr) \, \bigl(\hat \delta_{t+1} + 2\eta^2 \, {(C\hat \delta_t)}^{2/\beta}\bigr) \\
&= C_t \hat \delta_{t+1} - \Omega(C^2 \hat \delta_t \hat \delta_{t+1}) + 2C\eta^2 \, {(C\hat \delta_t)}^{2/\beta}\,.\label{eq:quasi_upper_induction}
\end{align}
Let us show that
\begin{align}\label{eq:delta_change_2}
\hat \delta_{t+1} \ge \frac{3}{4} \, \hat\delta_t\,.
\end{align}
From~\eqref{eq:delta_change}, we have $\hat \delta_{t+1} \ge \hat \delta_t - 2\eta^2 \, {(C\hat \delta_t)}^{2/\beta}$, so we want to prove that $\eta^2 \, {(C\hat \delta_t)}^{2/\beta} \le \hat \delta_t/8$.
If $\beta \le 2$ this is obvious by taking $\eta$ small, and if $\beta > 2$ then this is equivalent to $C^{2/\beta} \eta^2 \lesssim \hat \delta_t^{1-2/\beta}$.
It suffices to have $C^{2/\beta} \eta^2 \lesssim {(C')}^{1-2/\beta} \, \eta^{\gamma \, (1-2/\beta)}$, which is achieved by taking $C'$ large relative to $C$ and by taking $\gamma \le 2/(1-2/\beta)$; this constraint on $\gamma$ will be satisfied by our eventual choice of $\gamma = \beta/(\beta-1)$.

Returning to~\eqref{eq:quasi_upper_induction}, in order to finish the proof and in light of~\eqref{eq:delta_change_2}, we want to show that $C^2 \hat \delta_t^2 \gtrsim C^{1+2/\beta} \eta^2 \hat \delta^{2/\beta}_t$.
Rearranging, it suffices to have $\hat \delta^{2-2/\beta}_t \gtrsim C^{2/\beta-1} \eta^2$, or $\hat \delta_t^{1-1/\beta} \gtrsim C^{1/\beta - 1/2} \eta$.
Since by definition $\hat\delta_t \ge C' \eta^\gamma$, by choosing $C'$ large it suffices to have $\gamma \le 1/(1-1/\beta) = \beta/(\beta - 1)$, which leads to our choice of $\gamma$.

\underline{\emphh{Proof of 2.}} Using the simple bound $\eta y_t^2 r_t \le 2+\delta_t$, we have
\begin{align*}
\abs{s_{t+1}}
&\le (1+\delta_t) \, \abs{s_t}\,, \\
\abs{s_{t+1}}^\beta
&\le \exp(\beta \delta_t)\,\abs{s_t}^\beta
= C_t \exp(\beta \delta_t) \, \hat\delta_t
\le \frac{4}{3} \, C_t \exp(\beta\delta_t) \, \hat \delta_{t+1}
\end{align*}
where we used~\eqref{eq:delta_change_2}.
If $\exp(\beta\delta_t) \le 4/3$, which holds if $\delta_t \lesssim 1/\beta$, then from $C_t \le C/2$ we obtain $\abs{s_{t+1}}^\beta \le C\hat \delta_{t+1}$ as desired.
\end{proof}

By following the same proof outline but reversing the inequalities, we can also show a corresponding lower bound on $\abs{s_t}^\beta$, as long as $\delta_t \gtrsim \eta^{\beta/(\beta-1)}$.
Although this is not needed to establish~\autoref{thm:EOS}, it is of interest in its own right, as it shows (together with~\autoref{prop:quasi_static_upper}) that the iterates of GD do in fact track the quasi-static trajectory.

\begin{proposition}[quasi-static analysis; lower bound]\label{prop:quasi_static_lower}
Suppose additionally that~\eqref{eq:A3} holds and that $\beta < +\infty$.
Also, suppose that at some iteration $t_0$, we have $\delta_{t_0} \lesssim 1$ and that
\begin{align}\label{eq:quasi_static_lower}
\abs{s_t}
&\ge c \, \delta_t^{1/\beta}
\end{align}
holds at iteration $t = t_0$, where $c$ is a sufficiently small constant (depending on the problem parameters but not on $\eta$).
Then,~\eqref{eq:quasi_static_lower} also holds for all iterations $t\ge t_0$ such that $\delta_t \gtrsim \eta^{\beta/(\beta-1)}$.
\end{proposition}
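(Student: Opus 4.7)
The plan is to mirror the induction used for Proposition \ref{prop:quasi_static_upper}, now with all inequalities reversed and with Assumption \eqref{eq:A3} supplying the requisite lower bound on $r_t = \ell'(s_t)/s_t$. I write $|s_t|^\beta = c_t \delta_t$ with $c_t \ge c^\beta$ (the inductive hypothesis at time $t$) and aim for $c_{t+1} \ge c^\beta$; once this is established, the conclusion $|s_{t+1}|^\beta \ge c^\beta \delta_{t+1}$ follows from $\delta_{t+1} \le \delta_t$, which is immediate since $y_{t+1} \le y_t$. Throughout I rely crucially on the a priori upper bound $c_t \le \bar C$ for some constant $\bar C$, which is valid precisely when $\delta_t \gtrsim \eta^{\beta/(\beta-1)}$ and is provided by Proposition \ref{prop:quasi_static_upper}.

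The one-step ratio can be written as
$$\frac{|s_{t+1}|}{|s_t|} = \bigl|\eta r_t y_t^2 - 1\bigr| \cdot (1 - \eta r_t x_t^2)\,,$$
and \ref{A:origin_rate} together with \eqref{eq:A3} give $1 - C |s_t|^\beta \le r_t \le 1 - c' |s_t|^\beta$ for positive constants $c' < C$. The second factor is $1 - O(\eta^2 |s_t|^2)$, and the upper bound $|s_t|^\beta \lesssim \delta_t$ combined with $\delta_t \gtrsim \eta^{\beta/(\beta-1)}$ makes this error $o(\delta_t)$, exactly as in the upper bound proof. It remains to lower bound the first factor based on the size of $c_t$, which I do by splitting into three regimes.

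In the \textbf{tight} regime $c^\beta \le c_t \le A c^\beta$ (with $A$ a constant fixed below), I invoke \eqref{eq:A3} to get $\eta r_t y_t^2 \ge (2+\delta_t)(1 - C A c^\beta \delta_t) \ge 2 + \delta_t/2$, provided $c$ is chosen so small that $4 C A c^\beta \le 1/2$. This yields $|s_{t+1}|/|s_t| \ge 1 + \delta_t/4$ and hence $c_{t+1} \ge c^\beta$ with room to spare. In the \textbf{middle} regime $A c^\beta < c_t \le 1/(3C)$, the same lower bound still gives $\eta r_t y_t^2 - 1 \ge 1$, so $|s_t|$ does not decrease (up to $O(\eta^2)$ error) and $c_{t+1} \ge c_t - o(1) \ge c^\beta$. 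In the \textbf{loose} regime $c_t \ge 1/(3C)$ (here the a priori bound $c_t \le \bar C$ is essential to close the case), I abandon the expansion strategy and instead use the crude inequality $\eta r_t y_t^2 \ge 2 r_t \ge 3/2$, valid because $r_t \to 1$ near the origin so $r_t \ge 3/4$ once $|s_t|^\beta \le 1/(4C)$, which holds for $\delta_t$ small. This gives $|\eta r_t y_t^2 - 1| \ge 1/2$, hence $|s_{t+1}|^\beta \ge (1/3)^\beta |s_t|^\beta \ge \delta_t / (3^{\beta+1} C)$, which is $\ge c^\beta \delta_t$ after choosing $c$ additionally small so that $c^\beta \le 1/(3^{\beta+1} C)$.

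The main obstacle is the loose regime: because $c_t$ is bounded away from $c^\beta$ there, the tight-case expansion argument is unavailable, and the ratio $|s_{t+1}|/|s_t|$ is only controlled by a constant factor of order one rather than $1 + \Theta(\delta_t)$. Closing the induction in this regime forces reliance on the a priori upper bound from Proposition \ref{prop:quasi_static_upper} to keep $c_t$ bounded, which is precisely why the same threshold $\delta_t \gtrsim \eta^{\beta/(\beta-1)}$ appears in both propositions and why the two results are genuinely coupled rather than independent.
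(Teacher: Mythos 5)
Your proof is correct and follows essentially the same route as the paper's: an induction on the ratio $c_t = \abs{s_t}^\beta/\delta_t$, using~\eqref{eq:A3} to show expansion by a factor $1+\Omega(\delta_t)$ when $c_t$ is near the lower envelope, and using the a priori upper bound from~\autoref{prop:quasi_static_upper} (together with $\delta_{t+1}\le\delta_t$ and the $1-O(\eta^2\delta_t^{2/\beta})$ error from the $y$-update, absorbed via $\delta_t\gtrsim\eta^{\beta/(\beta-1)}$) to close the remaining cases. The only cosmetic difference is that you split the paper's single ``$c_t$ large'' case into a middle and a loose regime; the paper handles both at once by noting $r_t\ge 1-O(\delta_t)$ there, so the one-step contraction is at most $1-O(\delta_t)$, but your cruder constant-factor bound in the loose regime also suffices.
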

\begin{proof}
The proof mirrors that of~\autoref{prop:quasi_static_upper}. Let $\delta_t \gtrsim \eta^{\beta/(\beta-1)}$ for a sufficiently large implied constant.
We prove the following two statements:
\begin{enumerate}
\item If $\abs{s_t} = c_t\, \delta_t^{1/\beta}$ for some $c_t < 2c$, then $\abs{s_{t+1}} \ge c_{t+1}\, \delta_{t+1}^{1/\beta}$ for some $c_{t+1} \ge c_t$.
\item If $\abs{s_t} = c_t\, \delta_t^{1/\beta}$ for some $c_t \ge 2c$, then $\abs{s_{t+1}} \ge c\, \delta_{t+1}^{1/\beta}$.
\end{enumerate}
Throughout the proof, due to~\autoref{prop:quasi_static_upper}, we also have $\abs{s_t} \lesssim \delta_t^{1/\beta}$.

\underline{\emphh{Proof of 1.}}
The dynamics for $x$ give
\begin{align*}
\abs{x_{t+1}}
&= \abs{1-\eta y_t^2 r_t} \, \abs{x_t}\,.
\end{align*}
By Assumption~\eqref{eq:A3},
\begin{align*}
r_t
&\ge 1 - O(\abs{s_t}^\beta)
\ge 1-O(c \, \delta_t)\,.
\end{align*}
If $c$ is sufficiently small, then
\begin{align*}
\eta y_t^2 r_t
&\ge (2+\delta_t) \, \bigl(1-O(c\, \delta_t)\bigr)
\ge 2 + \Omega(\delta_t)\,.
\end{align*}
Therefore, we obtain
\begin{align*}
\abs{x_{t+1}}
&\ge \bigl(1 + \Omega(\delta_t)\bigr) \, \abs{x_t}\,.
\end{align*}
On the other hand,
\begin{align}\label{eq:quasi_static_lower_y}
y_{t+1}
&\ge (1-\eta x_t^2) \, y_t
\ge \bigl(1- O(\eta^2 s_t^2)\bigr) \, y_t
\ge \bigl(1 - O(\eta^2 \delta_t^{2/\beta})\bigr)\, y_t
\end{align}
and hence
\begin{align*}
\abs{s_{t+1}}
\ge \bigl(1+\Omega(\delta_t)\bigr) \, \bigl(1 - O(\eta^2 \delta_t^{2/\beta})\bigr) \, \abs{s_t}
&\ge c_t \,\bigl(1 + \Omega(\delta_t) - O(\eta^2 \delta_t^{2/\beta})\bigr) \, \delta_t^{1/\beta} \\
&\ge c_t \,\bigl(1 + \Omega(\delta_t) - O(\eta^2 \delta_t^{2/\beta})\bigr) \, \delta_{t+1}^{1/\beta}\,.
\end{align*}
To conclude, we must prove that $\eta^2 \delta_t^{2/\beta} \lesssim \delta_t$, but since $\delta_t \gtrsim \eta^{\beta/(\beta-1)}$ (with sufficiently large implied constant), then this holds, as was checked in the proof of~\autoref{prop:quasi_static_upper}.

\underline{\emphh{Proof of 2.}}
Using Assumption~\eqref{eq:A3},
\begin{align*}
1-O(\delta_t)
&\le 1-O(\abs{s_t}^\beta)
\le r_t
\le 1\,.
\end{align*}
Therefore,
\begin{align*}
2 - O(\delta_t)
\le (2+\delta_t) \,\bigl(1-O(\delta_t)\bigr)
\le \eta y_t^2 r_t
\le 2+\delta_t
\end{align*}
and
\begin{align*}
-1 + O(\delta_t)
\ge 1-\eta y_t^2 r_t
\ge -1 - \delta_t\,.
\end{align*}
Together with the dynamics for $x$ and~\eqref{eq:quasi_static_lower_y},
\begin{align*}
\abs{s_{t+1}}
&\ge \bigl(1 - O(\delta_t)\bigr) \, \bigl(1 - O(\eta^2 \delta_t^{2/\beta})\bigr) \, \abs{s_t}
\ge c_t\, \bigl(1 - O(\delta_t)\bigr) \, \bigl(1 - O(\eta^2 \delta_t^{2/\beta})\bigr) \, \delta_{t+1}^{1/\beta}\,.
\end{align*}
Since $c_t \ge 2c$, if $\delta_t$ and $\eta$ are sufficiently small it implies $\abs{s_{t+1}} \ge c \, \delta_{t+1}^{1/\beta}$.
\end{proof}

\emphh{Convergence rate estimates.} Our analysis also provides estimates for the convergence rate of GD in both regimes. Namely, in the gradient flow regime, we show that GD converges in $O(1/\eta)$ iterations, whereas in the EoS regime, GD typically spends $\Omega(1/\eta^{(\beta/(\beta-1)) \vee 2})$ iterations ($\Omega(\log(1/\eta)/\eta^2)$ iterations when $\beta = 2$) in the bouncing phase (\autoref{fig:gap_time}). Hence, the existence of the bouncing phase dramatically slows down the convergence of GD\@.

\begin{remark}
Suppose that at iteration $t_0$, we have $\delta_{t_0} \asymp 1$.
Then, the assumption of~\autoref{prop:quasi_static_lower} is that $\abs{s_{t_0}} \gtrsim 1$.
If this is not satisfied, i.e., $\abs{s_{t_0}} \ll 1$, then the first claim in the proof of~\autoref{prop:quasi_static_lower} shows that $\abs{s_{t_0+1}} \ge (1+\Omega(\delta_t)) \, \abs{s_{t_0}} = (1+\Omega(1)) \, \abs{s_{t_0}}$.
Therefore, after $t' = O(\log(1/\abs{s_{t_0}}))$ iterations, we obtain $\abs{s_{t_0+t'}} \gtrsim 1$ and then~\autoref{prop:quasi_static_lower} applies thereafter.
\end{remark}

\medskip{}

\begin{remark}
From the quasi-static analysis, we can also derive bounds on the length of the bouncing phase.
Namely, suppose that $t_0$ is such that $\delta_{t_0} \asymp 1$ and for all $t \ge t_0$, we have $\abs{s_t} = \delta_t^{1/\beta}$.
If $\delta_{t_0}$ is sufficiently small so that $r_t \gtrsim 1$ for all $t\ge t_0$, then the equation for $y$ yields
\begin{align*}
\delta_{t+1}
&\le \delta_t - \Theta(\eta^2 s_t^2)
= \delta_t - \Theta(\eta^2 \delta_t^{2/\beta})\,.
\end{align*}
We declare the $k$-th phase to consist of iterations $t$ such that $2^{-k} \le \delta_t \le 2^{-(k-1)}$.
During this phase, $\delta_{t+1} \le \delta_t - \Theta(\eta^2\, 2^{-2k/\beta})$, so the number of iterations in phase $k$ is $\asymp 2^{k\,(2/\beta-1)}/\eta^2$.
We sum over the phases until $\delta_t \asymp \eta^{\beta/(\beta -1)}$, since after this point the quasi-static analysis fails and $y_t^2$ crosses over $2/\eta$ shortly afterwards.
This yields
\begin{align*}
\frac{1}{\eta^2} \sum_{\substack{k\in\Z \\ \eta^{\beta/(\beta-1)} \lesssim 2^{-k} \lesssim 1}} 2^{k\,(2/\beta-1)}
\asymp \begin{cases} 1/\eta^2\,, & \beta > 2\,, \\ \log(1/\eta)/\eta^2\,, & \beta = 2\,, \\ 1/\eta^{\beta/(\beta-1)}\,, & \beta < 2\,. \end{cases}
\end{align*}
The time spent in the bouncing phase increases dramatically as $\beta\searrow 1$.
\end{remark}

%

\section{Deferred derivations of mean model}\label{app:deriv_mean}

\begin{figure}[H]
 
\centering \includegraphics[width=0.6\textwidth]{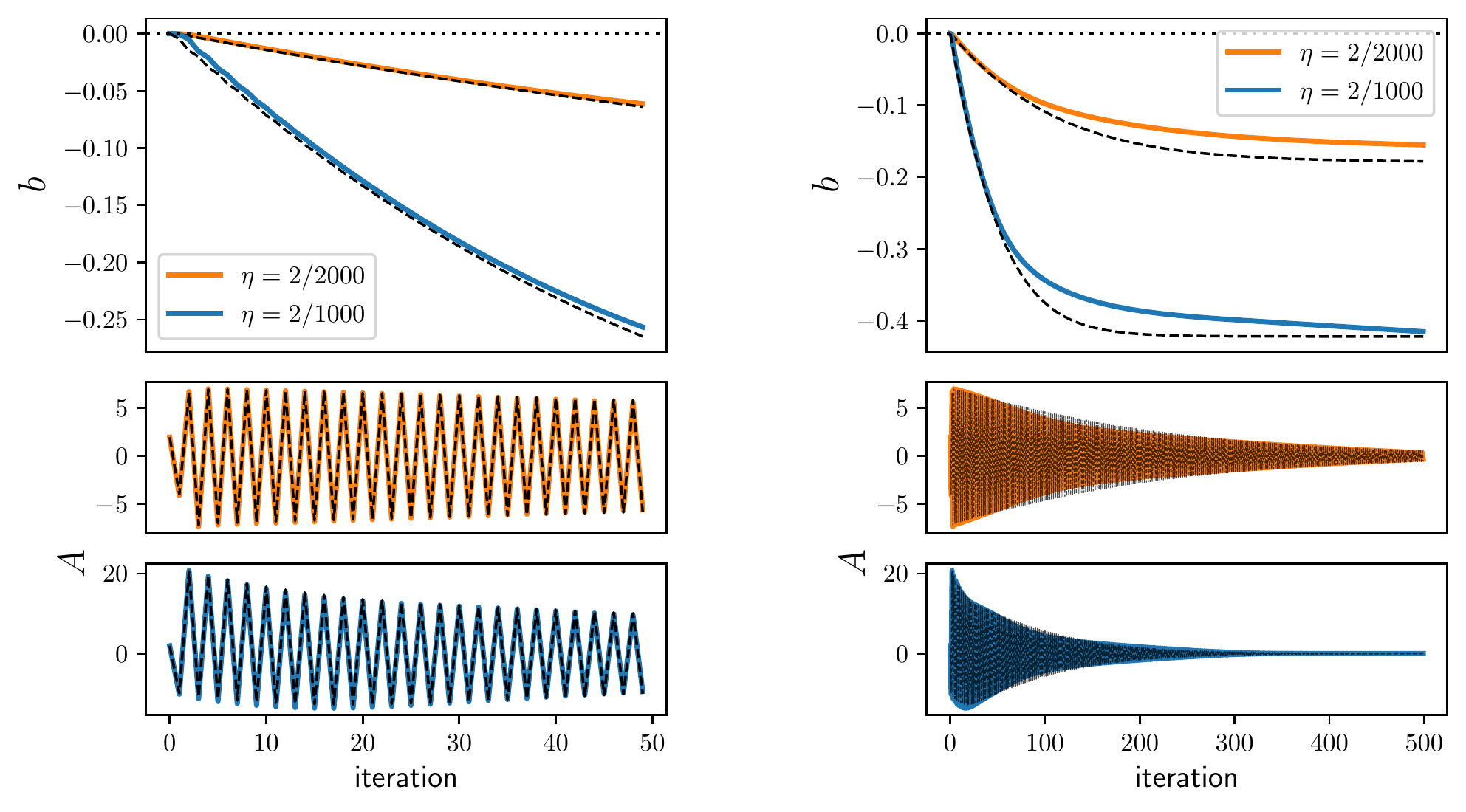}
\caption{\footnotesize{Under the same setting as \autoref{fig:motivation_phase},  we compare the mean model with the GD dynamics of the ReLU network. The mean model is plotted with black dashed line.  Note that \emphh{the mean model tracks the GD dynamics quite well during the initial phase of training.}}}
\label{fig:compare_mean_model}
 
\end{figure}

In this section, we provide the details for the derivations of the mean model in \autoref{sec:justification}.
Recall   
$$
f(\bx; a^-,a^+,b)
= a^-\sum_{i=1}^d \ReLU\bigl( -\bx[i]  +  b\bigr) + a^+\sum_{i=1}^d \ReLU\bigl( +\bx[i]  +  b\bigr)\,,$$ 
where $\bx = \lambda y\be_{j} + \bxi$. 
We first approximate 
$$\sum_{i=1}^d \ReLU\bigl( \pm \bx[i]  +  b\bigr) \approx \sum_{i=1}^d \ReLU\bigl( \pm \bxi[i]  +  b\bigr)\,. $$
In other words, we can ignore the contribution of the signal $\lambda y\be_{j}$.
This approximation holds because \emph{(i)} initially, the bias $b$ is not yet negative enough to threshold out the noise, and hence the summation  $\sum_{i=1}^d\ReLU(\pm \bxi[i]  +  b)$ is of size $O(d)$, and \emph{(ii)} the difference between the left- and right-hand sides above is simply $\ReLU(\pm \lambda y \pm \bxi[j] + b) - \ReLU(\pm\bxi[j] + b)$, which is of size $O(1)$ and hence negligible compared to the full summation.

Next, letting $g(b) \deq \E_{z\sim \NN(0,1)}\ReLU(z+b)$ be the `smoothed' ReLU (see~\autoref{fig:g}), concentration of measure implies the following two facts: 
\begin{itemize}
\item $\sum_{i=1}^d \ReLU\bigl( \pm \bxi[i]  +  b\bigr)
\,\approx\, d \E_{\xi \sim \NN(0,1)}\ReLU(\xi+b) 
\,\eqqcolon\, d\,g(b)$ and 
\item $\sum_{i=1}^d  \ind{\pm \bx[i]  +  b \geq 0 } \approx d \E_{\xi \sim \NN(0,1)}\ind{\xi+b\geq 0 } = d \, g' (b)$.
\end{itemize}    
Indeed, the summations above are  sums of $d$ i.i.d.\ non-negative random variables, and hence its mean is $\Omega(d)$ (as long as $b \ge -O(1)$) and its standard deviation is $O(\sqrt d)$. 
Now, using these approximations, one can rewrite the GD dynamics on the population loss $\E[\logisticloss(y f(\bx; a^-, a^+, b))]$.

Using these approximations, the output of the ReLU network~\eqref{exp:relu_network}  can be written as   $$
f(\bx; a^-, a^+,b) \approx d\,(a^-+a^+)\, g(b) \,,$$ which in turn leads to an approximation of the GD dynamics on the population loss $(a^-, a^+, b) \mapsto \E[\logisticloss(y f(\bx; a^-, a^+, b))]$:
\begin{align}
a^\pm_{t+1} &=   a^\pm_{t}  - \eta \E\Bigl[ \logisticloss'\bigl(y \underbrace{f(\bx; a_t^-,a_t^+,b_t)}_{\approx\, d \, (a_t^- + a_t^+) \, g(b_t)}\bigr)  \times \underbrace{\sum_{i=1}^d \ReLU\bigl( \pm\bx[i]  +  b_t\bigr)}_{\approx\, d \, g(b_t)} \Bigr]\\
&\approx  a^\pm_{t} -\eta\, \lsym'\bigl( d\,(a^-_t+a^+_t)\,g(b_t)\bigr)  \,d\, g(b_t)\,, \\
b_{t+1} &=   b_{t}  - \eta \E\Bigl[ \logisticloss'(y \underbrace{f(\bx; a_t^-, a_t^+,b_t)}_{\approx\, d\,(a_t^-+a_t^+)\,g(b_t)}\bigr)  \times \Bigl({a^-_t\underbrace{\sum_{i=1}^d \ind{-\bx[i] + b_t\geq 0}}_{\approx\,d\,g'(b_t)}} + {a^+_t\underbrace{\sum_{i=1}^d \ind{+\bx[i]  +  b_t\geq 0}}_{\approx\,d\,g'(b_t)}} \Bigr) \Bigr]\\
&\approx b_t  - \eta\, \lsym'\bigl( d\,(a^-_t+a^+_t)\,g(b_t)\bigr)\,  d\,(a^-_t+a^+_t)\, g'(b_t)\,,
\end{align} 
where $\lsym(s) \deq \frac{1}{2}(\log(1+\exp(-s))+ \log(1+\exp(+s)))$ is the symmetrized logistic loss.
Hence we arrive at the following  dynamics on $a^\pm$ and $b$ that we call the mean model: 
\begin{align} \label{exp:mean_model_gd}
\begin{split} 
a^\pm_{t+1} &=    a^\pm_{t} -\eta\, \lsym'\bigl( d\,(a^-_t+a^+_t)\,g(b_t)\bigr) \, d\, g(b_t)\,,\\
b_{t+1} &=    b_t  - \eta\, \lsym'\bigl( d\,(a^-_t+a^+_t)\,g(b_t)\big)\,  d\,(a^+_t+a^-_t)\, g'(b_t)\,.
\end{split}
\end{align} 
Now, we can write the above dynamics more compactly in terms of the parameter $A_t \deq d\,(a^-_t+a^+_t)$.
\begin{align}  
\begin{split} 
A_{t+1} &= A_t   - 2d^2\eta \,  \lsym'( A_t g(b_t))  \, g(b_t)\,,\\
b_{t+1} &= b_t  - \eta \, \lsym'(A_t g(b_t) ) \, A_t g'(b_t)\,.
\end{split}
\end{align}

\section{Proofs for the mean model}\label{scn:mean_model_pfs}

In this section, we prove the main theorems for the mean model.
We first recall the mean model for the reader's convenience.
\begin{align}  
\begin{split} 
A_{t+1} &= A_t   - 2d^2\eta \,  \ell'( A_t g(b_t))  \, g(b_t)\,,\\
b_{t+1} &= b_t  - \eta \, \ell'(A_t g(b_t) ) \, A_t g'(b_t)\,.
\end{split}
\end{align}

\subsection{Deferred proofs}\label{sec:mean_model_lemmas}

In this section, we collect together deferred proofs from \autoref{scn:two_regimes_mean_model}.

\begin{proof}[Proof of~\autoref{lem:smoothed_relu}]
By definition, $g(b) = \int_{-b}^\infty (\xi+b) \, \pdf(\xi) \, \D \xi = \int_{-b}^\infty \xi \, \pdf(\xi) \, \D \xi + b\,\cdf(b)$.
Recalling $\pdf'(\xi) = -\xi \, \pdf(\xi)$, the first term equals $\pdf(b)$. Moreover, $g'(b)= -b\,\pdf(b)  +\cdf(b) + b\,\pdf(b) = \cdf(b)$.  
\end{proof}

\begin{proof}[Proof of~\autoref{lemma:conserverd_mean_model}]
Note that $\partial_t( \frac{1}{2}\, A^2) = A \dot A = -2d^2\, \ell'(Ag(b)) \, Ag(b)$ and also that $\partial_t \kappa(b)= -\ell'(Ag(b)) \, \kappa'(b) \, Ag'(b) =   -\ell'(Ag(b))  \, Ag(b)$ 
since $\kappa' = g/g'$.
Hence, $\partial_t \bigl( \frac{1}{2} A^2 - 2d^2\kappa(b)\bigr) = 0$ and the proof is completed.
\end{proof}

\subsection{Gradient flow regime}\label{sec:mean_small}

\begin{proof}[{Proof of \autoref{thm:GF_MM}}]
The following proof is analogous to the proof of~\autoref{thm:GF}.
We first list several facts we use in  the proof: 
\begin{list}{}{\leftmargin=1.5em}
\setlength{\itemsep}{-1pt}
\item[\emph{(i)}] $|g'(b)| =|\cdf(b)| \leq 1$ for all $b\in\R$.
\item[\emph{(ii)}] $\ell'(s) = \frac{1}{2}\,\frac{\exp(s)-1}{\exp(s)+1}$. Hence, 
$|\ell'(s)| \leq \frac{1}{2}$ for all $s\in\R$, and   we have 
\begin{align*}
\frac{\ell'(s)}{s}
\ge \frac{1}{8} \times \begin{cases}
1\,, & \text{if}~\abs s \le 2\,, \\
2/\abs s\,, & \text{if}~\abs s > 2\,.
\end{cases}
\end{align*}

\item[\emph{(iii)}] $\ell''(0) =1/4$.
\item[\emph{(iv)}] $\ell'''(s) = -\frac{\exp(s)\, (\exp(s)-1)}{{ (\exp(s)+1)}^3}$. Hence,  $\ell'''(s)<0$ for $s>0$ and $\ell'''(s)>0$ for $s<0$. 
In particular, $\abs{\ell'(s)} \le \frac{1}{4} \,\abs s$ for all $s\in\R$.
\end{list}

Throughout the proof, we assume that $A_0>0$ without loss of generality.
We prove by induction the following claim:  for $t\ge 0$ and \begin{align}
\gamma \coloneqq \frac{1}{200} \min \Bigl\{\delta  ,\, 8-\delta , \, \frac{8-\delta}{A_0}  \Bigr\}\,,
\end{align} 
it holds that $\abs{A_t} \le A_0 \exp(- \gamma t)$.
This clearly holds at initialization. 

Suppose that the claim holds up to iteration $t$.
Using the bounds on $|g'|$ and $|\ell'|$,  it follows that  
\begin{align*}
b_{t+1}
&\ge b_t -  \abs{\ell'(A_t g(b_t))} \, \abs{A_t} \, g'(b_t) 
\ge b_t - \frac{1}{2}\,\eta \, \abs{A_t} \\
&\ge b_t - \frac{1}{2}\,\eta A_0 \exp(-\gamma  t) \ge \cdots  \ge 
b_0 - \frac{1}{2}\,\eta A_0 \sum_{s=0}^t \exp(-\gamma s)
\ge - \frac{\eta A_0}{\gamma}\,.
\end{align*}
In particular, $b_t\ge -1$ and $g(b_t) > 0.08$, since $\eta \leq \frac{\gamma}{A_0}$.
Also, the bound shows that if the claim holds for all $t$, then we obtain the desired conclusion.

It remains to establish the inductive claim; assume that it holds up to iteration $t$.
For the dynamics of $A$, by symmetry we may suppose that $A_t>0$.
From $\ell'(A_t g(b_t)) \le A_t g(b_t)/4$ and $g(b_t) \le g(0) = \frac{1}{\sqrt{2\pi}}$, it follows that
\begin{align*}
A_{t+1} = A_t   - 2\eta d^2 \,  \ell'( A_t g(b_t))  \, g(b_t)
&\ge \Bigl(1 - \frac{\eta d^2}{2} \, {g(b_t)}^2\Bigr) \, A_t \\
&\geq \Bigl(1 - \frac{\eta d^2}{2} \,{g(0)}^2\Bigr) \, A_t = -\Bigl(1 - \frac{\delta}{4} \Bigr) \, A_t\,.
\end{align*}
This shows that $A_{t+1} \geq -(1-\gamma)\,A_t$.
Next, we show that $A_{t+1}\leq (1-\gamma)\, A_t $. First, if $A_tg(b_t)\leq 2$, 
\begin{align*}
A_{t+1} &= A_t   - 2\eta d^2 \,  \ell'( A_t g(b_t))  \, g(b_t) \leq  A_t   - \frac{1}{4}\,\eta d^2 \,  A_t\, {g(b_t)}^2 \\
&= \Bigl(1   - \frac{(8-\delta)\,\pi}{4}  \,    {g(b_t)}^2\Bigr)\,A_t
\le \Bigl(1-\frac{(8-\delta)}{4}\,\pi\cdot 0.08^2\Bigr)\,A_t \leq (1-\gamma)\, A_t\,, 
\end{align*}
since we have $g(b_t) > 0.08$. Next, if $A_tg(b_t)\geq 2$, then 
\begin{align*}
A_{t+1} &= A_t   - 2\eta d^2 \,  \ell'( A_t g(b_t))  \, g(b_t) \le A_t -  \frac{1}{2}\,\eta d^2 \,  g(b_t)
= \Bigl(1-  \frac{(8-\delta)\,\pi}{2} \,  \frac{g(b_t)}{A_t} \Bigr)\, A_t\\
&\le  \Bigl(1-  \frac{(8-\delta)\,\pi}{2} \cdot  \frac{0.08}{A_0}\Bigr)\,  A_t \le  (1-\gamma)\, A_t\,.
\end{align*}
This shows that $\abs{A_{t+1}} \le  (1-\gamma)\,\abs{A_t}$ for the case $A_t>0$.
A similar conclusion is obtained for the case $A_t<0$.
The induction is complete.
\end{proof}

\subsection{EoS regime}
\label{sec:mean_large}

\begin{proof}[{Proof of \autoref{thm:EOS_MM}.}]
The following proof is analogous to the proof of~\autoref{lemma:crossing}.
Assume throughout that $A_t\ne 0$ for all $t$.
Recall the dynamics for $b$:
\begin{align*}
b_{t+1} &= b_t  - \eta \, \ell'(A_t g(b_t) ) \, A_t g'(b_t)\,.
\end{align*}
Since $\ell'(s)/s \to 1/4$ as $s\to 0$, and $\ell'$ is increasing, this equation implies that if $\liminf_{t\to\infty}{\abs{A_t}} > 0$ then $b_t$ must keep decreasing until $\frac{1}{2} \,d^2 {g(b_t)}^2 < 2/\eta$.

Suppose for the sake of contradiction that there exists $\varepsilon > 0$ with   $\frac{1}{2}\,d^2 g(b_t)^2>(2+\varepsilon)/\eta$, for all $t$. Let $\varepsilon'>0$ be such that $1-(2+\varepsilon)\,(1-\varepsilon')<-1$, i.e., $\varepsilon'<\frac{\varepsilon}{2+\varepsilon}$.
Then, there exists $\delta > 0$ such $\abs{A_t} \le \delta$ implies $\ell'(A_tg(b_t))/(A_t g(b_t)) > \frac{1}{4}\,(1-\varepsilon')$, hence
\begin{align*}
\frac{|A_{t+1}|}{|A_t|} = \Bigl\lvert1-4 \cdot\frac{1}{4}\,(1-\eps')\cdot  \frac{1}{2}\, \eta d^2\, {g(b_t)}^2\Bigr\rvert > |(2+\varepsilon)\,(1-\varepsilon')-1| > 1\,.
\end{align*}
The above means that $\abs{A_t}$ increases until it exceeds $\delta$, i.e., $\liminf_{t\to\infty}{\abs{A_t}} \ge \delta$. This is our desired contradiction and it implies that $\lim_{t\to\infty} \frac{1}{2}\,d^2 g(b_t)^2 \le 2/\eta$.
\end{proof}

\begin{remark}
A straightforward calculation yields that when $(a_\star^-,a_\star^+, b_\star)$ is a global minimizer (i.e., $a_\star^- + a_\star^+ = 0$), then $\lambda_{\max}\bigl(\nabla^2 f(a_\star^-,a_\star^+, b_\star) \bigr) = \frac{1}{2}\,d^2\,{g(b_\star)}^2$. 
The mean model initialized at $(A_0, 0)$ approximately reaches $(0, 0)$ whose sharpness is $d^2 \, {g(0)}^2/2 = d^2/4\pi$. 
Hence, the bias learning regime $2/\eta <  d^2/ (4\pi)$ precisely corresponds to the EoS regime, $    2/\eta <   \lambda_{\max}\bigl(\nabla^2 f(a_\star^-,a_\star^+, b_\star) \bigr)$.  
\end{remark}

\end{document}